\ifdefined \TOG
\documentclass{acmtog}
\acmVolume{V}
\acmNumber{N}
\acmYear{20YY}
\acmMonth{M}
\acmArticleNum{A}  
\usepackage{amsmath, amssymb}
\usepackage[percents]{overpic}
\newcommand{\eqmark}{&&}
\newcommand{\eqcr}{\cr}
\else
\documentclass[12pt]{article}
\usepackage[utf8x]{inputenc}
\usepackage{overpic}
\usepackage{graphicx}
\usepackage{amsmath, amssymb, amsthm, eqnarray}
\usepackage{dsfont}
\usepackage{setspace}
\usepackage{csquotes}
\usepackage[pdftex]{hyperref}
\usepackage[english]{babel}
\usepackage{hebfont}
\usepackage[toc,page]{appendix}
\usepackage{float}
\usepackage{enumitem}
\usepackage{bookmark}

\usepackage{anysize}

\newtheorem{theorem}{Theorem}

\newtheorem{proposition}{Proposition}
\newtheorem{lemma}{Lemma}

\newcommand{\eqmark}{}
\newcommand{\eqcr}{}

\fi
\usepackage{color}
\usepackage{algorithm}
\usepackage{algpseudocode}
\usepackage{caption}
\usepackage{mathtools}

\usepackage{xspace}
\makeatletter

\DeclareRobustCommand\onedot{\futurelet\@let@token\@onedot}
\def\@onedot{\ifx\@let@token.\else.\null\fi\xspace}
\def\ie{{i.e}\onedot}
\def\eg{{e.g}\onedot}
\def\etal{{et al}\onedot}
\DeclareMathOperator*{\argmin}{argmin}

\DeclareMathOperator{\eqs}{\:\:\,}
\newcommand{\norm}[1]{\left\lVert#1\right\rVert}
\newcommand{\abs}[1]{\lvert#1\rvert}

\begin{document}

\title{
 On Nonrigid Shape Similarity and Correspondence  
 }

\ifdefined \TOG
\markboth{A. Shtern and R. Kimmel}{
   On Non-Rigid Shape Similarity and Correspondence  
  }
\author{Alon Shtern {\upshape and} Ron Kimmel
\affil{Technion - Israel Institute of Technology}}

\category{I.3.5}{Computer Graphics}{Computational Geometry and Object Modeling}

\terms{Algorithms}

\keywords{Shape Matching, Laplace-Beltrami operator, Correspondence}
\else
\setcounter{secnumdepth}{5}
\author{Alon Shtern {\upshape and} Ron Kimmel}
\fi

\maketitle

\begin{figure}[th]
	\begin{center}
	\begin{overpic}[width=1.0\columnwidth]{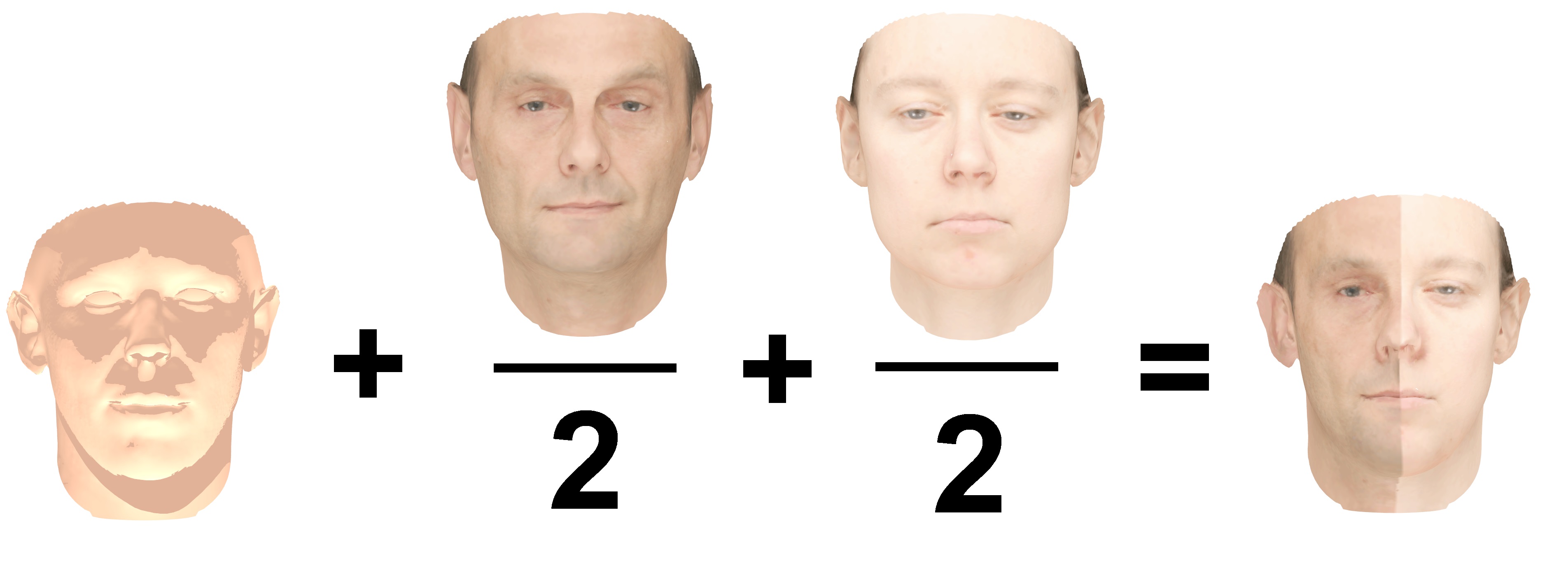}
	\end{overpic}
	\end{center}
	\caption{\small Use of correspondence for symmetry detection and texture transfer. The two intrinsically symmetric halves of a human face were found by mapping the shape (left) to itself. Textures from two faces (middle) were transferred to each half (right).}
	\label{fig:face_math}
\end{figure}

%%%%%%%%%%%%%%%%%%%%%%%%%%%%%%%%%%%%%%%%%%%%%%%%%%%%%%%%%%%%%%
%								Abstract												%
%%%%%%%%%%%%%%%%%%%%%%%%%%%%%%%%%%%%%%%%%%%%%%%%%%%%%%%%%%%%%%

\begin{abstract}
\label{sec:abstract}
An important operation in geometry processing is finding the correspondences 
 between pairs of shapes.
The Gromov-Hausdorff distance, a measure of dissimilarity between metric spaces, 
 has been found to  be highly useful for nonrigid shape comparison.
Here, we explore the applicability of related shape similarity measures to the problem of shape correspondence,
adopting spectral type distances.
 We propose to evaluate the {\em spectral kernel distance}, the {\em spectral embedding distance} and the novel {\em spectral quasi-conformal distance}, comparing the manifolds from different viewpoints. 
By matching the shapes in the spectral domain, important attributes of surface structure are being aligned.
For the purpose of testing our ideas, we introduce a fully automatic framework for finding intrinsic
 correspondence between two shapes.
The proposed method achieves state-of-the-art results on the Princeton isometric shape 
 matching protocol applied, as usual, to the TOSCA and SCAPE benchmarks.
\end{abstract}

%%%%%%%%%%%%%%%%%%%%%%%%%%%%%%%%%%%%%%%%%%%%%%%%%%%%%%%%%%%%%%
%								Introduction												%
%%%%%%%%%%%%%%%%%%%%%%%%%%%%%%%%%%%%%%%%%%%%%%%%%%%%%%%%%%%%%%

\section{Introduction}
\label{sec:introduction}

Correspondence detection between pairs of shapes lies at the heart of many operations in the field of
  geometry processing.
The problem of acquiring correspondence between rigid shapes has been widely addressed in the literature. 
As for non-rigid shapes, this problem remains difficult even when the space of deformations 
 is narrowed to nearly isometric surfaces, which approximately preserve the geodesic distances between 
 corresponding points on each shape.

A common approach for shape matching is to define a measure of dissimilarity between shapes modeled as 2-manifolds.
The well-established Gromov-Hausdorff distance 
 measures the maximum geodesic discrepancy between pairs of corresponding points of the two given shapes \cite{memoli2005theoretical,bronstein2006generalized}.
The point-wise map can be inferred to as a byproduct of the evaluation of the Gromov-Hausdorff distance.
This approach was embraced by the {\em Generalized Multi-Dimensional Scaling} (GMDS) framework 
 \cite{bronstein2006generalized}, and by its recently proposed
  {\em Spectral} GMDS implicit  alternative,  in which the distances matching problem is treated in the 
 dual intrinsic spectral domains of the given shapes \cite{aflalo2013sgmds}.
Within the Gromov-Hausdorff framework, Bronstein \etal \cite{bronstein2010gromov} suggested replacing the 
 geodesic distance by the diffusion distance \cite{coifman2005geometric}, exploiting the apparent stability of diffusion distances to local 
 changes in the topology of the shape.
Despite its generality and theoretical beauty, it has been a challenge to apply
 the Gromov-Hausdorff framework in a strait-forward manner to shape matching,
 mainly due to its intrinsically combinatorial nature.

B{\'e}rard \etal \cite{berard1994embedding} exhibited the spectral embedding of Riemannian manifolds 
 by their heat kernel. 
They embedded the manifolds into a compatible common Euclidean space
  and proved that the Hausdorff distance in that space is a metric between isometry classes of Riemannian manifolds, which means, in particular, that two manifolds are at zero distance
  if and only if they are isometric. 

Lipman \etal introduced the {\em conformal Wasserstein distance} between surfaces with restricted disk or sphere 
 topology \cite{lipman2011conformal,lipman2009mobius}, utilizing the M{\"o}bius transform.
The definition of the conformal Wasserstein distance is based on the well-known class of transportation
 distances \cite{mumford1991mathematical,rubner2000earth} 
 (a.k.a. Wasserstein-Kantorovich-Rubinstein or Earth Mover’s Distance).
Several authors have suggested to generalize the class of M{\"o}bius transforms, \eg using {\em least squares conformal maps} \cite{wang2007conformal}, bounding the {\em conformal factor} \cite{aflalo2013conformal}, or controlling the {\em conformal distortion} \cite{lipman2012bounded,zeng2011registration,ahlfors1966lectures}.
Zhu \etal suggested to apply post-process flattening, and impose area preserving constraints \cite{zhu2003area}.
However, it is known that in the general case, a mapping cannot be both angle-preserving and area-preserving.

Kasue and Kumura \cite{kasue1994spectral} extended the Gromov-Hausdorff distance framework to the class
 of spectral methods.
The {\em spectral kernel distance} was constructed by replacing the metric defined on the manifolds with the heat kernel.
Recently, M\'emoli \cite{memoli2009spectral} introduced the {\em spectral Gromov-Wasserstein distance}, applying the theory of mass transportation.
The spectral Gromov Wasserstein distance via the comparison of heat kernels satisfies all properties of a metric on the
class of isometric manifolds.

Here, we address the problem of shape correspondence in the context of shape similarity.
We find that comparing quasi-conformal quantities, provides a solid initial set of matched points.
Finer point-wise correspondence between shapes is detected by the matching of their respective spectral kernels.
Finally, accurate dense correspondence can be achieved by the simultaneous evaluation of the spectral kernel and the spectral embedding distances.

At a high level, the {\em spectral kernel} based approach substitutes the reality of geodesic distances on the surface
  with the notion of {\em spectral connectivity}.
Many options for spectral connectivity can be considered, depending on the underlying application.
One possibility is to use the heat kernel that provides a natural notion of scale, which is useful for multi-scale shape comparison.
Spectral distances, such as the diffusion distance \cite{bronstein2010gromov}, 
or the interpolated geodesic distances \cite{aflalo2013smds}, 
are alternative options.
In that perspective, the spectral kernel approach can be interpreted as a generalization of the spectral GMDS \cite{aflalo2013sgmds} framework.
For the task of point-wise correspondence and based on the data sets we analyzed, our empirical observations indicate 
 that the commute time kernel (a.k.a GPS kernel) \cite{qiu2007clustering,rustamov2007laplace}, 
 makes a prominent choice, as it well adapts to large isometric deformations exhibited in the data sets we analyzed.

Moreover, we try to bridge the gap between area-preserving and angle-preserving mappings.
It is well known that conformal mappings are similarities in the small in
the sense that over small regions, the image differs from the original
by a scale factor alone \cite{gu2004genus}. 
However, non-infinitesimal areas of the surface may be considerably enlarged or reduced.
Here, we try to account for these seemingly conflicting constraints, endorsing a spectral embedding design, that includes a natural balance between area-preserving and conformal requirements. 

%%%%%%%%%%%%%%%%%%%%%%%%%%%%%%%%%%%%%%%%%%%%%%%%%%%%%%%%%%%%%%
%								Contribution												%
%%%%%%%%%%%%%%%%%%%%%%%%%%%%%%%%%%%%%%%%%%%%%%%%%%%%%%%%%%%%%%
\subsection {Contribution}
Our main contribution is a novel formulation and practical evaluation of shape similarity measures.
We contemplate different objective functions, based on the following types of spectral distances.
\begin{itemize}
\item The spectral kernel distance.
\item The spectral embedding distance.
\item The spectral quasi-conformal distance.
\end{itemize}
We introduce three optimization procedures for evaluating shape similarity.
\begin{itemize}
	\item First, {\em Spectral Quasi-Conformal Maps} that uses matched eigenfunctions for the estimation of the spectral quasi-conformal distance.
	\item Second, the {\em Spectral Kernel Maps} that evaluates the spectral kernel distance.
	\item Third, the {\em functional Spectral Kernel Maps} that uses functional representation for the evaluation of the spectral embedding distance and the spectral kernel distance.
\end{itemize}
Furthermore, we present a fully automatic framework for correspondence detection, that is robust to isometric deformations and achieves state-of-the-art performance.
We emphasize  that the spectral embedding metric and the spectral kernel distance express different
 attributes of shape structure.  
At one end, spectral embedding presents excellent point identification qualities. 
At the other end, comparing spectral kernels reflects the geometric connectivity of points on the surfaces.
The functional spectral kernel maps algorithm alternates between kernel distance evaluation and spectral embedding, 
 thus,  combines the merits of both. 

%%%%%%%%%%%%%%%%%%%%%%%%%%%%%%%%%%%%%%%%%%%%%%%%%%%%%%%%%%%%%%
%								Related Work													%
%%%%%%%%%%%%%%%%%%%%%%%%%%%%%%%%%%%%%%%%%%%%%%%%%%%%%%%%%%%%%%
\subsection {Related work}

%%%%%%%%%%%%%%%%%%%%%%%%%%%%%%%%%%%%%%%%%%%%%%%%%%%%%%%%%%%%%%
%								Spectral kernels										%
%%%%%%%%%%%%%%%%%%%%%%%%%%%%%%%%%%%%%%%%%%%%%%%%%%%%%%%%%%%%%%
\subsection{Spectral kernels}

%%%%%%%%%%%%%%%%%%%%%%%%%%%%%%%%%%%%%%%%%%%%%%%%%%%%%%%%%%%%%%
%								Laplace-Beltrami Eigendecomposition								%
%%%%%%%%%%%%%%%%%%%%%%%%%%%%%%%%%%%%%%%%%%%%%%%%%%%%%%%%%%%%%%

\subsubsection{Laplace-Beltrami eigendecomposition and the heat kernel}
\label{sec:lbo}
Let $X$ be a compact two-dimensional Riemannian manifold. 
The divergence of the gradient of a function $f$ over the manifold,
\begin{eqnarray*}
	\Delta f &\equiv& \mathrm{div}\, \mathrm{grad} f \nonumber
\end{eqnarray*}
is called the \emph{Laplace-Beltrami operator} (LBO) of $f$ and can be considered as a generalization of the standard notion of the Laplace operator to compact Riemannian manifolds \cite{beltrami1864ricerche,taubin1995signal,levy2006laplace}.
The Laplace-Beltrami  eigenvalue problem is given by
\begin{eqnarray*}
	-\Delta \phi_i &\equiv& \lambda_i \phi_i \text. \nonumber
\end{eqnarray*}
By definition, $- \Delta$ is a positive semidefinite operator.
Therefore, its eigendecomposition consists of non-negative eigenvalues $\lambda_i$.
We can order the eigenvalues as follows $0 = \lambda_0 < \lambda_1 < \cdots < \lambda_i < \cdots$ \quad .
The set of corresponding eigenfunctions given by $\{\phi_0, \phi_1, \cdots, \phi_i, \cdots\}$ forms an orthonormal basis, such that $\int_X \phi_i(x)\phi_j(x) d\text{V}_X = \delta_{ij}$, where $d\text{V}_X$ is the volume element on manifold $X$.

The solution of the heat equation $\cfrac{\partial u}{\partial t} = \Delta u$, with point heat source at 
 $x \in X$ (heat value at point $x' \in X$ after time $t>0$) is the heat kernel $K_t(x, x')$.
The heat kernel can be represented in terms of the Laplace-Beltrami eigenbasis as \cite{milgram1951harmonic}
\begin{eqnarray*}
	K_t(x, x') &=& \sum\limits_i e^{-\lambda_i t}\phi_i(x)\phi_i(x'). \nonumber
\end{eqnarray*}
The {\em heat kernel embedding} $\Phi_t: x \mapsto \ell^2$ maps the shape into the infinite dimensional spectral space 
\begin{eqnarray*}
	\Phi_t(x) &\equiv& \Phi(x)e^{-\Lambda t/2},	\nonumber
\end{eqnarray*}
where
\begin{eqnarray*}
	\Phi(x)&\equiv&\{\phi_1(x),\phi_2(x),\dots,\phi_i(x),\dots\},
	\cr \cr \Lambda &\equiv&\mbox{diag}(\lambda_1,\lambda_2,\dots,\lambda_i,\dots). \nonumber
\end{eqnarray*}
Therefore, the heat kernel can be represented as the inner product of the heat kernel embedding
\begin{eqnarray*}
	K_t(x, x') &=&  \Phi_t(x)  \cdot \Phi_t(x')\text. \nonumber
\end{eqnarray*}

%%%%%%%%%%%%%%%%%%%%%%%%%%%%%%%%%%%%%%%%%%%%%%%%%%%%%%%%%%%%%%
%									The GPS kernel												%
%%%%%%%%%%%%%%%%%%%%%%%%%%%%%%%%%%%%%%%%%%%%%%%%%%%%%%%%%%%%%%

\subsubsection{The GPS kernel}
\label{sec:gps}
Similarly to embedding into the heat kernel, Rustamov \cite{rustamov2007laplace} proposed the 
 {\em Global Point Signature} (GPS) embedding into
\begin{eqnarray*}
	\Phi_{\text{GPS}}(x) \eqs \equiv \eqs \left \{ \cfrac{1}{\sqrt{\lambda_1}}\,\phi_1(x),\cfrac{1}{\sqrt{\lambda_2}}\,\phi_2(x),\dots \right \}
	 \eqs = \eqs \Phi(x)\Lambda^{-\frac{1}{2}}
	 \text, \nonumber
\end{eqnarray*}
with the respective GPS kernel
\begin{eqnarray*}
	K_{\text{GPS}}(x, x') \eqs \equiv \eqs \Phi_{\text{GPS}}(x)  \cdot \Phi_{\text{GPS}}(x')  \eqs = \eqs \sum\limits_i\cfrac{1}{\lambda_i}\phi_i(x)\phi_i(x') \text. \nonumber
\end{eqnarray*}
The choice of the GPS kernel  for embedding is motivated in a number of ways.
\begin{itemize}
\item Rustamov \cite{rustamov2007laplace} argued that the GPS kernel coincides with the
   Green's function,
   and that the Green's function, in some sense, measures the extent to which two points are
    geometrically connected. 
\item It is easy to see that $K_{\text{GPS}}(x, x') = \int_{t=0}^\infty K_t(x, x')dt$, \ie the GPS kernel is the integration through all scales of the heat kernel \cite{bronstein2011shape}.
\item The GPS kernel is equivalent to the commute time  kernel \cite{qiu2007clustering}. 
    The commute time distance  can be thought of as the average time of a random walk starting 
     at $x$, passing through $x'$ and return back to $x$.
\item If ${\bar X}$ is obtained by uniformly scaling $X$ by a factor $\alpha > 0$, the eigenvalues
of the Laplace-Beltrami operator are scaled according to $\bar{\lambda}_i = \alpha^{-2}\lambda_i$, and the corresponding ${L^2}$-normalized
eigenfunctions become $\bar{\phi}_i(x) = \alpha^{-1}\phi_i(x)$. Then, it is easy to see that the GPS kernel $\sum_i{\lambda_i}^{-1}\phi_i(x)\phi_i(x')$,
 is global scale invariant \cite{bronstein2011shape}.
\item The gradients of the GPS coordinates are normalized. By definition $\Delta \phi_i(x) = -\lambda_i \phi_i(x)$ and $\int_X \phi^2_i(x)d\text{V}_X=1$. Applying Green's first identity on Riemannian manifolds
\begin{eqnarray*}
	\int_X \norm{\nabla\cfrac{ \phi_i(x)}{\sqrt{\lambda^{}_i}}}^2d\text{V}_X \eqs =  \eqs -\int_X \cfrac{\phi_i(x)}{\sqrt{\lambda^{}_i}} \Delta \cfrac{\phi_i(x)}{\sqrt{\lambda^{}_i}}\,d\text{V}_X \eqs = \eqs 1. \nonumber
\end{eqnarray*}

\end{itemize}

%%%%%%%%%%%%%%%%%%%%%%%%%%%%%%%%%%%%%%%%%%%%%%%%%%%%%%%%%%%%%%
%							Functional Maps											%
%%%%%%%%%%%%%%%%%%%%%%%%%%%%%%%%%%%%%%%%%%%%%%%%%%%%%%%%%%%%%%
\subsubsection{Functional maps}
\label{sec:fun_map}
Ovsjanikov \etal \cite{ovsjanikov2012functional} presented the idea of functional representation.
Let $\varphi: X \mapsto Y$  be a bijective mapping between shapes $X$ and $Y$. 
If we are given a scalar function $f^X: X \mapsto \mathbb{R}$, then, we can 
 obtain a corresponding function $f^Y: Y \mapsto \mathbb{R}$ by the composition $f^Y=f^X \circ \varphi^{-1}$. 
Let us denote the induced transformation of generic space of real valued
  functions by $\varphi_F:\mathcal{F}(X,\mathbb{R}) \mapsto \mathcal{F}(Y,\mathbb{R})$, where we use $\mathcal{F}(\cdot,\mathbb{R})$
to denote a generic space of real-valued functions.
We call $\varphi_F$ the functional representation of the mapping $\varphi$.
Such a representation is linear, since for every pair of functions $f^X_1$,$f^X_2$ and scalars $\alpha_1$,$\alpha_2$, 
\begin{eqnarray*}
	\varphi_F(\alpha_1 f^X_1 + \alpha_2 f^X_2)  &=& (\alpha_1 f^X_1 + \alpha_2 f^X_2) \circ \varphi^{-1} 
	\cr &=& \alpha_1 f^X_1 \circ \varphi^{-1} + \alpha_2 f^X_2 \circ \varphi^{-1}  
	\cr &=& \alpha_1 \varphi_F(f^X_1) + \alpha_2 \varphi_F(f^X_2). \nonumber
\end{eqnarray*}
Next,
suppose that the function space of $X$ is equipped with a basis $\Phi^X = \{\phi^X_i\}$ so that any function
  $f^X : X \mapsto \mathbb{R}$ can be represented as a linear combination of basis functions $f^X = \sum_ia^X_i\phi_i^X$. 
If $Y$ is equipped with a basis $\Phi^Y = \{\phi^Y_i\}$, then $\varphi_F(\phi^X_i) = \sum_j ({C^X})_{ij}\phi^Y_j $ for some $(C^X)_{ij}$ and
\begin{eqnarray*}
	\varphi_F(f^X) \: = \: \varphi_F\bigg(\sum_i a^X_i\phi^X_i \bigg) \: = \: \sum_j \bigg(\sum_i a^X_i ({C^X})_{ij}\bigg) \phi^Y_j. \nonumber
\end{eqnarray*}
Equivalently, for $\varphi_F(f^X) = f^Y = \sum_ia^Y_i\phi_i^Y$, we have
\begin{eqnarray*}
	\varphi_F^{-1}(f^Y) \: = \: {\varphi_F}^{-1}\bigg(\sum_i a^Y_i\phi^Y_i \bigg) \: = \: \sum_j \bigg(\sum_i a^Y_i ({C^Y})_{ij}\bigg) \phi^X_j. \nonumber
\end{eqnarray*}
Therefore, we can represent $f^X$ as a row vector $\boldsymbol{a}^X$ with coefficients $a^X_i$, and equivalently, $f^Y$ as a row vector $\boldsymbol{a}^Y$ with coefficients $a^Y_i$. 
Then, we have ${\boldsymbol{a}^Y} = {\boldsymbol{a}^X}{C^X}$, and ${\boldsymbol{a}^X} = {\boldsymbol{a}^Y}{C^Y}$, where the matrices ${C^X}$ and ${C^Y}$ 
 are independent of $f^X$ and are completely determined by the bases $\Phi^X$ and $\Phi^Y $and the map $\varphi$. 
Here, we  restrict ourselves to use the first $N$ Laplace-Beltrami eigenfunctions as the 
 basis functions (see Section \ref{sec:lbo}) for the functional representations. 
This basis is well suited for representing near isometric shapes \cite{aflalo2013pca,ovsjanikov2012functional}.

Point-to-point correspondences assume that each point $x \in X$ corresponds to some point $y \in Y$ by the mapping $y = \varphi(x)$. 
In this case, the delta function $f^X(\tilde{x}) = \delta_{x}(\tilde{x})$ at point $x$ corresponds to the delta function $f^Y(\tilde{y}) = \delta_{y}(\tilde{y})$ at point $y$. We emphasize that $\tilde{x}$ and $\tilde{y}$ are the variables of the functions, while $x$ and $y$ are the respective parameters.
It is easy to show that for $\delta_{x}(\tilde{x})$ 
\begin{eqnarray*}
 	\boldsymbol{a}^X_x &=& \Phi^X(x) \eqs = \eqs (\phi^X_1(x), \phi^X_2(x), \dots,  \phi^X_i(x), \dots). \nonumber
\end{eqnarray*}
Equivalently, for $\delta_{y}(\tilde{y})$ 
\begin{eqnarray*}
	\boldsymbol{a}^Y_y &=& \Phi^Y(y) \eqs = \eqs (\phi^Y_1(y), \phi^Y_2(y), \dots, \phi^Y_i(y), \dots). \nonumber
\end{eqnarray*}
Then, we can construct the function preservation constraints $A^X = A^Y {C^Y}$, 
 where the matrices $A^X=\Phi^X$ and $A^Y=\Phi^Y$ are built by stacking the row vectors $\boldsymbol{a}^X_x$ 
 and $\boldsymbol{a}^Y_y$, respectively.
Thus, given some initial correspondence $y=\hat{\varphi}_{0}(x)$, a post-process iterative refinement
 algorithm can be obtained via \cite{ovsjanikov2012functional},

\begin{algorithm}[H]
\begin{algorithmic}
\item \caption{\textbf{:} \textsc{Post-process iterative refinement}}
\label{alg:ppir}
\For {$\ell=1$ to $L$}
\begin{enumerate}
	\item Construct the constraint matrices $A^X_\ell$, $A^Y_\ell$ by stacking $\boldsymbol{a}^X_x$, $\boldsymbol{a}^Y_y$, utilizing the correspondence provided by the previous iteration $y=\hat{\varphi}_{\ell-1}(x)$.
	\item \label{step:fm_const} Find the optimal $C^Y_\ell$ minimizing $\norm{A^X_\ell - A^Y_\ell C^Y_\ell}^2_F$.
	\item For each row $x$ of $\Phi^X$, find new correspondence $\hat{\varphi}_\ell(x)$ by searching for the closest row of $y$ in $\Phi^Y C^Y_\ell$.
\end{enumerate}
\EndFor
\end{algorithmic}
\end{algorithm}
The output of this algorithm is both a functional matrix ${C^Y}$ and point-wise correspondence.
This procedure is similar to the well known {\em Iterative Closest Point} (ICP) \cite{yang1992object,besl1992method} 
 in $N$ dimensions, except that it is done in the spectral domain, rather than the standard Euclidean
space. 

%%%%%%%%%%%%%%%%%%%%%%%%%%%%%%%%%%%%%%%%%%%%%%%%%%%%%%%%%%%%%%
%							Shape Similarity											%
%%%%%%%%%%%%%%%%%%%%%%%%%%%%%%%%%%%%%%%%%%%%%%%%%%%%%%%%%%%%%%
\section{Shape Similarity}

%%%%%%%%%%%%%%%%%%%%%%%%%%%%%%%%%%%%%%%%%%%%%%%%%%%%%%%%%%%%%%
%							Metric spaces and spectral distances											%
%%%%%%%%%%%%%%%%%%%%%%%%%%%%%%%%%%%%%%%%%%%%%%%%%%%%%%%%%%%%%%
\subsection{Metric spaces and spectral distances}
In this section we first review known metrics between shapes modeled as 2-Riemannian manifolds. We then introduce a novel distance between manifolds which we denote by {\em spectral quasi-conformal distance}.
\subsubsection{Metric space}
A metric space is a pair $(X, d_X)$ where the set $X$ is equipped with 
 a distance $d_X:X \times X \mapsto \mathbb{R}_{\ge 0}$, satisfying the properties
\begin{enumerate}
	\item[P1:] Symmetry - $d_X(x, x') = d_X(x',x)$.
\label{met:prop1}
	\item[P2:] Triangle-inequality - $d_X(x, x') \le d_X(x, x'') + d_X(x'', x')$.
\label{met:prop2}
	\item[P3:] Identity - $d_X(x, x')=0$ if and only if $x$ = $x'$.
\label{met:prop3}
\end{enumerate}

\subsubsection{Gromov-Hausdorff distance}
Let $X$ and $Y$ be two compact metric spaces which are subsets of a common metric space $(Z, d_Z)$, 
 and we want to compare $X$ to $Y$. 
A common approach is that of computing \mbox{the {\em Hausdorff distance}} between them \cite{hausdorff1914grundzuege},
\begin{eqnarray*}
	d_\mathcal{H}^Z(X, Y ) &\equiv& \max(\sup_{x \in X} d_Z(x, Y), \sup_{y \in Y} d_Z(y, X)). \nonumber
\end{eqnarray*}
Now, if we allow the shapes to undergo isometric transformations, comparison is possible using the {\em Gromov-Hausdorff distance} \cite{gromov1981structures},
\begin{eqnarray*}
	d_\mathcal{GH}(X, Y) &\equiv& \inf_{Z,f,g}d_\mathcal{H}^Z(f(X), g(Y)), \nonumber
\end{eqnarray*}
where $f:X \mapsto Z$ and $g:Y \mapsto Z$ are isometric embeddings (distance preserving) into the metric space $Z$.
The  Gromov-Hausdorff distance can equivalently be defined as \cite{burago2001course}
\begin{eqnarray*}
	d_\mathcal{GH}(X, Y) &\equiv& \inf_{\begin{smallmatrix} \varphi: X \mapsto Y \\ \psi: Y \mapsto X \end{smallmatrix}}\cfrac{1}{2}\max(\text{dis}(\varphi), \text{dis}(\psi), \text{dis}(\varphi, \psi)), \nonumber
\end{eqnarray*}
where
\begin{eqnarray*}
	\text{dis}(\varphi)&\equiv&\sup_{x,x' \in X}\abs{d_X(x,x')-d_Y(\varphi(x),\varphi(x'))},
	\cr \text{dis}(\psi)&\equiv&\sup_{y,y' \in Y}\abs{d_X(\psi(y),\psi(y'))-d_Y(y,y')}, 
	\cr \text{dis}(\varphi,\psi)&\equiv&\sup_{x \in X, y \in Y}\abs{d_X(x,\psi(y))-d_Y(\varphi(x),y)}. \nonumber
\end{eqnarray*}

\subsubsection{Spectral embedding distance}
\label{par:spec_emb}
From the point of view of spectral geometry, B{\'e}rard \etal \cite{berard1994embedding}  used the spectral properties of the heat operator $e^{t\Delta}$ to define a metric between two Riemannian manifolds $X,Y \in \mathcal{M}$. Here, $\mathcal{M}$ is the set of all closed (\ie compact without boundary) Riemannian manifolds of dimension $n$.

Given a Riemannian manifolds $X \in \mathcal{M}$ with volume $\text{Vol}(X)$ and some $t > 0$, they based their metric on the eigendecomposition of heat kernel $K_t(x,x')$ and defined the spectral embedding $I^{\Phi^X}_t(x) = \{\sqrt{\text{Vol}(X)}e^{-\lambda^X_i t/2}\phi^X_i(x)\}_i$ by utilizing the eigenfunctions $\phi^X_i$ and eigenvalues $\lambda^X_i$ of the  Laplace-Beltrami operator $\Delta^X$.

Given a pair of eigenbases $\Phi^X$, $\Phi^Y$, they embedded the two Riemannian manifolds into their respective heat kernels $I^{\Phi^X}_t$, $I^{\Phi^Y}_t$, and measured the Hausdorff distance $d_\mathcal{H}^{\text{EMB}}$ between the manifolds in the common Euclidean space. 
\begin{eqnarray*}
	d_{\mathcal H}^{\text{EMB}}(I^{\Phi^X}_t,I^{\Phi^Y}_t) &\equiv& \inf_{\begin{smallmatrix}\varphi: X \mapsto Y \\ \psi: Y \mapsto X \end{smallmatrix}}\max(\text{dis}_t^{\text{EMB}}(\varphi), \text{dis}_t^{\text{EMB}}(\psi)),
	\cr \text{dis}_t^{\text{EMB}}(\varphi)&\equiv&\sup_{\begin{smallmatrix} x \in X\end{smallmatrix}}{d(I^{\Phi^X}_t(x),I^{\Phi^Y}_t(\varphi(x)))},
	\cr  \text{dis}_t^{\text{EMB}}(\psi)&\equiv&\sup_{\begin{smallmatrix} y \in Y\end{smallmatrix}}{d(I^{\Phi^X}_t(\psi(y)),I^{\Phi^Y}_t(y))}. \nonumber
\end{eqnarray*}
The distance between point $x \in X$ and $y \in Y$ is the usual Euclidean distance in the embedded space
\begin{eqnarray*}
	d(I^{\Phi^X}_t(x),I^{\Phi^Y}_t(y))^2 &=& \norm{I^{\Phi^X}_t(x) - I^{\Phi^Y}_t(y)}_{L^2}. \nonumber
\end{eqnarray*}
B{\'e}rard \etal defined the distance between the manifolds $X$, $Y$, as the upper-bound of the Hausdorff distance evaluated for all compatible pairs of eigenbases.
\begin{eqnarray}
\label{eq:spec_emb}
	d^{\text{EMB}}_t(X, Y) &\equiv& \max(d_{I_t}^{\text{EMB}}(X,Y), d_{I_t}^{\text{EMB}}(Y,X)), 
	\cr
	 \cr d_{I_t}^{\text{EMB}}(X,Y) &\equiv& \sup_{\{\Phi^X\}} \inf_{\{\Phi^Y\}} d_\mathcal{H}^{\text{EMB}}(I^{\Phi^X}_t,I^{\Phi^Y}_t),
	\cr d_{I_t}^{\text{EMB}}(Y,X) &\equiv&\sup_{\{\Phi^Y\}} \inf_{\{\Phi^X\}} d_\mathcal{H}^{\text{EMB}}(I^{\Phi^X}_t,I^{\Phi^Y}_t). 
\end{eqnarray}
We denote $d^{\text{EMB}}_t(X, Y) $ as the {\em spectral embedding distance}.

They showed that for any fixed $t>0$, the spectral embedding distance $d^\text{EMB}_t$ is a metric between isometry classes of Riemannian manifolds. In particular, $d^\text{EMB}_t(X,Y)=0$ if and only if the Riemannian manifolds $X$ and $Y$ are isometric.
We note that the theory holds for any kernel of the form $K(x, x') = \sum_i k(\lambda_i)\phi_i(x)\phi_i(x')$, with $k(\cdot)$ injective and decreasing sufficiently fast at infinity.

\subsubsection{Spectral kernel distance}
\label{par:spec_ker_dist}
Kasue and Kumura \cite{kasue1994spectral} defined the metric between Riemannian manifolds by comparing their respective heat kernels 
\begin{eqnarray}
\label{eq:spec_ker_dist}
	d^{\text{SPEC}}(X, Y) &\equiv& \inf_{\begin{smallmatrix}\varphi: X \mapsto Y \\ \psi: Y \mapsto X \end{smallmatrix}}\max(\text{dis}^{\text{SPEC}}(\varphi), \text{dis}^{\text{SPEC}}(\psi)), 
\end{eqnarray}
taking the supremum of kernel distortion for all $t>0$

\begin{eqnarray*}
	\text{dis}^{\text{SPEC}}(\varphi) &\equiv& \sup_{\begin{smallmatrix} x,x' \in X , t>0 \end{smallmatrix}}u(t)d_t^{\text{SPEC}}(x,x',\varphi(x),\varphi(x')),
	\cr \text{dis}^{\text{SPEC}}(\psi) &\equiv& \sup_{\begin{smallmatrix} y,y' \in Y , t>0 \end{smallmatrix}}u(t)d_t^{\text{SPEC}}(\psi(y),\psi(y'),y,y'), \nonumber
	\cr d_t^{\text{SPEC}}(x,x',y,y') &\equiv& \abs{\text{Vol}(X)K_t^X(x,x')-\text{Vol}(Y)K_t^Y(y,y')}. \nonumber
\end{eqnarray*}

The function $u(t)\equiv e^{-(t+1/t)}$ is used to normalize the kernels for different values of $t$. 
We denote $d^{\text{SPEC}}(X, Y) $ as the {\em spectral kernel distance}.

\subsubsection{Spectral Gromov-Wasserstein distance}
\label{par:spec_gw_dist}
Recently, M\'emoli \cite{memoli2009spectral} introduced a spectral version of the Gromov-Wasserstein distance, 
 similar to the one proposed by Kasue and Kumura. 
He applied the well-established transportation distances, also known as Earth Mover's Distance (EMD). 

Given importance probability measures $\mu^X(x)$, $\mu^Y(y)$ such that $\int_X\mu^X(x)dx=1$, $\int_Y\mu^Y(y)dy=1$, respectively,
 and for a coupling measure $\mu(x,y)$ satisfying $\int_X\mu(x,y)dx = \mu^Y(y)$ and $\int_Y\mu(x,y)dy = \mu^X(x)$, 
 the {\em spectral Gromov-Wasserstein distance} $d_{\mathcal{GW},p}(X,Y)$ is the EMD worst case $L^p$ norm of the spectral kernel distortion for all scales $t>0$,
\begin{eqnarray*}
	d_{\mathcal{GW},p}^{\text{SPEC}}(X, Y) &\equiv& \inf_{\mu}\sup_{t>0}c(t)\norm{\Gamma^{\text{SPEC}}_{t,X,Y}}_{L^p \, (\mu\otimes\mu)},  	
\cr \norm{\Gamma^{\text{SPEC}}_{t,X,Y}}^p_{L^p \, (\mu\otimes\mu)} &\equiv& \iint\limits_{\begin{smallmatrix} X \times Y \\ X \times Y\end{smallmatrix}}d_t^{\text{SPEC}}(x,x',y,y')^pd\mu(x,y)d\mu(x',y'), 
\end{eqnarray*}
where $c(t)\equiv e^{-a/t}$ for some $a>0$.

\subsubsection{Spectral quasi-conformal distance}
\label{par:spec_qc}
Motivated by conformal type distances \cite{gu2004genus,lipman2011conformal,aflalo2013conformal} and spectral embedding \cite{berard1994embedding},
we propose the {\em quasi-conformal distance}. The key point in our semi-conformal approach is that of replacing the angles between curves by analogous point-wise signatures. To that end, we regard the isometry invariant quantities 
\begin{eqnarray*}
\omega_{i,j} &\equiv& \cfrac{\nabla\phi_i \cdot \nabla\phi_j}{\sqrt{\lambda_i \lambda_j}}, \nonumber
\end{eqnarray*}
 which relate both to angles and areas of the triangles constructed by the gradients of the eigenfunctions $\nabla\phi_i$, $\nabla\phi_j$. Now, similarly to the spectral embedding, we define the embedding
\begin{eqnarray*}
	\tilde{I}^{\Phi^X}_t(x) \equiv \{\text{Vol}(X)e^{-(\lambda^X_i+\lambda^X_j) t/2}\omega^X_{i,j}(x)\}_{i,j}.
\end{eqnarray*}

\begin{theorem}
\label{thm:sep}
	The embedding $\tilde{I}^{\Phi^X}_t$ separates points on the Riemannian manifold $X \in \mathcal{M}$, \ie $x\ne x' \iff \tilde{I}^{\Phi^X}_t(x) \ne \tilde{I}^{\Phi^X}_t(x')$.
\end{theorem}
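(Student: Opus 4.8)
The plan is to prove that $\tilde I^{\Phi^X}_t$ is injective; the reverse implication $x=x'\Rightarrow\tilde I^{\Phi^X}_t(x)=\tilde I^{\Phi^X}_t(x')$ is trivial. Since each scalar factor $\text{Vol}(X)\,e^{-(\lambda^X_i+\lambda^X_j)t/2}$ is strictly positive and independent of the point, the hypothesis $\tilde I^{\Phi^X}_t(x)=\tilde I^{\Phi^X}_t(x')$ is equivalent to $\omega^X_{i,j}(x)=\omega^X_{i,j}(x')$ for all $i,j\ge1$, that is, to
\[
\nabla\phi^X_i(x)\cdot\nabla\phi^X_j(x)\;=\;\nabla\phi^X_i(x')\cdot\nabla\phi^X_j(x')\qquad\text{for all }i,j\ge1.
\]
It therefore suffices to show that this family of equalities forces $x=x'$.

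First I would upgrade the statement from eigenfunctions to arbitrary smooth functions. Given $f\in C^\infty(X)$ with zero mean, write $f=\sum_{i\ge1}c_i\phi^X_i$ with $c_i=\int_X f\,\phi^X_i\,d\mathrm V_X$. Integration by parts gives $|c_i|\le\lambda_i^{-m}\,\norm{(-\Delta^X)^m f}_{L^2}$ for every $m$, so the $c_i$ decay faster than any power of $\lambda^X_i$; on the other hand, classical elliptic estimates bound $\sup_{x}\norm{\nabla\phi^X_i(x)}$ by a fixed power of $\lambda^X_i$, and Weyl's law controls the growth of the $\lambda^X_i$. Consequently the series $\sum_{i\ge1}c_i\phi^X_i$ and $\sum_{i\ge1}c_i\nabla\phi^X_i$ both converge uniformly, with limits $f$ and $\nabla f$ respectively. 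Squaring the partial sums of the gradient series and passing to the limit yields
\[
\norm{\nabla f(x)}^2=\sum_{i,j\ge1}c_i c_j\,\nabla\phi^X_i(x)\cdot\nabla\phi^X_j(x)=\sum_{i,j\ge1}c_i c_j\,\nabla\phi^X_i(x')\cdot\nabla\phi^X_j(x')=\norm{\nabla f(x')}^2,
\]
where the middle equality is exactly our hypothesis. Since adding back a constant does not change the gradient, in fact $\norm{\nabla f(x)}=\norm{\nabla f(x')}$ for every $f\in C^\infty(X)$.

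The last step is a localization argument. Suppose $x\ne x'$. Since $X$ is a (Hausdorff) manifold we may choose disjoint open sets $U\ni x$ and $U'\ni x'$, and then a function $f\in C^\infty(X)$ supported in $U$ with $\nabla f(x)\ne0$ — for instance a local coordinate function multiplied by a bump function. Then $f$ vanishes identically near $x'$, so $\norm{\nabla f(x')}=0<\norm{\nabla f(x)}$, contradicting the previous paragraph. Hence $x=x'$, which is the assertion of the theorem.

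I expect the one place requiring genuine care to be the interchange of $\nabla$ with the infinite eigenfunction sum in the second paragraph, i.e.\ the uniform convergence of $\sum c_i\nabla\phi^X_i$ together with the identification of its limit as $\nabla f$; this is standard spectral theory on closed manifolds (rapid decay of Fourier coefficients of smooth functions, polynomial $C^1$ bounds on $\phi^X_i$ via elliptic regularity and Sobolev embedding, and Weyl's asymptotics $\lambda^X_i\gtrsim i^{2/n}$). If one prefers to sidestep these estimates, an alternative is to take $f=K^X_s(\cdot,y)$ for a fixed $y\in X$ and small $s>0$, whose eigenfunction expansion converges in $C^\infty$ by the smoothness of the heat kernel; this gives, for every $y\in X$ and $s>0$, equality of $\norm{\nabla_z K^X_s(z,y)}$ at $z=x$ and at $z=x'$, and one then separates $x$ from $x'$ using the short-time asymptotics $K^X_s(z,y)\sim(4\pi s)^{-n/2}e^{-d_X(z,y)^2/4s}$, whose gradient is large when $z$ is near (but distinct from) $y$ yet exponentially small when $z$ is far from $y$.
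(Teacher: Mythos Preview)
Your argument is essentially the same as the paper's: assume the embeddings agree at $x$ and $x'$, deduce that $\norm{\nabla f}$ coincides at the two points for every function $f$ via the bilinear expansion $\norm{\nabla f}^2=\sum_{i,j}c_ic_j\,\nabla\phi_i\cdot\nabla\phi_j$, and then exhibit an $f$ whose gradient norm distinguishes $x$ from $x'$. The paper carries out exactly these three steps but only as a sketch---it writes the series identity formally and then simply asserts that ``one can easily imagine'' a separating $f$---whereas you supply the missing analysis (rapid decay of Fourier coefficients, $C^1$ eigenfunction bounds, Weyl asymptotics) to justify interchanging $\nabla$ with the sum, and you make the separating function explicit via a bump construction. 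So your proof is correct and follows the paper's route, just with the convergence and localization details filled in.
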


\begin{proof}
		Suppose that for two distinct points $x, x' \in X$ , we have $\tilde{I}^{\Phi^X}_t(x) = \tilde{I}^{\Phi^X}_t(x')$. This means that  for all $i, j$ we have $\nabla^X \phi^X_i(x) \cdot \nabla^X \phi^X_j(x) = \nabla^X \phi^X_i(x') \cdot \nabla^X \phi^X_j(x')$.
		Any scalar function $f^X:X \mapsto \mathbb{R}$  can be represented as a linear combination of the eigenbasis $f^X(x) = \sum_i a_i \phi_i^X(x)$.  We can write the norm of the gradient as
		\begin{eqnarray*}
			\norm{\nabla^X f^X(x)}^2 &=& \sum_{i,j} a_i a_j (\nabla^X \phi^X_i(x) \cdot \nabla^X \phi^X_j(x))
			\cr &=& \sum_{i,j} a_i a_j (\nabla^X \phi^X_i(x') \cdot \nabla^X \phi^X_j(x'))
			\cr  &=& \norm{\nabla^X f^X(x')}^2.  \nonumber
		\end{eqnarray*}
		However, one can easily imagine a  function $f^X$, such that $\norm{\nabla^X f^X}$ takes distinct values at those two points, a contradiction meaning that the $\tilde{I}^{\Phi^X}_t$ should have been different.
\end{proof}

Let $X$ and $Y$ be the two closed Riemannian manifolds we would like to compare, and let us be given some $t > 0$. We define the spectral quasi-conformal embedding $J^{\Phi^X}_t(x)  \equiv  \tilde{I}^{\Phi^X}_t(x) \cup I^{\Phi^X}_t(x)$ .
Given a pair of eigenbases $\Phi^X$, $\Phi^Y$, we embed the two Riemannian manifolds into  $J^{\Phi^X}_t$ and $J^{\Phi^Y}_t$, and measure the Hausdorff distance $d_\mathcal{H}^{\text{QC}}$ between the manifolds in the common Euclidean space. 
\begin{eqnarray*}
	d_{\mathcal H}^{\text{QC}}(J^{\Phi^X}_t,J^{\Phi^Y}_t) &\equiv& \inf_{\begin{smallmatrix}\varphi: X \mapsto Y \\ \psi: Y \mapsto X \end{smallmatrix}}\max(\text{dis}_t^{\text{QC}}(\varphi), \text{dis}_t^{\text{QC}}(\psi)),
	\cr \text{dis}_t^{\text{QC}}(\varphi)&\equiv&\sup_{\begin{smallmatrix} x \in X\end{smallmatrix}}{d(J^{\Phi^X}_t(x),J^{\Phi^Y}_t(\varphi(x)))},
	\cr  \text{dis}_t^{\text{QC}}(\psi)&\equiv&\sup_{\begin{smallmatrix} y \in Y\end{smallmatrix}}{d(J^{\Phi^X}_t(\psi(y)),J^{\Phi^Y}_t(y))}. \nonumber
\end{eqnarray*}
The distance between point $x \in X$ and $y \in Y$ is the usual Euclidean distance in the embedded space
\begin{eqnarray*}
	d(J^{\Phi^X}_t(x),J^{\Phi^Y}_t(y)) &=& \norm{J^{\Phi^X}_t(x) - J^{\Phi^Y}_t(y)}_{L^2}. \nonumber
\end{eqnarray*}
We define the {\em spectral quasi-conformal distance} as the supremum of the Hausdorff distance evaluated for all compatible pairs of eigenbases.
\begin{eqnarray}
	\label{eq:qc_dist}
	d^{\text{QC}}_t(X, Y) &\equiv& \max(d^{\text{QC}}_{J_t}(X,Y), d^{\text{QC}}_{J_t}(Y,X)), 
	\cr
	 \cr d_{J_t}^{\text{QC}}(X, Y) &\equiv& \sup_{\{\Phi^X\}} \inf_{\{\Phi^Y\}} d_\mathcal{H}^{\text{QC}}(J^{\Phi^X}_t,J^{\Phi^Y}_t),
	\cr d_{J_t}^{\text{QC}}(Y, X) &\equiv&\sup_{\{\Phi^Y\}} \inf_{\{\Phi^X\}} d_\mathcal{H}^{\text{QC}}(J^{\Phi^X}_t,J^{\Phi^Y}_t).
\end{eqnarray}

\begin{theorem}
\label{thm:si}
	The family $\{J^{\Phi^X}_t\}_t$ is invariant to global scaling of the metric.
\end{theorem}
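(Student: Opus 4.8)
The plan is to reuse the bookkeeping that already establishes the global scale invariance of the GPS kernel in Section~\ref{sec:gps}, now tracking every ingredient of $J^{\Phi^X}_t$ and showing that a global rescaling of the metric merely reparametrizes the scale variable $t$. Concretely, let $\bar X$ denote the manifold $X$ with its metric multiplied by the factor $\alpha^2$ for some $\alpha>0$ (lengths scaled by $\alpha$); the underlying point set is unchanged, and $\text{Vol}(\bar X)=\alpha^2\,\text{Vol}(X)$. As recalled in Section~\ref{sec:gps}, the Laplace--Beltrami data transform as $\bar\lambda_i=\alpha^{-2}\lambda_i$ and, for the $L^2$-normalized eigenfunctions, $\bar\phi_i=\alpha^{-1}\phi_i$ (eigenvalues are simple, so eigenfunctions are fixed up to an overall sign, which the rescaling does not affect).

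First I would compute how the quasi-conformal quantities $\omega_{i,j}$ scale. Since the metric is multiplied by $\alpha^2$, its inverse is multiplied by $\alpha^{-2}$, so for any two functions $h,h'$ the pointwise inner product of gradients measured with the rescaled metric satisfies $\bar\nabla h\cdot\bar\nabla h'=\alpha^{-2}\,(\nabla h\cdot\nabla h')$. Applying this to $h=\bar\phi_i=\alpha^{-1}\phi_i$ and $h'=\bar\phi_j=\alpha^{-1}\phi_j$ gives $\bar\nabla\bar\phi_i\cdot\bar\nabla\bar\phi_j=\alpha^{-4}\,(\nabla\phi_i\cdot\nabla\phi_j)$, while $\sqrt{\bar\lambda_i\bar\lambda_j}=\alpha^{-2}\sqrt{\lambda_i\lambda_j}$, whence
\[
\bar\omega_{i,j}=\frac{\bar\nabla\bar\phi_i\cdot\bar\nabla\bar\phi_j}{\sqrt{\bar\lambda_i\bar\lambda_j}}=\alpha^{-2}\,\omega_{i,j}.
\]

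Next I would substitute these relations into the two blocks of $J$. For the spectral-embedding block, $\sqrt{\text{Vol}(\bar X)}\,\bar\phi_i(x)=\alpha\sqrt{\text{Vol}(X)}\cdot\alpha^{-1}\phi_i(x)=\sqrt{\text{Vol}(X)}\,\phi_i(x)$ and $e^{-\bar\lambda_i t/2}=e^{-\lambda_i(t/\alpha^2)/2}$, so $I^{\bar\Phi^{\bar X}}_t(x)=I^{\Phi^X}_{t/\alpha^2}(x)$. For the quasi-conformal block, $\text{Vol}(\bar X)\,\bar\omega_{i,j}(x)=\alpha^2\text{Vol}(X)\cdot\alpha^{-2}\omega_{i,j}(x)=\text{Vol}(X)\,\omega_{i,j}(x)$ and $e^{-(\bar\lambda_i+\bar\lambda_j)t/2}=e^{-(\lambda_i+\lambda_j)(t/\alpha^2)/2}$, so $\tilde I^{\bar\Phi^{\bar X}}_t(x)=\tilde I^{\Phi^X}_{t/\alpha^2}(x)$. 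Concatenating the two blocks yields $J^{\bar\Phi^{\bar X}}_t=J^{\Phi^X}_{t/\alpha^2}$ for every $t>0$. Since $t\mapsto t/\alpha^2$ is a bijection of $(0,\infty)$, the indexed family $\{J^{\bar\Phi^{\bar X}}_t\}_{t>0}$ coincides with $\{J^{\Phi^X}_t\}_{t>0}$, which is the assertion.

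The only genuinely delicate step is the middle one: getting the scaling exponents right for the gradient inner product and for $\omega_{i,j}$, and then verifying that all powers of $\alpha$ cancel in $\text{Vol}(X)^{1/2}\phi_i$, in $\text{Vol}(X)\,\omega_{i,j}$, and in the exponents; everything else is routine. As a consistency check one can note that $\bar\omega_{i,i}=\alpha^{-2}\omega_{i,i}$ agrees with the Green's-identity computation of Section~\ref{sec:gps}, since $\bar\phi_i/\sqrt{\bar\lambda_i}=\phi_i/\sqrt{\lambda_i}$ and the Dirichlet energy of a fixed function scales by $\alpha^{-2}$ under the metric rescaling.
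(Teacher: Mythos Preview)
Your proposal is correct and follows essentially the same route as the paper: compute $\bar\omega_{i,j}=\alpha^{-2}\omega_{i,j}$, then check that the volume prefactors in both $I$ and $\tilde I$ exactly cancel the scaling of $\phi_i$ and $\omega_{i,j}$, so that only the exponential factors survive and amount to a reparametrization $t\leftrightarrow \alpha^2 t$ of the family index. Your treatment of the gradient inner product (via the inverse metric scaling) is in fact slightly more careful than the paper's shorthand $\nabla^{\bar X}=\alpha^{-1}\nabla^X$, but the two computations are the same in content.
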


\begin{proof}
If the Riemannian manifold ${\bar X}$ is obtained by uniformly scaling the metric of the Riemannian manifold $X$  by a factor $\alpha > 0$, then,
the eigenvalues of the Laplace-Beltrami operator are scaled according to $\lambda^{\bar X}_i = \alpha^{-2}\lambda^X_i$, the corresponding ${L^2}$-normalized eigenfunctions become $\phi^{\bar X}_i = \alpha^{-1}\phi^X_i$, and the gradient operator reads $\nabla^{\bar X} = \alpha^{-1}\nabla^X$. We have 
\begin{eqnarray*}
	\omega^{\bar X}_{i,j} &=& \cfrac{\nabla^{\bar X}\phi^{\bar X}_i \cdot \nabla^{\bar X}\phi^{\bar X}_j}{\sqrt{\lambda^{\bar X}_i \lambda^{\bar X}_j}}
	\eqs  =  \eqs  \cfrac{\alpha^{-1}\nabla^{X}\alpha^{-1}\phi^{X}_i \cdot \alpha^{-1}\nabla^{X}\alpha^{-1}\phi^{X}_j}{\sqrt{\alpha^{-2}\lambda^{X}_i \alpha^{-2}\lambda^{X}_j}}
	\cr &=&  \alpha^{-2}\cfrac{\nabla^{X}\phi^{X}_i \cdot \nabla^{X}\phi^{X}_j}{\sqrt{\lambda^{X}_i \lambda^{X}_j}} \eqs = \eqs  \alpha^{-2}\omega^{X}_{i,j}.
\end{eqnarray*}
Now, if we set $\bar{t} = \alpha^2 t$
\begin{eqnarray*}
	(\tilde{I}^{\Phi^{\bar X}}_{\bar t})_{i,j} &=& \text{Vol}({\bar X})e^{-(\lambda^{\bar X}_i+\lambda^{\bar X}_j) {\bar t}/2}\omega^{\bar X}_{i,j}
	\cr &=&\alpha^2\text{Vol}({X})e^{-(\alpha^{-2}\lambda^{X}_i+\alpha^{-2}\lambda^{X}_j) \alpha^2 t/2}\alpha^{-2}\omega^{X}_{i,j}\
	\cr &=& \text{Vol}({X})e^{-(\lambda^{X}_i+\lambda^{X}_j) t/2}\omega^{X}_{i,j} \eqs = \eqs(\tilde{I}^{\Phi^{X}}_{t})_{i,j},
	\cr
	\cr (I^{\Phi^{\bar X}}_{\bar t})_{i} &=& \text{Vol}({\bar X})^\frac{1}{2}\phi^{\bar X}_i e^{-\lambda^{\bar X}_i {\bar t}/2}
	\cr &=&(\alpha^{2}\text{Vol}({X}))^\frac{1}{2}e^{-\alpha^{-2}\lambda^{\bar X}_i \alpha^{2}t/2} \alpha^{-1}\phi^{X}_i
	\cr &=& \text{Vol}({X})^\frac{1}{2}e^{-\lambda^{X}_i {t}/2}\phi^{X}_i  \eqs = \eqs(I^{\Phi^{X}}_{t})_{i}.
\end{eqnarray*}
\end{proof}

\begin{theorem}
\label{thm:pmet}
For any fixed $t>0$, the spectral quasi-conformal distance $d^{\text{QC}}_t$ is a metric between isometry classes of Riemannian manifolds.
\end{theorem}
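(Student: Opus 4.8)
The plan is to verify the three metric axioms --- symmetry, the triangle inequality, and the identity axiom --- for $d^{\text{QC}}_t$, regarded as a function on isometry classes of manifolds in $\mathcal{M}$. Two reductions will carry most of the weight. First, because the maps $\varphi$ and $\psi$ in the definition of $d_{\mathcal H}^{\text{QC}}$ are arbitrary set maps occurring in separate arguments of the maximum, a direct computation identifies $d_{\mathcal H}^{\text{QC}}(J^{\Phi^X}_t,J^{\Phi^Y}_t)$ with the ordinary Hausdorff distance in $\ell^2$ between the images $J^{\Phi^X}_t(X)$ and $J^{\Phi^Y}_t(Y)$; these images are compact, since $X$ is compact and the embeddings are continuous into $\ell^2$ --- the coordinates are square-summable because $\sum_i e^{-\lambda_i t}\phi_i(x)^2=K_t(x,x)<\infty$ for the $I$-block and, using the Cauchy--Schwarz bound $\omega_{i,j}(x)^2\le\norm{\nabla\phi_i(x)}^2\norm{\nabla\phi_j(x)}^2/(\lambda_i\lambda_j)$, because $\sum_i e^{-\lambda_i t}\norm{\nabla\phi_i(x)}^2<\infty$ for the $\tilde{I}$-block, both being standard heat-kernel facts. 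In particular, for every fixed triple of eigenbases, $d_{\mathcal H}^{\text{QC}}$ inherits the triangle inequality of the Hausdorff distance. Second, since $J^{\Phi}_t=\tilde{I}^{\Phi}_t\cup I^{\Phi}_t$ is a concatenation of coordinate blocks, the orthogonal projection of $\ell^2$ onto the $I$-block is $1$-Lipschitz and maps $J^{\Phi^X}_t(X)$ onto $I^{\Phi^X}_t(X)$; a $1$-Lipschitz surjection does not increase Hausdorff distance, so $d_{\mathcal H}^{\text{EMB}}(I^{\Phi^X}_t,I^{\Phi^Y}_t)\le d_{\mathcal H}^{\text{QC}}(J^{\Phi^X}_t,J^{\Phi^Y}_t)$ for all eigenbases, and passing to $\inf$ over $\Phi^Y$ and then $\sup$ over $\Phi^X$ yields $d^{\text{EMB}}_t(X,Y)\le d^{\text{QC}}_t(X,Y)$.

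Symmetry is then immediate from $d^{\text{QC}}_t(X,Y)=\max(d^{\text{QC}}_{J_t}(X,Y),d^{\text{QC}}_{J_t}(Y,X))$. For the easy half of the identity axiom I would argue by transport: if $\varphi:X\to Y$ is a Riemannian isometry it is volume preserving, intertwines the Laplace--Beltrami operators, and has a fibrewise isometric differential, so for any eigenbasis $\Phi^X$ of $X$ the family $\{\phi^X_i\circ\varphi^{-1}\}$ is an eigenbasis of $Y$ with $\lambda^Y_i=\lambda^X_i$ and $\omega^Y_{i,j}(\varphi(x))=\omega^X_{i,j}(x)$; hence $J^{\Phi^Y}_t(\varphi(x))=J^{\Phi^X}_t(x)$ for all $x$, the two images coincide, and $d_{\mathcal H}^{\text{QC}}(J^{\Phi^X}_t,J^{\Phi^Y}_t)=0$ for every $\Phi^X$, so $d^{\text{QC}}_{J_t}(X,Y)=d^{\text{QC}}_{J_t}(Y,X)=0$ and $d^{\text{QC}}_t(X,Y)=0$. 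Applying the same transport to isometries $X\cong X'$ and $Y\cong Y'$ shows $d^{\text{QC}}_t$ depends only on the isometry classes, so it descends to a well-defined function on $\mathcal{M}$ modulo isometry. For the converse, if $d^{\text{QC}}_t(X,Y)=0$ then by the second reduction $d^{\text{EMB}}_t(X,Y)=0$, and the theorem of B{\'e}rard \etal \cite{berard1994embedding} recalled in Section~\ref{par:spec_emb} forces $X$ and $Y$ to be isometric. (A self-contained alternative is to repeat the argument of \cite{berard1994embedding} with $I^{\Phi}_t$ replaced by $J^{\Phi}_t$: a compactness argument over eigenbases produces limiting maps achieving the zero distance, Theorem~\ref{thm:sep} makes them injective, and the identity $\norm{\nabla^X f(x)}=\norm{\nabla^Y(f\circ\varphi^{-1})(\varphi(x))}$ for every scalar $f$ --- the same computation as in the proof of Theorem~\ref{thm:sep} --- exhibits $\varphi$ as a metric isometry.)

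The triangle inequality $d^{\text{QC}}_t(X,Z)\le d^{\text{QC}}_t(X,Y)+d^{\text{QC}}_t(Y,Z)$ is the only place where the nesting of $\sup$ over $\inf$ has to be handled with care. Given an eigenbasis $\Phi^X$ of $X$ and $\epsilon>0$, I would first use $\inf_{\Phi^Y}d_{\mathcal H}^{\text{QC}}(J^{\Phi^X}_t,J^{\Phi^Y}_t)\le d^{\text{QC}}_{J_t}(X,Y)\le d^{\text{QC}}_t(X,Y)$ to pick $\Phi^Y$ with $d_{\mathcal H}^{\text{QC}}(J^{\Phi^X}_t,J^{\Phi^Y}_t)\le d^{\text{QC}}_t(X,Y)+\epsilon$, and then use $\inf_{\Phi^Z}d_{\mathcal H}^{\text{QC}}(J^{\Phi^Y}_t,J^{\Phi^Z}_t)\le d^{\text{QC}}_t(Y,Z)$ to pick $\Phi^Z$ with $d_{\mathcal H}^{\text{QC}}(J^{\Phi^Y}_t,J^{\Phi^Z}_t)\le d^{\text{QC}}_t(Y,Z)+\epsilon$. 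The Hausdorff triangle inequality for these three fixed bases gives $d_{\mathcal H}^{\text{QC}}(J^{\Phi^X}_t,J^{\Phi^Z}_t)\le d^{\text{QC}}_t(X,Y)+d^{\text{QC}}_t(Y,Z)+2\epsilon$; taking $\inf$ over $\Phi^Z$, letting $\epsilon\to0$, and then $\sup$ over $\Phi^X$ yields $d^{\text{QC}}_{J_t}(X,Z)\le d^{\text{QC}}_t(X,Y)+d^{\text{QC}}_t(Y,Z)$, the analogous bound for $d^{\text{QC}}_{J_t}(Z,X)$ follows by relabelling together with symmetry, and the maximum of the two is $d^{\text{QC}}_t(X,Z)$.

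The one genuinely non-formal ingredient is the implication ``$d^{\text{QC}}_t(X,Y)=0\Rightarrow X\cong Y$'', which the plan above settles by comparison with $d^{\text{EMB}}_t$, so I expect no new compactness argument to be required. The step that will need the most care is instead the first reduction --- that $d_{\mathcal H}^{\text{QC}}$ really equals the $\ell^2$ Hausdorff distance between compact images --- since both the triangle inequality and the comparison with $d^{\text{EMB}}_t$ rest on it; the crucial new point there, not present in B{\'e}rard's setting, is the square-summability of the quasi-conformal coordinates, which comes from the Cauchy--Schwarz bound on $\omega_{i,j}$ together with the $L^\infty$ bounds on $\nabla\phi_i$.
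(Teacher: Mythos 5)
Your proposal is correct, and on the two formal axioms it ends up where the paper does: symmetry is definitional in both, and the paper's Proposition \ref{prop:tri} proves the triangle inequality by composing near-optimal maps $\varphi_2\circ\varphi_1$, $\psi_1\circ\psi_2$ and applying the pointwise Euclidean triangle inequality, which is exactly the proof of the Hausdorff triangle inequality you invoke after your first reduction (your reading of $\varphi,\psi$ as unconstrained set maps matches the paper's literal definition, and the argument is robust even without that identification). Where you genuinely diverge is on the implication $d^{\text{QC}}_t(X,Y)=0\Rightarrow X\cong Y$. The paper re-derives this from scratch: assuming smooth bijections realizing the zero distance, it shows the Jacobian is constant (Lemma \ref{lem:jac}), deduces isospectrality and equality of volumes (Lemma \ref{lem:isospec}), and concludes that the manifolds share a common eigensystem (Proposition \ref{prop:isom}); each step is given twice, once from the $I$-block constraints and once from the $\tilde I$-block constraints, the latter under a spanning hypothesis on $\{\nabla\phi_i\cdot\nabla\phi_j\}$. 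You instead note that $J^{\Phi}_t$ contains $I^{\Phi}_t$ as a coordinate block, so discarding the $\tilde I$-coordinates is $1$-Lipschitz, whence $d^{\text{EMB}}_t\le d^{\text{QC}}_t$, and then quote the theorem of B{\'e}rard \etal \cite{berard1994embedding}. Your route is shorter and, strictly speaking, more rigorous: it delegates to the cited result the delicate production of limiting maps realizing a zero distance, which the paper's Lemma \ref{lem:jac} simply postulates as smooth bijections. What it does not buy is any information about what the quasi-conformal coordinates alone can detect, which is the content of the paper's ``Proof 2'' variants and the actual novelty of its appendix. Your attention to square-summability and continuity of the $\tilde I$-block (needed for compactness of the images and hence for the well-posedness of the whole construction) addresses a point the paper passes over in silence.
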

A complete proof is given in appendix \ref{app:qc}. 

%%%%%%%%%%%%%%%%%%%%%%%%%%%%%%%%%%%%%%%%%%%%%%%%%%%%%%%%%%%%%%
%									Shape similarity functionals									%
%%%%%%%%%%%%%%%%%%%%%%%%%%%%%%%%%%%%%%%%%%%%%%%%%%%%%%%%%%%%%%
\subsection{Shape similarity functionals}
First, we wish to define a more tractable version of the formal spectral kernel distance 
 that was presented in Eq. \ref{eq:spec_ker_dist}. 
In general, we want to find a direct mapping that preserves the spectral connectivity between points. 

Let us denote by $X$ and $Y$ the two shapes we compare. 
The correspondence between $X$ and $Y$ is represented by a bijective mapping $\varphi : X \mapsto Y,$ 
 such that for each point $x \in X$, the corresponding point $y \in Y$ is obtained by the mapping $y=\varphi(x)$. 
An importance measure denoted by $\rho^X(x)$ is specified for each point $x \in X$, such that $\int_X\rho^X(x)dx=1$.
 Based on the rationale of the spectral kernel distance, we wish to compare the kernels $K^X$ and $K^Y$.  
We define the {\em spectral kernel similarity functional}
\begin{eqnarray}
\label{eq:tract_spec_ker}	
	d_{p}^{\text{SPEC}}(X,Y) &\equiv& \min_{\varphi:X \mapsto Y}\norm{\Gamma^{\text{SPEC}}_{X,Y}}_{L^p}, 
\end{eqnarray}
that minimizes the $L^p$ norm of the spectral kernel distortion
\begin{eqnarray*}
	 \norm{\Gamma^{\text{SPEC}}_{X,Y}}^p_{L^p} &\equiv& \iint\limits_{X X}d_K^{\text{SPEC}}(x,x',\varphi(x),\varphi(x'))^pd\rho^X(x)d\rho^X(x'),
\end{eqnarray*}
where
\begin{eqnarray*}
	d_K^{\text{SPEC}}(x,x',y,y') &\equiv& \abs{K^X(x,x')-K^Y(y,y')}. \nonumber
\end{eqnarray*}
We remark that the definition of the spectral kernel similarity functional can easily be extended for comparing multiple kernels.
Here, we choose the kernel to be the GPS one.

Next, we would like to reformulate the spectral embedding distance, given in Eq. \ref{eq:spec_emb},
 as the objective function of the 
 iterative refinement algorithm defined in Section \ref{sec:fun_map}.
The iterative algorithm tries to fit the shapes embedded in a common Euclidean space, in such a way that a point on one shape matches to the closest point on the other shape.
Recall, that the correspondence $\varphi(x)$ of point $x \in X$ is found by searching for the nearest neighbor of $\Phi^X(x)$ in $\Phi^Y{C^Y}$.
We can think of $\Phi^X$ as an embedding of shape $X$ into the $N$ dimensional spectral space. 
Similarly, we can think of $\Phi^Y{C^Y}$ as a rigid transformation of the embedding $\Phi^Y$ 
 of shape $Y$ into the same spectral space.
Hence, we can present the {\em spectral embedding similarity functional}
\begin{eqnarray}
\label{eq:tract_se}	
	d_{p}^{\text{EMB}}(X,Y) &\equiv& \min_{C^Y,\varphi}\norm{\Gamma^{\text{EMB}}_{X,Y}}_{L^p}, 
\end{eqnarray}
that minimizes the $L^p$ norm of the spectral distance 
\begin{eqnarray*}
	\norm{\Gamma^{\text{EMB}}_{X,Y}}^p_{L^p} &\equiv& \int_X d^{\text{EMB}}(\Phi^X(x), \Phi^Y(\varphi(x)){C^Y})^pd\rho^X(x).\nonumber
\end{eqnarray*}
The metric $d^{\text{EMB}}$
defines the spectral distance between points in the common space.
Here, we choose the global scale invariant distance
\begin{eqnarray*}
	d^{\text{EMB}}(\Phi^X(x), \Phi^Y(y){C^Y}) &\equiv& \norm{(\Phi^X(x) - \Phi^Y(y){C^Y})\Lambda_X^{-\frac{1}{2}}}_{L^2}. \nonumber
\end{eqnarray*}

Finally, we derive an objective function from the spectral quasi-conformal distance of Eq. \ref{eq:qc_dist}. In addition to the angle and area-preserving quantity $\omega_{i,j}$, we incorporate the orientation of the shapes by introducing
\begin{eqnarray*}
\nu_{i,j} \eqs \equiv \eqs \cfrac{\textbf n \cdot (\nabla\phi_i \times \nabla\phi_j)}{\sqrt{\lambda_i \lambda_j}},\nonumber
\end{eqnarray*}
where $\textbf n$ is the outward pointing normal to the surface.
The point-wise signatures $\omega_{i, j}$ , $\nu_{i, j}$ can be related to the areas of the triangles constructed by the norms of the gradients $\norm{\nabla {\phi}_i}$, $\norm{\nabla {\phi}_j}$ and the angles between them $\angle(\nabla {\phi}_i, \nabla {\phi}_j)$. 
 Using  $\omega_{i, j}$ and $\nu_{i, j}$, we can formulate the {\em spectral quasi-conformal similarity functional}
\begin{eqnarray}
\label{eq:tract_cnf}	
	d_{p}^{\text{QC}}(X,Y) &\equiv& \sup\limits_{\{{\Phi}^X\}} \inf_{\{{\Phi}^Y\}}\inf_\varphi\norm{\Gamma^{\text{QC}}_{X,Y}}_{L^p}, 
\end{eqnarray}
 that optimizes the $L^p$ norm of the quasi-conformal distortion
\begin{eqnarray*}
	\norm{\Gamma^{\text{QC}}_{X,Y}}^p_{L^p} &\equiv& \int_X d^{\text{QC}}(x,\varphi(x))^p d\rho^X(x). \nonumber
\end{eqnarray*}
The quasi-conformal distortion $d^{\text{QC}}(x,y)$ between point $x \in X$ and point $y \in Y$  is defined using $\omega_{i, j}$ and $\nu_{i, j}$.
Here, we choose the $L^2$ distortion
\begin{eqnarray}
\label{eq:cnf_dist}	
	d^{\text{QC}}(x,y)^2 = \sum\limits_{i,j} (\omega^X_{i,j}(x) - \omega^Y_{i,j}(y))^2 + (\nu^X_{i,j}(x) - \nu^Y_{i,j}(y))^2.
\end{eqnarray}
Let us summarize. In this section we formulated three objective functions. 
\begin{itemize}
	\item The spectral kernel similarity functional $d_{p}^{\text{SPEC}}(X,Y)$.
	\item The spectral embedding similarity functional $d_{p}^{\text{EMB}}(X,Y)$.
	\item The spectral quasi-conformal similarity functional $d_{p}^{\text{QC}}(X,Y)$.
\end{itemize}
%%%%%%%%%%%%%%%%%%%%%%%%%%%%%%%%%%%%%%%%%%%%%%%%%%%%%%%%%%%%%%
%									Shape similarity evaluation									%
%%%%%%%%%%%%%%%%%%%%%%%%%%%%%%%%%%%%%%%%%%%%%%%%%%%%%%%%%%%%%%
\section{Shape similarity evaluation}

%%%%%%%%%%%%%%%%%%%%%%%%%%%%%%%%%%%%%%%%%%%%%%%%%%%%%%%%%%%%%%
%									Spectral Kernel Maps									%
%%%%%%%%%%%%%%%%%%%%%%%%%%%%%%%%%%%%%%%%%%%%%%%%%%%%%%%%%%%%%%
\subsection{Spectral kernel maps}
\label{sec:skm}
We now address the issue of minimizing the spectral kernel similarity functional $d_{p}^{\text{SPEC}}(X,Y)$. For that goal we present a randomized iterative algorithm.
The key idea is that previously obtained correspondences are used to find a new mapping, and that at each iteration, new $M$ randomly selected points are being matched.

 Let $X$ and $Y$ be the discrete shapes we wish to compare.
We estimate the correspondence $\hat{\varphi}(x)$ by minimizing the $L^2$ discrete version of the spectral
  kernel similarity functional of Eq. \ref{eq:tract_spec_ker},
\begin{eqnarray*}
	\hat{\varphi} &=& \min_{\varphi}\sum_{\begin{smallmatrix}{x,x' \in X}\end{smallmatrix}}d_K^{\text{SPEC}}(x,x',\varphi(x),\varphi(x'))^2\rho_X(x)\rho_X(x'). \nonumber
\end{eqnarray*}
Suppose we are given  a subset of points $X_0 \subseteq X$ and an initial correspondence $\hat{\varphi}_0(x), x \in X_0$.
The Spectral Kernel Maps algorithm is given by

\begin{algorithm}[H]
\begin{algorithmic}
\item \caption{\textbf{:} \textsc{Spectral kernel maps}}
 \label{alg:skm}
\For {$\ell=1$ to $L$}
\begin{enumerate}
 \item Select a subset of points $X_\ell$ by drawing $M$ points from $X$ with probability $P(x)=\rho^X(x)$.
 \item For each $x \in X_\ell$ find  new correspondence $\hat{\varphi}_\ell(x)$ by minimizing 
\begin{eqnarray*}
	\hat{\varphi}_\ell(x)  =  \argmin\limits_{y \in Y} \sum_{x' \in X_{\ell-1}}\abs{K^X(x,x') - K^Y(y,\hat{\varphi}_{\ell-1}(x'))}^2. \nonumber
\end{eqnarray*}
\end{enumerate}
\EndFor
\end{algorithmic}
\end{algorithm}
The algorithm provides as an output the kernel signatures $K^X(x, x')$, $K^Y(y, \hat{\varphi}_L(x')) \:\, \forall \,  x' \in X_L$,
and the mapping
\begin{eqnarray*}
	\hat{\varphi}(x)&=&\argmin\limits_{y \in Y}{\sum\limits_{x' \in X_L}\abs{K^X(x,x') - K^Y(y,\hat{\varphi}_L(x'))}^2} \:\,  \forall \, x \in X. \nonumber
\end{eqnarray*}
We note that the spectral kernel maps are equivalent in structure to the
  {\em Heat Kernel Maps}  \cite{ovsjanikov2010one}, with the important difference that
   unlike the heat kernel maps, our design does not depend on accurate initial matching of  feature points. 
Moreover, the spectral kernel maps method can scale up the number of points being
 used for generating the isometric signatures, which consequently increases the robustness 
 and the accuracy of the mapping.

%%%%%%%%%%%%%%%%%%%%%%%%%%%%%%%%%%%%%%%%%%%%%%%%%%%%%%%%%%%%%%
%							Functional Spectral Kernel Maps 										%
%%%%%%%%%%%%%%%%%%%%%%%%%%%%%%%%%%%%%%%%%%%%%%%%%%%%%%%%%%%%%%
\subsection{Functional spectral kernel maps}
\label{sec:fskm}
We further extend the idea put forward in the previous section. In addition to optimizing the spectral kernel similarity objective function $d_{p}^{\text{SPEC}}(X,Y)$, we also want to minimize the spectral embedding similarity functional $d_{p}^{\text{EMB}}(X,Y)$.

Recall, that the post-process iterative refinement procedure presented in Algorithm \ref{alg:ppir}, already optimizes $d_{2}^{\text{EMB}}(X,Y)$. We are looking for a way to modify Algorthim \ref{alg:ppir}, in a manner that will simultaneously minimize $d_{2}^{\text{SPEC}}(X,Y)$, by matching $K^X(x,x')$ to $K^Y(y,y')$.
For that goal, we impose the spectral kernel constraints on the delta functions that represent correspondence. Accordingly, the function
\begin{eqnarray*}
  	f^X_{x,x'}(\tilde{x}) &=& ({K^X(x,x')}/{\abs{K^X(x,x)}})\delta_{x}(\tilde{x}), \nonumber
\end{eqnarray*}
 should correspond to 
\begin{eqnarray*}
	f^Y_{y,y'}(\tilde{y}) &=& ({K^Y(y,y')}/{\abs{K^Y(y,y)}})\delta_{y}(\tilde{y}). \nonumber
\end{eqnarray*}
For clarity we note that $\tilde{x}$ and $\tilde{y}$ are the variables of the functions, while $x,x'$ and $y,y'$ are the respective parameters.
Therefore, we have
\begin{eqnarray*}
	\boldsymbol{a}^X_{x,x'} &=& ({K^X(x,x')}/{\abs{K^X(x,x)}})\Phi^X(x), \nonumber
\end{eqnarray*}
 and equivalently
\begin{eqnarray*}
	\boldsymbol{a}^Y_{y,y'} &=& ({K^Y(y,y')}/{\abs{K^Y(y,y)}})\Phi^Y(y). \nonumber
\end{eqnarray*}
Then, we can construct the function preservation constraints $A^X = A^Y {C^Y}$, 
 where the matrices $A^X$, $A^Y$ are built by stacking the row vectors $\boldsymbol{a}^X_{x,x'}$ 
 and $\boldsymbol{a}^Y_{y,y'}$, respectively.
Notice that we normalize the kernels, so that the impact of all constraints $\boldsymbol{a}^X_{x,x}$, $\boldsymbol{a}^Y_{y,y}$  are similar for all $x \in X ,y \in Y$.

By recalling that for nearly isometric shapes, the correspondence we are looking for should be represented by a nearly-diagonal ${C^Y}$ \cite{kovnatsky2012coupled}, we can submit an element-wise off-diagonal penalty $W$ and formulate the following problem
\begin{eqnarray}
\label{eq:off_diag}
	\argmin_{{C^Y}} \norm{A^X - A^Y{C^Y}}^2_F + \beta\norm{W\odot {C^Y}}^2_F,
\end{eqnarray}
where $\beta$ is a tuning parameter.
The minimization of  (\ref{eq:off_diag}) can be obtained separately
 for each column of $C^Y$ by the least squares method.

Suppose we are given  a subset of points $X_0 \subseteq X$ and an initial correspondence $\hat{\varphi}_0(x), \:\, \forall \,  x \in X_0$.
We are ready to  present the functional Spectral Kernel Maps algorithm

\begin{algorithm}[H]
\begin{algorithmic}
\item \caption{\textbf{:} \textsc{Functional spectral kernel maps}} 
\label{alg:fskm}
\For {$\ell=1$ to $L$}
\begin{enumerate}
	\item Construct the constraint matrices $A^X_\ell$, $A^Y_\ell$ by stacking $\boldsymbol{a}^X_{x,x'}$, $\boldsymbol{a}^Y_{y,y'}$, utilizing the correspondence provided by the previous iteration $y=\hat{\varphi}_{\ell-1}(x)$, $y'=\hat{\varphi}_{\ell-1}(x')$.
	\item \label{step:SKFM_c} Find the optimal $C^Y_\ell$ minimizing $\norm{A^X_\ell - A^Y_\ell C^Y_\ell}^2_F+\beta\norm{W\odot {C_\ell^Y}}^2_F$.
	 \item Select a subset of points $X_\ell$ by drawing $M$ points from $X$ with probability $P(x)=\rho^X(x)$.
	\item \label{step:SKFM_closest} For each row $x \in X_\ell$ of $\Phi^X$, find new correspondence $\hat{\varphi}_\ell(x)$ by searching for the closest row of $y$ in $\Phi^Y C^Y_\ell$, applying $\hat{\varphi}_\ell(x) = \argmin\limits_{y\in Y} d^{\text{EMB}}(\Phi^X(x), \Phi^Y(y)C^Y_\ell)$.
\end{enumerate}
\EndFor
\end{algorithmic}
\end{algorithm}
The  algorithm provides as an output both the functional matrix ${C^Y}$ computed in Step \ref{step:SKFM_c},
 and the point-wise correspondence $\hat{\varphi}(x)$ found in Step \ref{step:SKFM_closest}.

We note that the functional spectral kernel maps algorithm is closely related to the spectral GMDS procedure  \cite{aflalo2013sgmds}, 
 as both try to match the connectivity between points on each surface by functional representation.
It can also be viewed as a generalization of the post-process iterative refinement algorithm.
This is noticed by setting the kernel $K(x,x')$ to be the heat kernel $K_t(x,x')$. As $t \to 0$  the only constraints that remain are $\boldsymbol{a}^X_{x,x} \to \Phi^X(x)$ and $\boldsymbol{a}^Y_{y,y} \to \Phi^Y(y)$.
The advantage of the functional spectral kernel maps algorithm is in the combination of the two methods, resulting in highly accurate 
 correspondence maps.

%%%%%%%%%%%%%%%%%%%%%%%%%%%%%%%%%%%%%%%%%%%%%%%%%%%%%%%%%%%%%%
%							Spectral Quasi-Conformal Maps											%
%%%%%%%%%%%%%%%%%%%%%%%%%%%%%%%%%%%%%%%%%%%%%%%%%%%%%%%%%%%%%%
\subsection{Spectral quasi-conformal maps}
\label{sec:conf}
We now turn to approximate correspondence via the spectral {quasi-conformal} similarity functional $d_{p}^{\text{QC}}(X,Y)$, given in Eq. \ref{eq:tract_cnf}. Let $\{\phi^X_i\}_{i=1}^{N_0}$, $\{\phi^Y_i\}_{i=1}^{N_0}$ be the first $N_0$ eigenfunctions of the shapes $X$ and $Y$, respectively. We suggest to find correspondence for each point $x \in X$ by minimizing the quasi-conformal distortion of Eq. \ref{eq:cnf_dist}, introducing the \mbox{\em Spectral Quasi-Conformal Maps}
\begin{eqnarray*}
	\hat{\varphi}(x)=\argmin_{y \in Y_x} \sum\limits_{i,j=1}^{N^0}  (\omega^X_{i,j}(x)-\omega^Y_{i,j}(y))^2 + (\nu^X_{i,j}(x) - \nu^Y_{i,j}(y))^2. \nonumber
\end{eqnarray*}

The correspondence $\hat{\varphi}(x)$ is chosen from a limited set of points $y\in Y_x$. 
The reduced subset of potential points $Y_x$ is obtained by comparing the point-wise signatures $\Psi^X$ to $\Psi^Y$.
In Subsection \ref{sec:mes} we explain how to obtain the point-wise signature $\Psi$. For each point $x \in X$ we find $\abs{Y_x}$ points $y \in Y$ that are closest to $x$ in sense of to $L^2$ distance $\norm{\Psi^X-\Psi^Y}_{L^2}$. We note that this simple preprocessing step boosts the performance of the quasi-conformal maps.

We point out, that the quantities $\omega^X_{i,j}$ and $\nu^X_{i,j}$ are meaningful only if we have sets of corresponding eigenfunctions. In the next section we describe how this condition is fulfilled by a robust eigenfunction matching technique.

%%%%%%%%%%%%%%%%%%%%%%%%%%%%%%%%%%%%%%%%%%%%%%%%%%%%%%%%%%%%%%
%							Robust eigenfunction matching										%
%%%%%%%%%%%%%%%%%%%%%%%%%%%%%%%%%%%%%%%%%%%%%%%%%%%%%%%%%%%%%%
\subsubsection{Robust eigenfunction matching}
\label{sec:ef_matching}
The need to optimize cost functions over all orthonormal pairs of eigenbases, arises regularly while evaluating spectral similarity of shapes, like the spectral embeding distance and the spectral quasi-conformal distance.
Computationally, this exhaustive search can be relaxed by an indirect approach that first finds the orthonormal eigenbasis $\Phi^Y$, that best corresponds to $\Phi^X$.

Given orthonormal eigenfunctions $\tilde{\Phi}^Y = \{\tilde{\phi}^Y_i\}$ of a generic shape (\ie one without repeated eigenvalues), we can derive the family of orthonormal eigenbases $\Phi^Y=\{s_i\tilde{\phi}^Y_{\pi_i}\}$ by modifying the signs $s_i \in \{+1, -1\}$ of the eigenfunctions $\tilde{\phi}_i^Y$, and reordering them by $\pi_i \in \mathbb{Z}_{> 0}$. For approximately isometric generic shapes $X$ and $Y$, we would like to find the optimal signs $\boldsymbol{s}=\{s_i\}$  and permutations $\boldsymbol{\pi}=\{\pi_i\}$ such that $\phi_i^X(x) \approx s_i\tilde{\phi}^Y_{\pi_i}(\varphi(x))$. To guarantee with high probability that we do not encounter eigenvalue multiplicity, we limit the search to a small number of compatible eigenfunctions. Next, we show how matching of the first $N^0$ eigenfunctions can be achieved, under the assumption that their respective eigenvalues are non-repeating.

\paragraph{Matching via high order statistics}
We adopt the approach presented in \cite{shtern2013hos} which showed that using {\em high order statistics} 
  can resolve the matching parameters. 
Here, we limit our method to comparing third order moments 
\begin{eqnarray*}
	\xi^X_{i,j,k} &\equiv& \int_X\phi_i\phi_j\phi_kda_X, \quad i,j,k \in \{1,2, \dots,N^0\}, \nonumber
\end{eqnarray*}
where $da_X$ is the area element of surface $X$.

Given two isometric shapes $X$ and $Y$, we can find the matching parameters $\boldsymbol{s}$ and $\boldsymbol{\pi}$ by minimizing the difference of the third order moments on the two shapes
\begin{eqnarray}
\label{eq:sign}
	\hat{\boldsymbol{s}}, \hat{\boldsymbol{\pi}} &=& \argmin\limits_{\boldsymbol{s},\boldsymbol{\pi}}\sum\limits_{i,j,k}(\xi^X_{i,j,k} - s_i s_j s_k\xi^Y_{\pi_i,\pi_j,\pi_k})^2.
\end{eqnarray}
This method considerably reduces the search space of possible corresponding eigenfunctions.
Alas, in the presence of intrinsic symmetry, the third order moments of the antisymmetric eigenfunctions are ambiguous and cannot be compared effectively.
Therefore, the signs of the antisymmetric eigenfunctions cannot be determined completely.
In the next section we remedy this situation, further evaluating the reduced search space of compatible eigenbases.

\paragraph{Matching via correspondence quality analysis}
The need to have a pair of compatible eigenbasis is of important for the initialization of the quasi-conformal maps procedure. After determining the permutation $\boldsymbol \pi$, and excluding some sign sequences, as described in the previous section, we further compare the eigenbases, by estimating the sign sequence $\boldsymbol s$, so that the eigenbasis $\Phi^Y(\boldsymbol s)=s_i\tilde{\phi}^Y_{\pi_i}$, best match the eigenbasis $\Phi^X$. Assuming for now, that we have a function $Q(\hat\varphi)$ which is maximized by the best mapping $\hat{\varphi}(\hat{\boldsymbol{s}})$, we can follow Algorithm \ref{alg:qa}.
\begin{algorithm}[H]
\begin{algorithmic}
\item \caption{\textbf{:} \textsc{\footnotesize Matching via correspondence quality analysis}}
\label{alg:qa}
\begin{itemize}
	\item For a possible sign sequence $\boldsymbol{s}$, estimate the mapping $\hat\varphi(\boldsymbol s)$ by applying a correspondence framework that uses $\boldsymbol{s}$ for initialization. 
	\item Evaluate the correspondence quality by the measure $Q(\hat{\varphi}(\boldsymbol s))$. 
	\item Search for the optimal sign sequence $\hat{\boldsymbol{s}}$ that maximizes $Q(\hat{\varphi}(\hat{\boldsymbol s}))$.
	\item Compute the compatible eigenbasis $\Phi^Y(\hat{\boldsymbol{s}})$.
\end{itemize}
\end{algorithmic}
\end{algorithm}
Next, we seek for a criterion $Q(\hat\varphi)$ that enfolds the consistency of the correspondence $\hat\varphi(\boldsymbol s)$.
For this purpose many heuristics can be considered. We have selected to maximize the {\em orientable area correlation}.

\paragraph{Orientable area correlation}
\label{app:triangle_transforms}
In this section we evaluate the quality of the correspondence $\varphi:X \mapsto Y$ from the point of view of area preservation criteria. Intuitively, for nearly isometric shapes and a good correspondence $\varphi$, the image of any triangle and the triangle itself should have approximately the same area.

Let $\tau^Y$  be a triangle with vertices $y_iy_j y_k$ on shape $Y$, having coordinates $V_{y_i}V_{y_j}V_{y_k}$ in $\mathbb{R}^3$.
The normal to the triangle ${\boldsymbol{n}}^Y(\tau^Y)$ can be estimated in two ways.
\begin{itemize}
	\item ${\boldsymbol{\hat{n}}}_1^Y(\tau^Y)$ - the cross product of the vectors $(V{y_j}-V{y_i})$ and $(V{y_k}-V{y_i})$
\begin{eqnarray*}
	\tilde{\boldsymbol{n}}^Y(\tau^Y) &\equiv& (V{y_j}-V{y_i}) \times (V{y_k}-V{y_i}),
	\cr {\boldsymbol{\hat{n}}}_1^Y(\tau^Y) &\equiv&{\tilde{\boldsymbol{n}}^Y(y) /\norm{\tilde{\boldsymbol{n}}^Y(y)}}. \end{eqnarray*}
	\item ${\boldsymbol{\hat{n}}}_2^Y(\tau^Y)$ - the average of the normals at the vertices
\begin{eqnarray*}
	{\boldsymbol{\hat{n}}}_2^Y(\tau^Y) &\equiv& \cfrac{1}{3}(\boldsymbol{n}^Y(y_i)+\boldsymbol{n}^Y(y_j)+\boldsymbol{n}^Y(y_k)), \nonumber
\end{eqnarray*}
	where $\boldsymbol{n}^Y(y)$ is the usual normal to the surface at point $y \in Y$.
\end {itemize}
We note that ${\boldsymbol{\hat{n}}}_1^Y(\tau^Y) \approx {\boldsymbol{\hat{n}}}_2^Y(\tau^Y)$ for a small enough triangle $\tau^Y$.

Let $T^X$ be a triangulation of the surface $X$. We order the vertices  $x_ix_j x_k$ of each triangle $\tau^X \in T^X$  in such a way that ${\boldsymbol{n}}_1^X(\tau^X)$ points outward. The triangle $\tau^X$ is mapped by $y_i=\varphi(x_i),y_j=\varphi(x_j),y_k=\varphi(x_k)$. We denote by $\tau_\varphi^Y$ the imaged triangle with vertices $y_iy_jy_k$.
For the special case where the shapes are isometric, and for a good map $\varphi(x)$, we expect that the area $\mathcal{A}(\tau^X)$ of an infinitesimal  triangle $\tau^X$ will be approximately equal to the area $\mathcal{A}(\tau_\varphi^Y)$ of its image $\tau_\varphi^Y$, and that ${\boldsymbol{\hat{n}}}_1^Y(\tau_\varphi^Y) \cdot {\boldsymbol{\hat{n}}}_2^Y(\tau_\varphi^Y) \approx 1$, as the two versions of the normal point to the same direction. In general, ${\boldsymbol{\hat{n}}}_1^Y(\tau_\varphi^Y)$ does not coincide with ${\boldsymbol{\hat{n}}}_2^Y(\tau_\varphi^Y)$, for example if the orientation is somehow flipped by the mapping $\varphi(x)$, then, the normals will have different directions and ${\boldsymbol{\hat{n}}}_1^Y(\tau_\varphi^Y) \cdot {\boldsymbol{\hat{n}}}_2^Y(\tau_\varphi^Y) \approx -1$. The normal  ${\boldsymbol{\hat{n}}}_1^Y(\tau_\varphi^Y)$ is in fact the image of the normal ${\boldsymbol{\hat{n}}}^X(\tau^X)$. 

The above discussion motivates us to measure the correlation between the area of $\tau^X$ oriented by ${\boldsymbol{\hat{n}}}_1^Y(\tau_\varphi^Y)$ to the area of $\tau_\varphi^Y$ oriented by ${\boldsymbol{\hat{n}}}_2^Y(\tau_\varphi^Y)$. We hereby define the {\em orientable area-preservation quality} $Q(\varphi)$ by

\begin{eqnarray}
\label{eq:oac}
	Q(\varphi)&\equiv& \cfrac{\sum\limits_{\tau^X \in T^X}
	 \big(\mathcal{A}(\tau^X){\boldsymbol{\hat{n}}}_1^Y(\tau_\varphi^Y)\big) \cdot \big(\mathcal{A}(\tau_\varphi^Y){\boldsymbol{\hat{n}}}_2^Y(\tau_\varphi^Y)\big)}
	{\sqrt{\sum\limits_{\tau^X \in T^X}{\mathcal{A}^2(\tau^X)}\sum\limits_{\tau^X \in T^X}{\mathcal{A}^2(\tau_\varphi^Y)}}}\,,
\end{eqnarray}
where the imaged triangle $\tau_\varphi^Y$ is constructed by the mapping $\varphi$ applied to the triangle $\tau^X \in T^X$.
%%%%%%%%%%%%%%%%%%%%%%%%%%%%%%%%%%%%%%%%%%%%%%%%%%%%%%%%%%%%%%
%							Matched Eigenbasis Signature										%
%%%%%%%%%%%%%%%%%%%%%%%%%%%%%%%%%%%%%%%%%%%%%%%%%%%%%%%%%%%%%%
\subsubsection{Matched eigenbasis signature}
\label{sec:mes}
As a preprocessing step for the spectral quasi-conformal maps computation, we would like to find an informative signature $\Psi$ that can be compared between the two shapes. Having a matched pair of eigenbases comprised of $N^0$ compatible eigenfunctions, shapes can be compared by a mix of global and local quantities.
\begin{enumerate}
	\item The matched $N^0$ low order eigenfunctions $\phi_i$ that capture the global structure of the shapes.
	\item The pseudo {\em Wave Kernel Signature} \cite{aubry2011wave} bandpass filters $\tilde\gamma_{t}(x) \equiv \sum\limits_{i=1}^N \lambda_i e^{-\lambda_i t}\phi^2_i(x)$, controlled by the parameter $t$, 
	     that better describe local features \cite{shtern2013hos}.
\end{enumerate}
Thus, we construct the matched eigenbasis signature
\begin{eqnarray*}
	\Psi(x) &\equiv& \{\phi_1(x),\phi_2(x),\cdots,\phi_{N_0}(x),
		\cr && {\gamma}_{t_1}(x),{\gamma}_{t_2}(x),\cdots,{\gamma}_{t_B}(x)\}, \nonumber
\end{eqnarray*}
combining together the $N^0$  low order eigenfunctions with the normalized bandpass filters 
\begin{eqnarray*}
{\gamma}_{t}(x) &\equiv& \frac{\tilde\gamma_{t}(x)}{\sqrt{\int_X{\tilde\gamma^2_{t}(x)da_X}}}, \nonumber
\end{eqnarray*}
sampled $B$ times at $t=\{t_1,t_2,\dots,t_B\}$.

%%%%%%%%%%%%%%%%%%%%%%%%%%%%%%%%%%%%%%%%%%%%%%%%%%%%%%%%%%%%%%
%							Correspondence framework 										%
%%%%%%%%%%%%%%%%%%%%%%%%%%%%%%%%%%%%%%%%%%%%%%%%%%%%%%%%%%%%%%
\section {Correspondence framework}
\label{sec:corr_frm}
In this section we convert the observations outlined in the previous sections into a complete and applicable framework for correspondence detection.
The framework consists of the following sequential steps.
\begin{itemize}
\item Initialize coarse correspondence, using spectral quasi-conformal maps, see Section \ref{sec:conf}. 
\item Refine correspondence by applying the spectral kernel maps algorithm, see Section \ref{sec:skm}.
\item Dense correspondence is achieved by the functional spectral kernel maps with off-diagonal penalty, see Section \ref{sec:fskm}.
\end{itemize}
We note that a matched pair of eigenbases is found by using a combination of high order 
 statistics and correspondence quality analysis, see Section \ref{sec:ef_matching}.

%%%%%%%%%%%%%%%%%%%%%%%%%%%%%%%%%%%%%%%%%%%%%%%%%%%%%%%%%%%%%%
%									Experimental Results									%
%%%%%%%%%%%%%%%%%%%%%%%%%%%%%%%%%%%%%%%%%%%%%%%%%%%%%%%%%%%%%%

\subsection{Implementation}
\label{sec:implementation}
The proposed correspondence framework is fully automatic.
As such, in all our experiments we used the same choice of parameters.
In Section \ref{sec:gps} we explained some interesting properties of the GPS kernel. Our empirical evidence suggests that the GPS kernel, that is the normalization of the eigenfunctions $\phi_i$ by the factor ${(\sqrt{\lambda_i})}^{-1}$, provides superior qualities for correspondence detection, compared to other kernels we tested. In general, we chose our parameters for achieving the most accurate results in a reasonable time. To that end, we used $N=120$ eigenfunctions of the Laplace-Beltrami operator.

For analysis, we break down the correspondence framework into different modules, with the following parameters.
\begin{itemize}
	\item Robust eigenfunction matching - see Section \ref{sec:ef_matching} , with the size of the triangulation set to $\abs{T^X} = 2000$. Mesh simplification is performed by the QSlim software \cite{garland1997qslim}.
	\item Matched eigenbasis signature - see Section \ref{sec:mes}, $N_0 = 6$ matched eigenfunctions and $B=6$ logarithmically sampled bandpass filters, with $t_1=\cfrac{1}{50\lambda_1}$ and $t_B = \cfrac{1}{\lambda_1}$. 
	\item Spectral quasi-conformal maps - see Section \ref{sec:conf}, with $N_0 = 6$ matched eigenfunctions, and $|Y_x|=100$.
	\item Spectral kernel maps - see Section \ref{sec:skm}, $K$ is the GPS kernel, $M=1000$ points and $L=15$ iterations. $P(x)$ uniformly selects points from the mesh.
	\item Functional spectral kernel maps - see Section \ref{sec:skm}, $K$ is the GPS kernel, $M=2000$ points and $L=15$ iterations. The off-diagonal penalty $W_{ij} = \cfrac{\abs{\lambda^Y_i-\lambda^{X}_j}}{\lambda^X_j} \,U_{i}$ was set to be proportional to the difference of the eigenvalues $\lambda^Y_i$ and $\lambda^X_i$, and scaled by the $i_\text{th}$ entry of the diagonal $U = \mbox{diag}((A^Y)^T A^Y)$, the tuning parameter $\beta$ was set to $0.1$.
\end{itemize}

\subsection{Results}
\label{sec:results}

We tested the proposed method on pairs of shapes represented by triangulated meshes from both the TOSCA  database \cite{bronstein2008numerical} and from the SCAPE database
 \cite{anguelov2005correlated}.
The TOSCA dataset contains densely sampled synthetic human and animal surfaces,
 divided into several classes with given ground-truth point-to-point correspondences between the shapes within each class.
 The SCAPE dataset contains scans of real human bodies in different poses.
We compare our work to several correspondence detection  methods.
\begin{itemize}
	\item {Spectral Kernel Maps} - the method proposed in this paper.
	\item {Functional Maps + Blended} (TOSCA only) - the functional maps based post-process iterative refinement algorithm. We use the results shown in \cite{ovsjanikov2012functional}. There, the post-process procedure refines the correspondence provided by the Blended method \cite{kim2011blended}.
	\item {Blended} - the method proposed by Kim \etal that uses a weighted combination of isometric maps \cite{kim2011blended}.
	\item {M{\"o}bius Voting} - the method proposed by Lipman \etal counts {\it votes} on the conformal M{\"o}bius transformations \cite{lipman2009mobius}.
	\item {Permuted Sparse Coding} + MSER (SCAPE only) - the approach proposed by Pokrass \etal finds correspondence by using methods from the field of sparse modeling. We note that this method depends on the ability to detect repeatable regions between shapes. We use the results shown in \cite{pokrass2012sparse}. There, maximally stable extremal regions (MSER) are used as a preprocessing step \cite{litman2011diffusion}.
	\item {Functional Maps + MSER} (SCAPE only) - the functional maps based post-process iterative refinement algorithm. We use the results shown in  \cite{pokrass2012sparse}.
\end{itemize}

Figure \ref{fig:TOSCA_Correspondences} compares our correspondence framework with existing methods on the TOSCA benchmark, using the evaluation protocol proposed in \cite{kim2011blended}. 
The distortion curves describe the percentage of surface points falling within
a relative geodesic distance from what is assumed to be their true locations. 
For each shape, the geodesic distance is normalized by the square root of the shape's area.
As is evident from the benchmark the proposed method significantly outperforms existing ones.

\begin{figure}[th]
	\center{\bf TOSCA correspondence}
	\begin{center}
	\begin{overpic}[width=1.0\columnwidth]{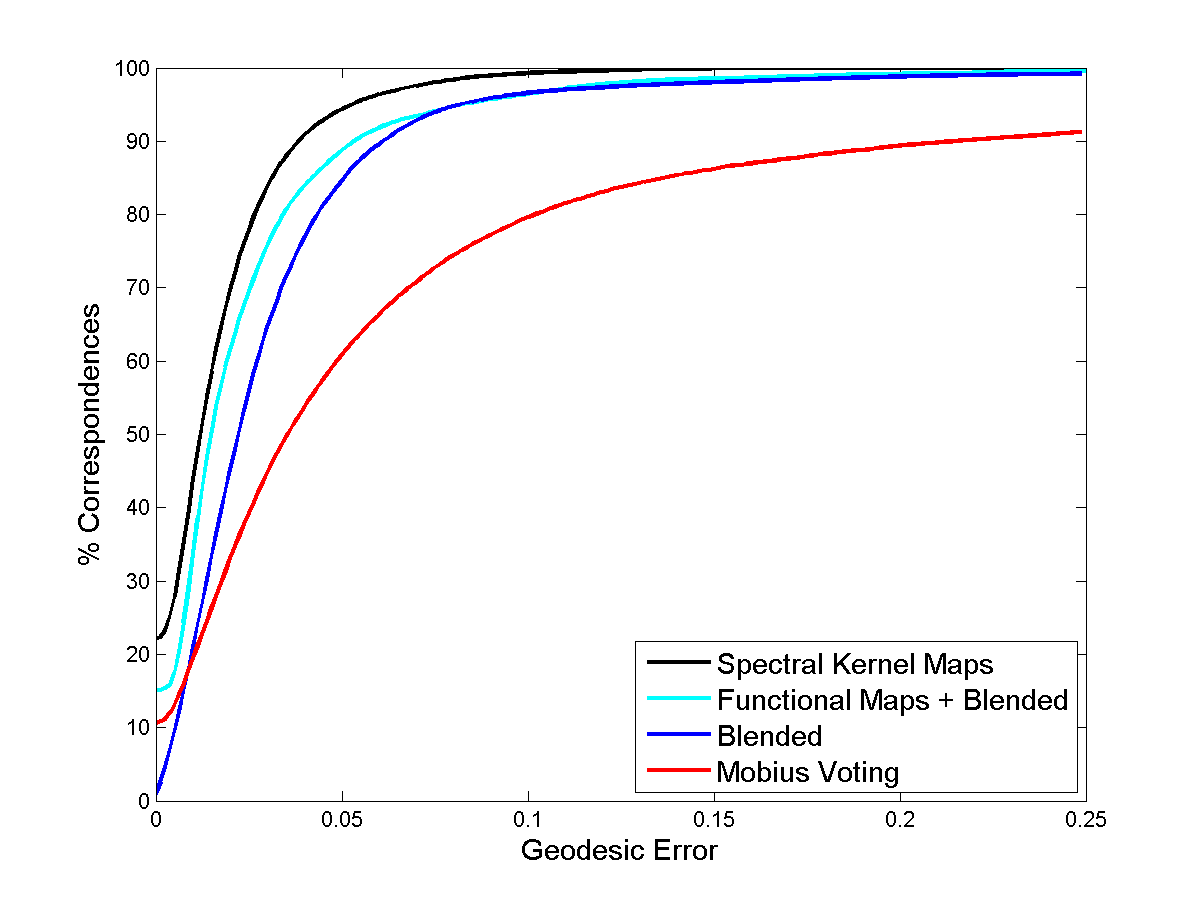}
	\end{overpic}
	\end{center}
	\caption{\small Evaluation of the spectral kernel maps algorithm applied to shapes from the 
	                TOSCA database, using the protocol of \cite{kim2011blended}.}
	\label{fig:TOSCA_Correspondences}
\end{figure}

Figure \ref{fig:SCAPE_Correspondences} compares the proposed correspondence framework with existing methods on the SCAPE database, 
again using the evaluation protocol proposed in \cite{kim2011blended}, allowing symmetries. 
Remark: In the evaluation, we do not allow per-point symmetry selection (as other methods do), but rather the correct symmetry is automatically chosen for the shape as a whole.

\begin{figure}[th]
	\center{{\bf SCAPE correspondence} \\ (allow symmetries)}
	\begin{center}
	\begin{overpic}[width=1.0\columnwidth]{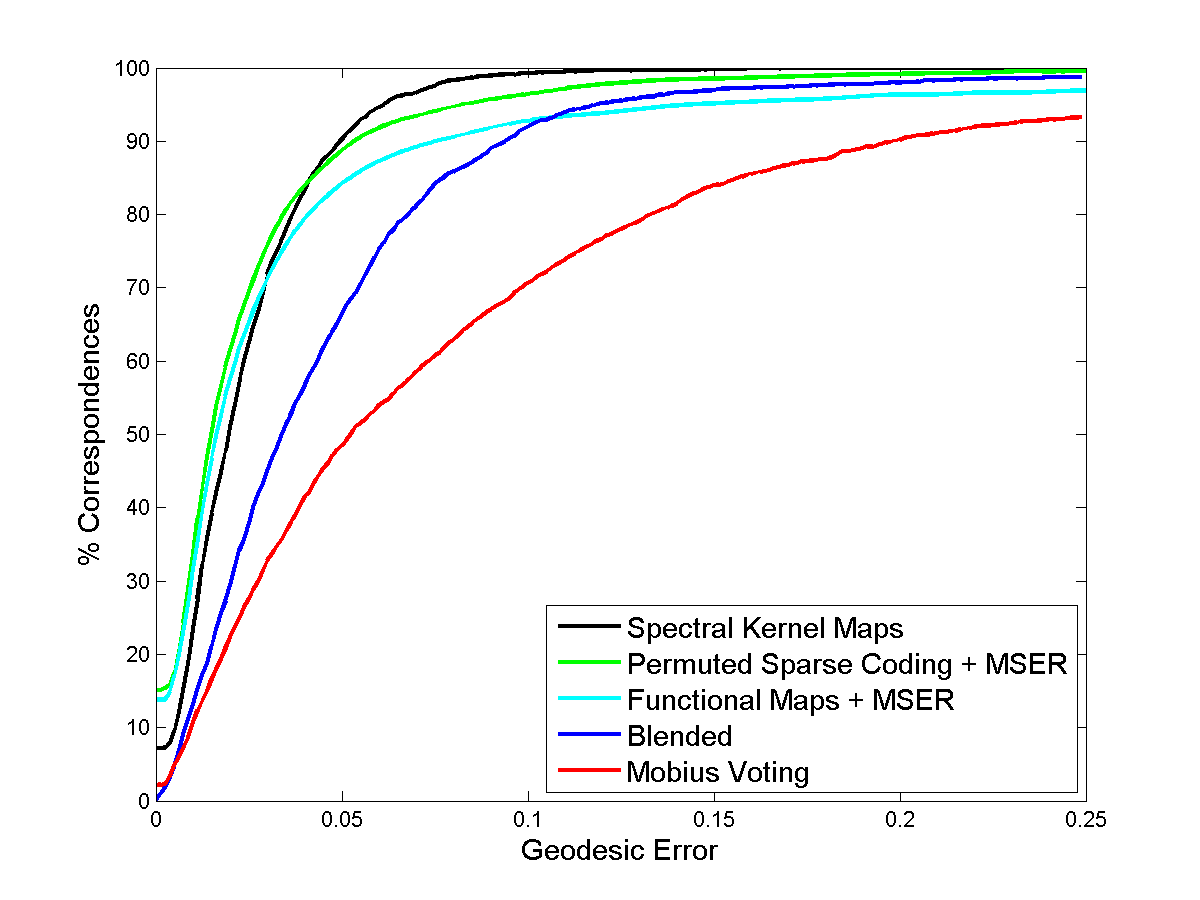}
	\end{overpic}
	\end{center}
	\caption{\small Evaluation of the spectral kernel maps algorithm applied to shapes from the SCAPE database, 
	             using the protocol of \cite{kim2011blended} with allowed symmetries.}
	\label{fig:SCAPE_Correspondences}
\end{figure}

\begin{table}[H]
\centering
\begin{tabular}{|c|c c c c c |} 
\hline 
Geodesic Error & 0.025  & 0.050  & 0.100  & 0.150  & 0.200 \\ 
\hline 
Spectral Kernel Maps&  77.9 &  94.4 &  99.3 &  99.9 & 100.0 \\
F. Maps + Blended&  69.5 &  88.7 &  96.4 &  98.5 &  99.2 \\
Blended&  55.9 &  84.7 &  96.6 &  98.0 &  98.8 \\
M{\"o}bius Voting&  39.3 &  60.9 &  79.6 &  86.2 &  89.4 \\ 
\hline 
\end{tabular}
\caption{Percentage of surface points falling within a relative geodesic error for different methods.} 
\label{tbl:ge}
\end{table}
Table \ref{tbl:ge} displays the percentage of correspondences that fall within different values of relative geodesic distances. It is interesting to focus on large geodesic errors.
Unlike other methods, in the proposed approach more than 99\% of the correspondences have significantly smaller geodesic error of less then than 0.1.

Next, we analyze the contribution of each component of the proposed framework.
Figure \ref{fig:TOSCA_MODULE_Correspondences} compares different combinations of the modules of the correspondence framework. We notice that the functional spectral kernel maps part is the most dominant module. Still, it needs a good starting point that is provided by the spectral quasi-conformal maps.

\begin{figure}[th]
	\center{{\bf Spectral kernel maps module analysis} \\ (TOSCA)}
	\begin{center}
	\begin{overpic}[width=1.0\columnwidth]{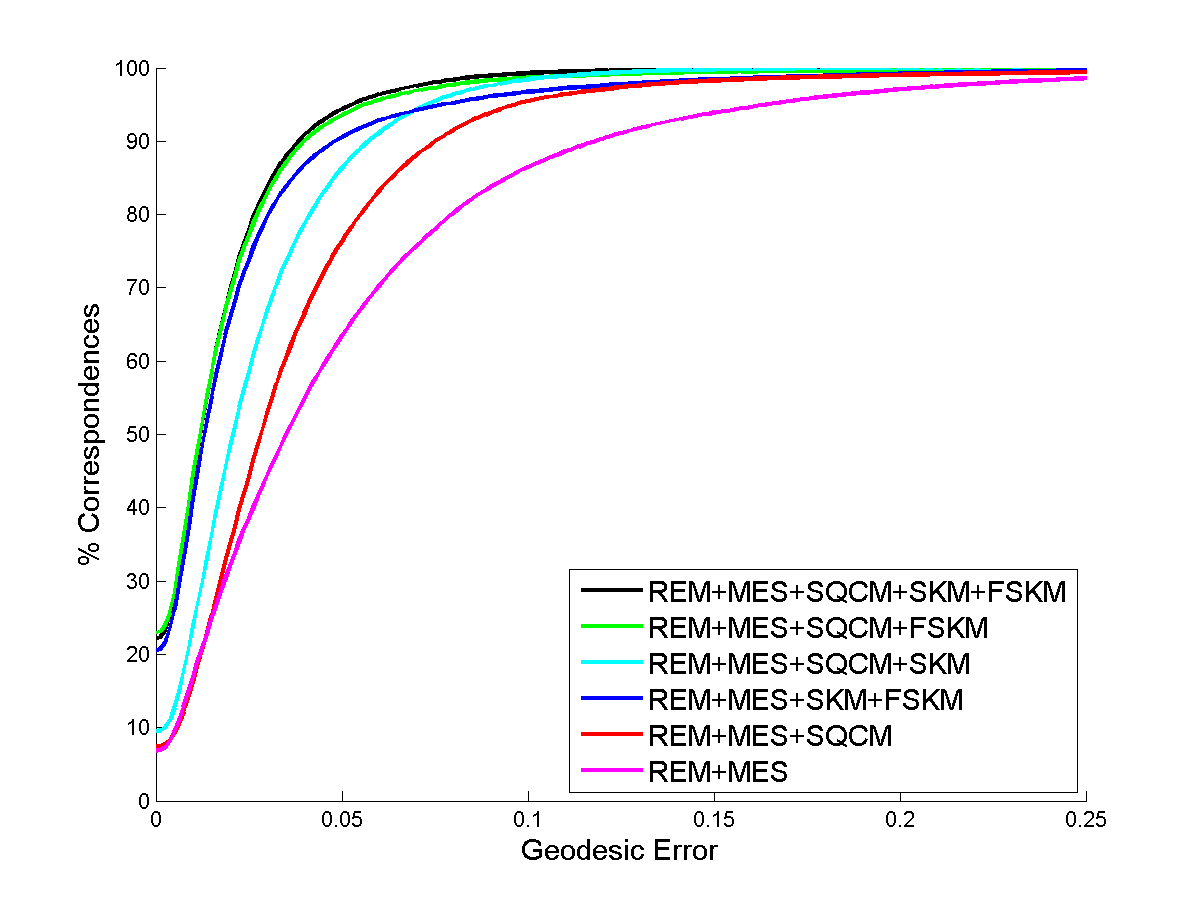}
	\end{overpic}
	\end{center}
	\caption{\small Evaluation of different combinations of the modules of the correspondence framework applied to shapes from the TOSCA database. The modules: Robust Eigenfunction Matching (REM), Matched Eigenbasis Signature (MES),
		Spectral Quasi-Conformal Maps (SQCM), Spectral Kernel Maps (SKM), Functional Spectral Kernel Maps (FSKM).}
	\label{fig:TOSCA_MODULE_Correspondences}
\end{figure}

We illustrate how our method is able to find the intrinsic reflective symmetry axis of nonrigid shapes.
Intrinsic symmetry detection can be viewed as finding correspondence from a shape to itself \cite{raviv2007symmetries} . Following this approach, we simply map a shape to itself and select the sign sequence that flips the orientation of the shape by maximizing $-Q(\varphi(\boldsymbol s))$.
In Figure \ref{fig:symmetry} we visualize the distance between a point and its image for several shapes from the TOSCA database.

\begin{figure}[th]
	\begin{center}
	\begin{overpic}[width=1.0\columnwidth]{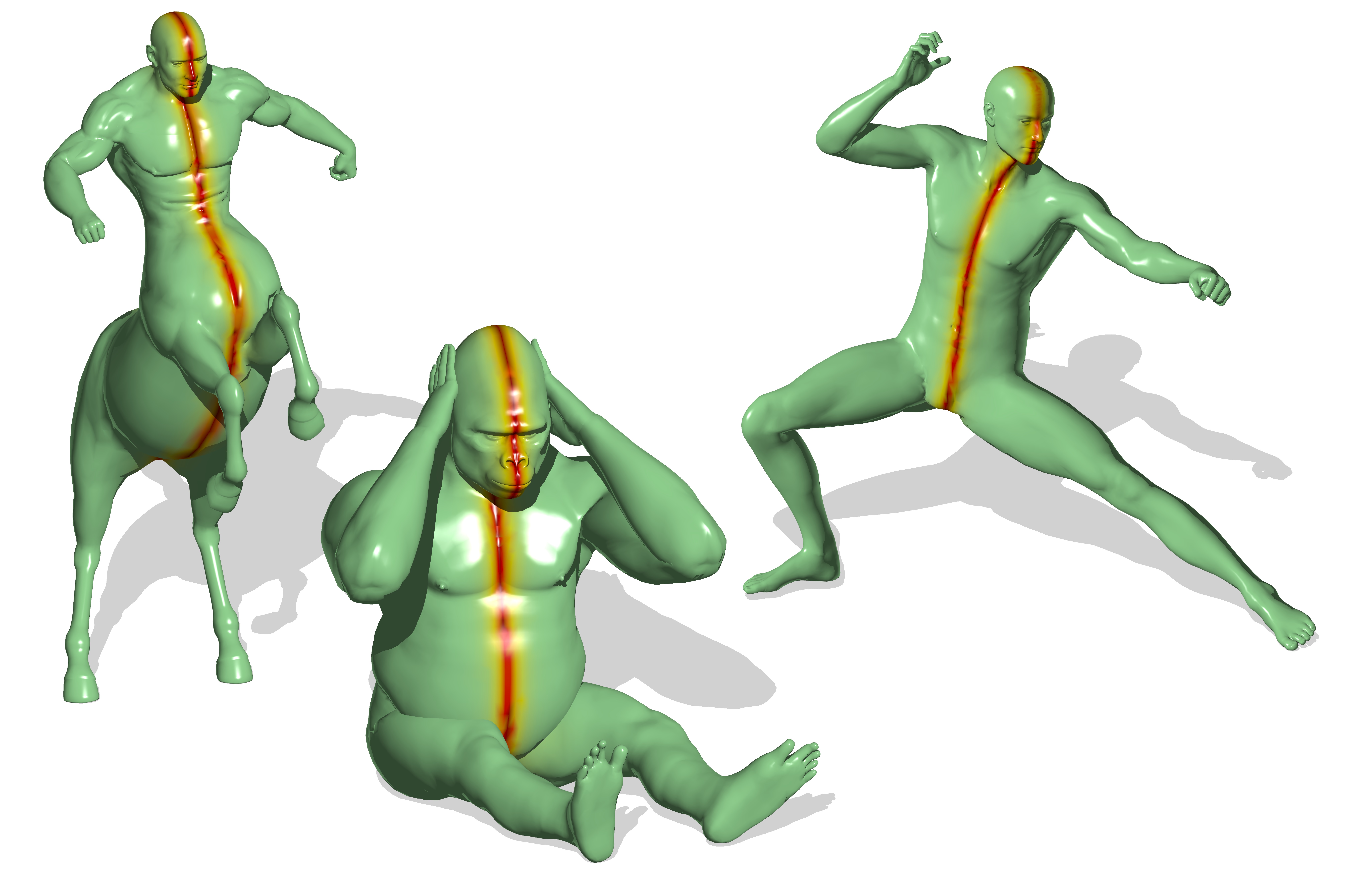}
	\end{overpic}
	\end{center}
	\caption{\small Symmetry axis of several shapes from the TOSCA database.}
	\label{fig:symmetry}
\end{figure}

Finally, we demonstrate the proposed approach in texture transfer experiments on shapes of scanned human faces taken from the BFM database \cite{paysan20093d}. The intrinsic correspondence gives us the possibility
to perform texture manipulation and mapping \cite{zigelman2002texture,Bronstein07calculusof}.
In Figure \ref{fig:texture_transfer}, we used the maps obtained by the proposed correspondence framework to transfer the textures of the reference shapes to the target shapes.
Figure \ref{fig:texture_morph} shows gradual morphing from one shape (left) to another shape (right), obtained
by linear interpolation of the extrinsic geometry and the texture, for different values of the interpolation factor $\gamma$ \cite{Bronstein07calculusof}.
Figure \ref{fig:face_math} depicts how the texture from the two middle human faces are transferred to the intrinsically symmetric halves of the target face.

\begin{figure*}[tp]
\begin{center}
\begin{overpic}[width=0.15\columnwidth]{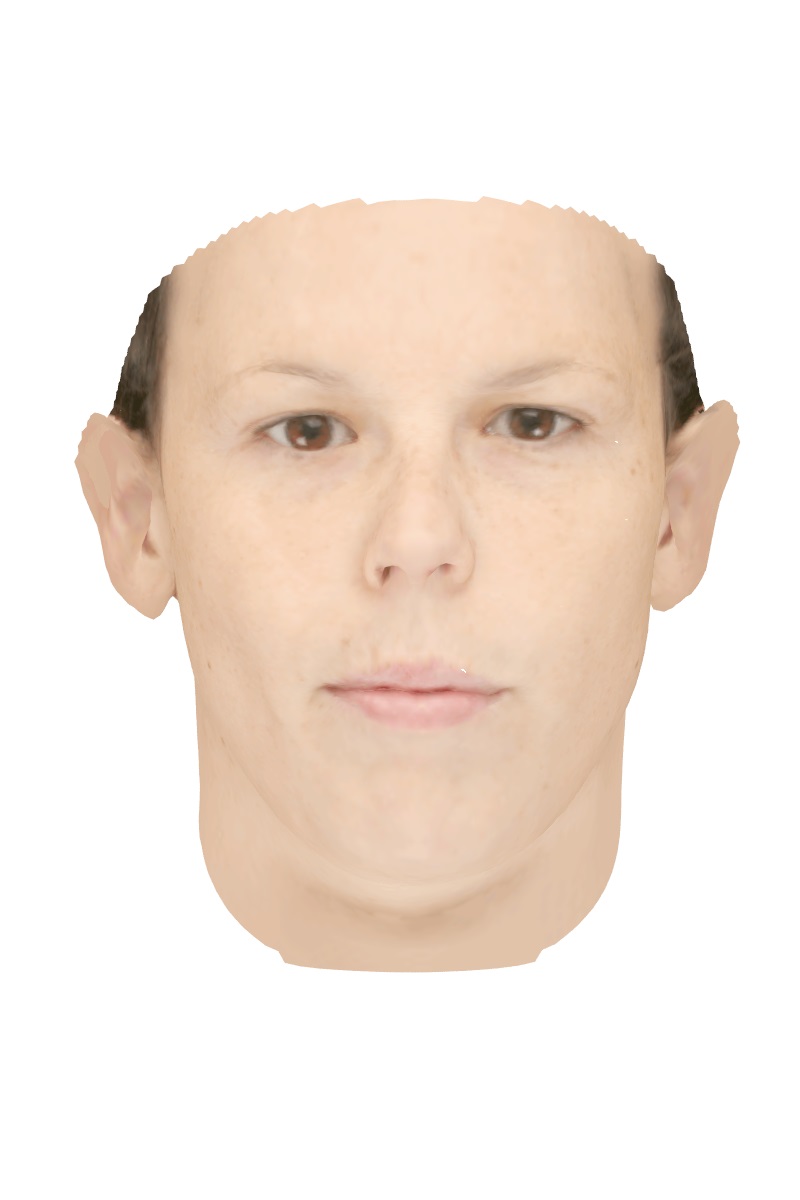}\end{overpic}
\hspace{-.25cm}
\begin{overpic}[width=0.15\columnwidth]{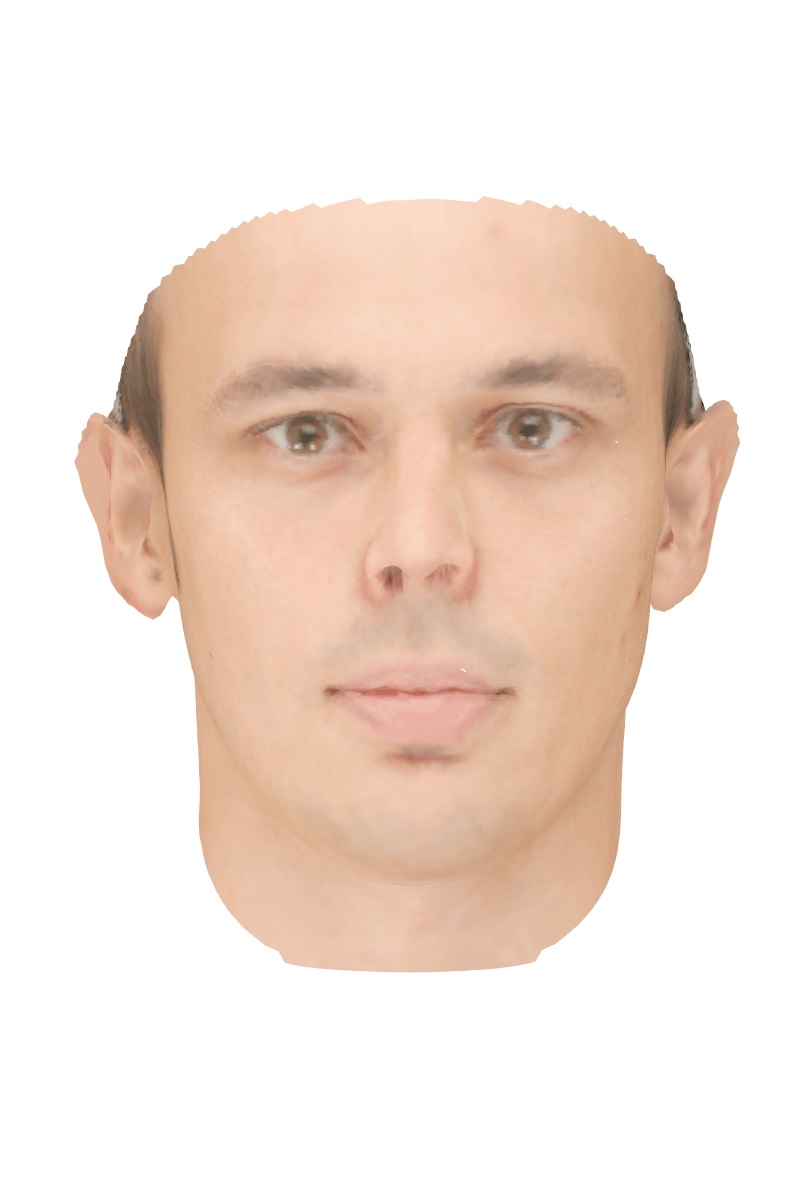}\end{overpic}
\hspace{-.25cm}
\begin{overpic}[width=0.15\columnwidth]{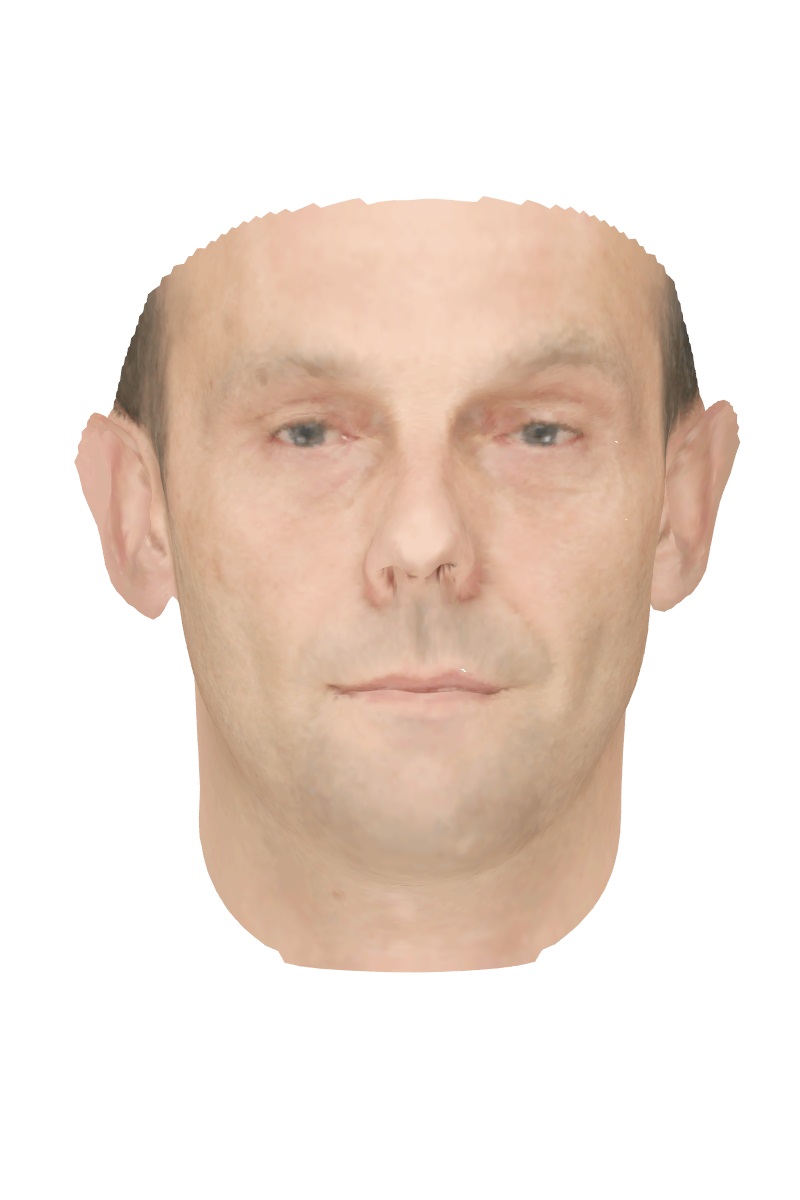}\end{overpic}
\hspace{0.5cm}
\begin{overpic}[width=0.15\columnwidth]{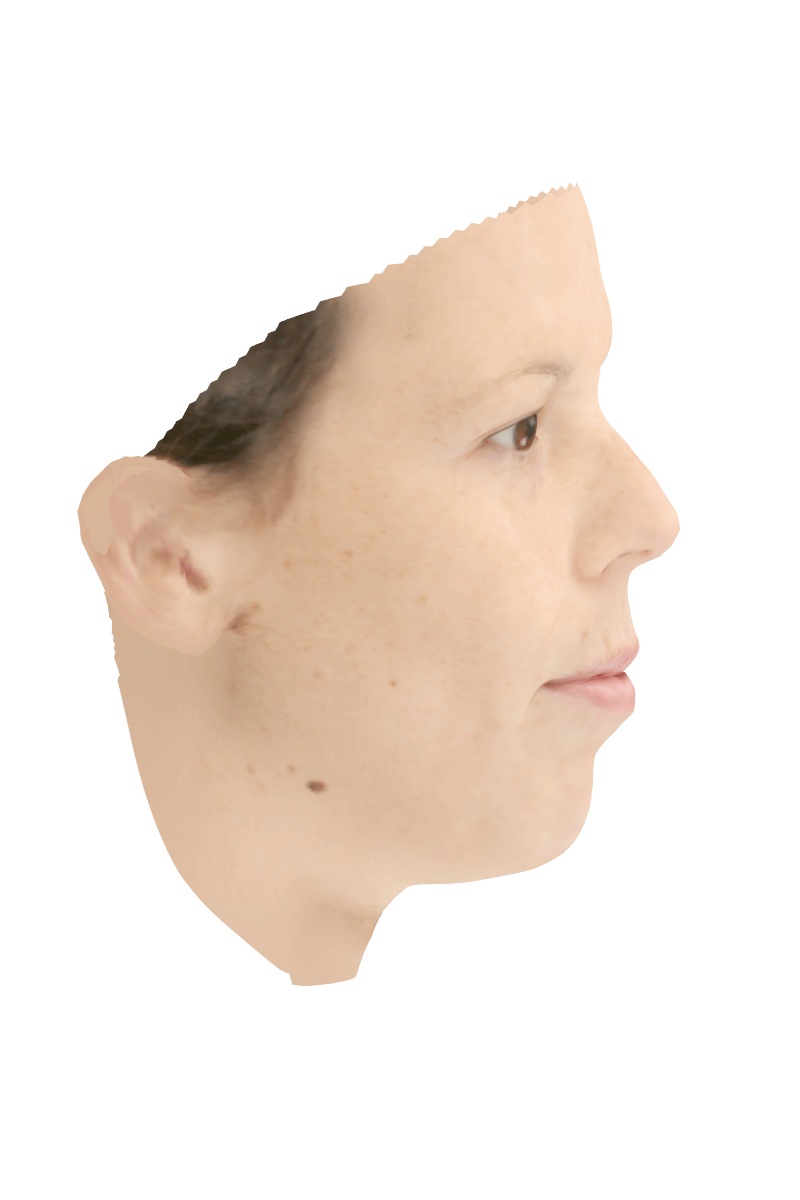}\end{overpic}
\hspace{-.25cm}
\begin{overpic}[width=0.15\columnwidth]{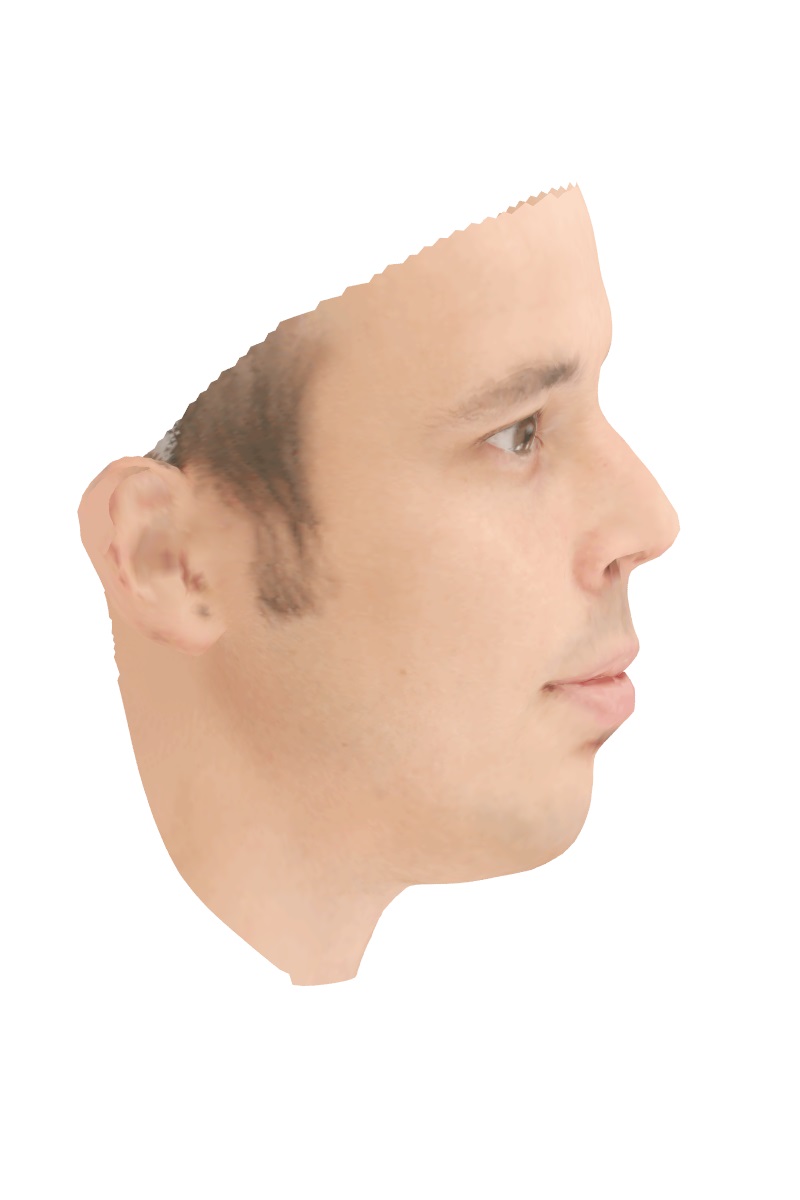}\end{overpic}
\hspace{-.25cm}
\begin{overpic}[width=0.15\columnwidth]{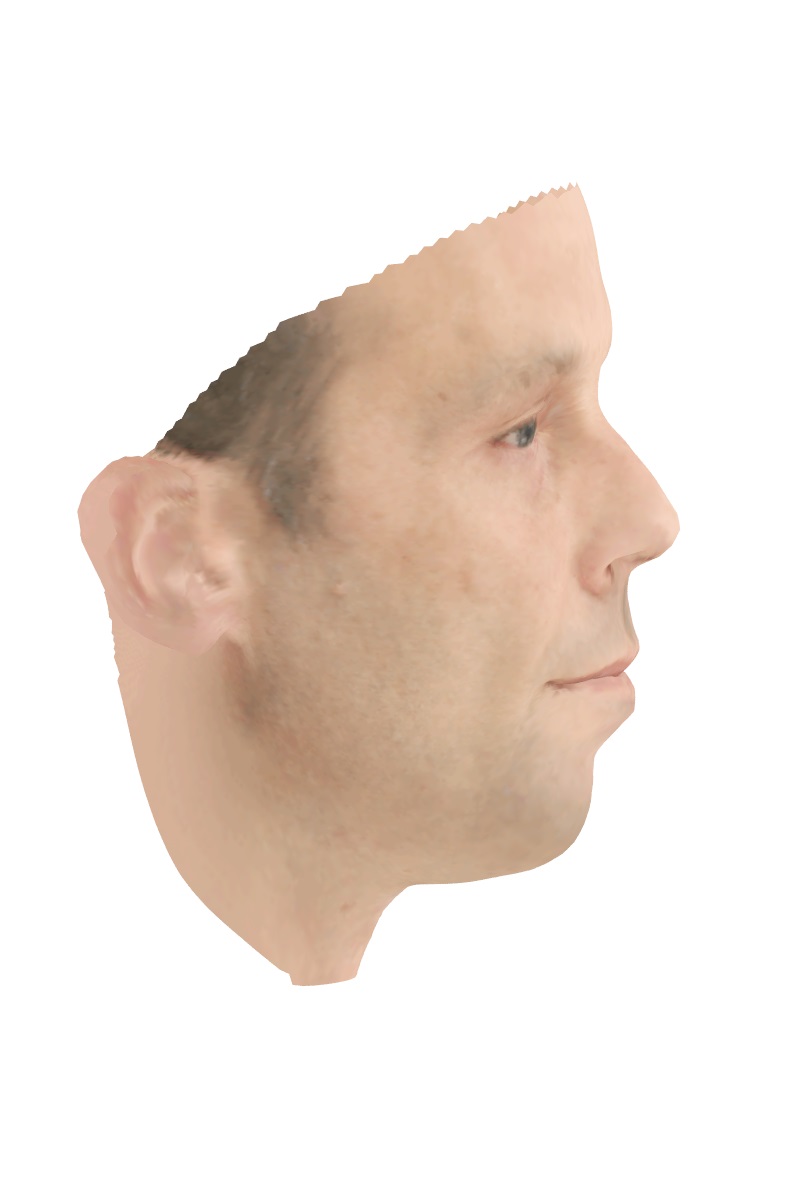}\end{overpic}
\end{center}
\vspace{-1.2cm}
\begin{center}
\begin{overpic}[width=0.15\columnwidth]{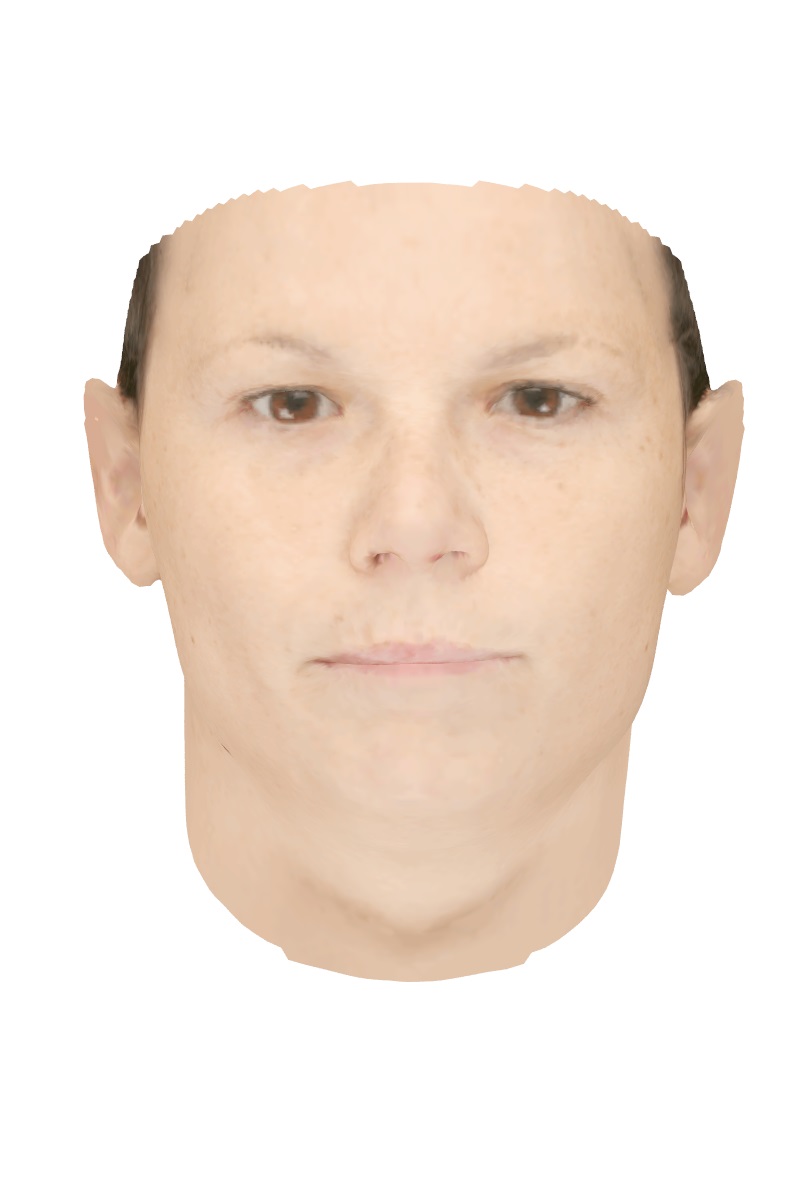}\end{overpic}
\hspace{-.25cm}
\begin{overpic}[width=0.15\columnwidth]{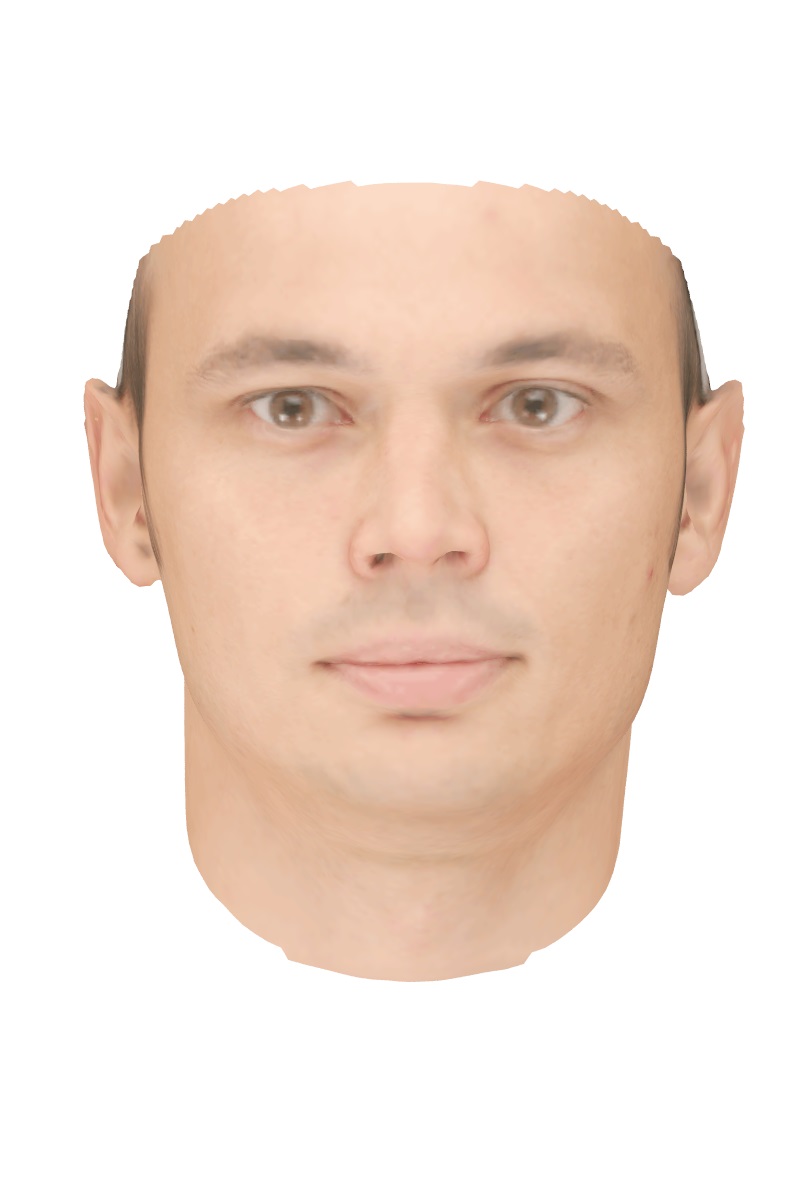}\end{overpic}
\hspace{-.25cm}
\begin{overpic}[width=0.15\columnwidth]{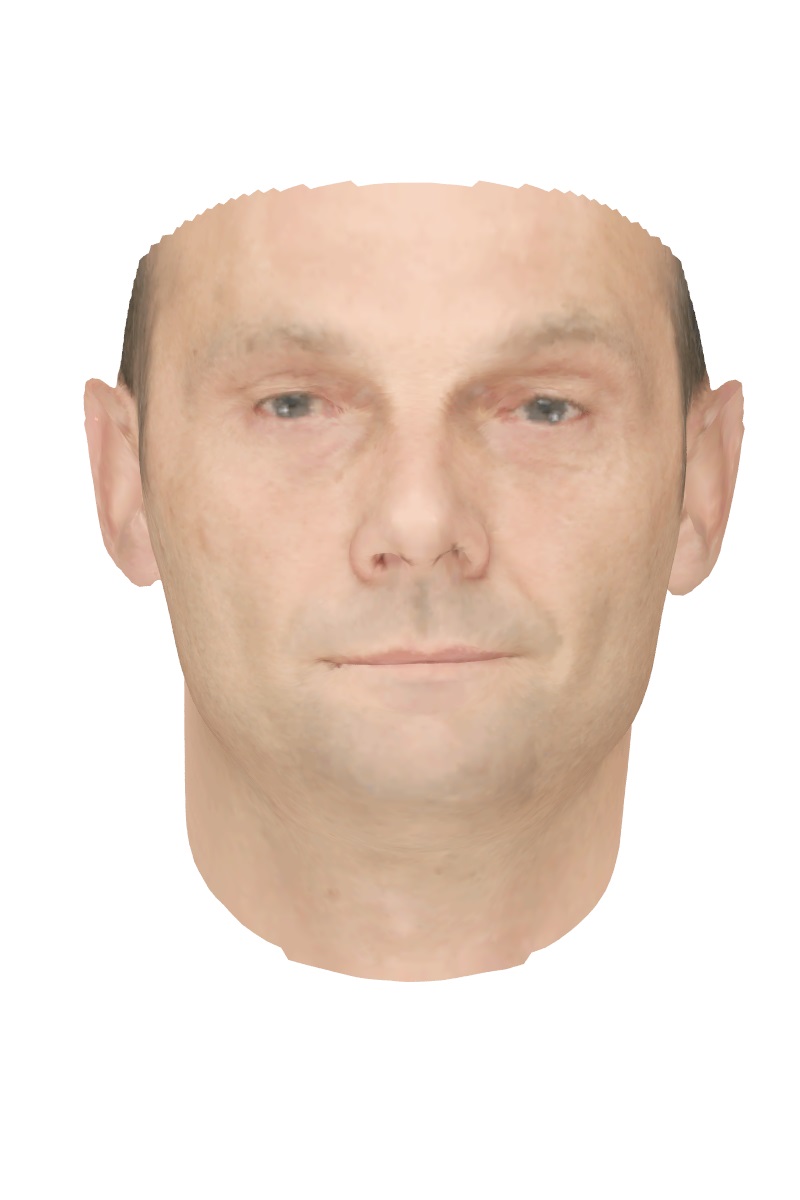}\end{overpic}
\hspace{0.5cm}
\begin{overpic}[width=0.15\columnwidth]{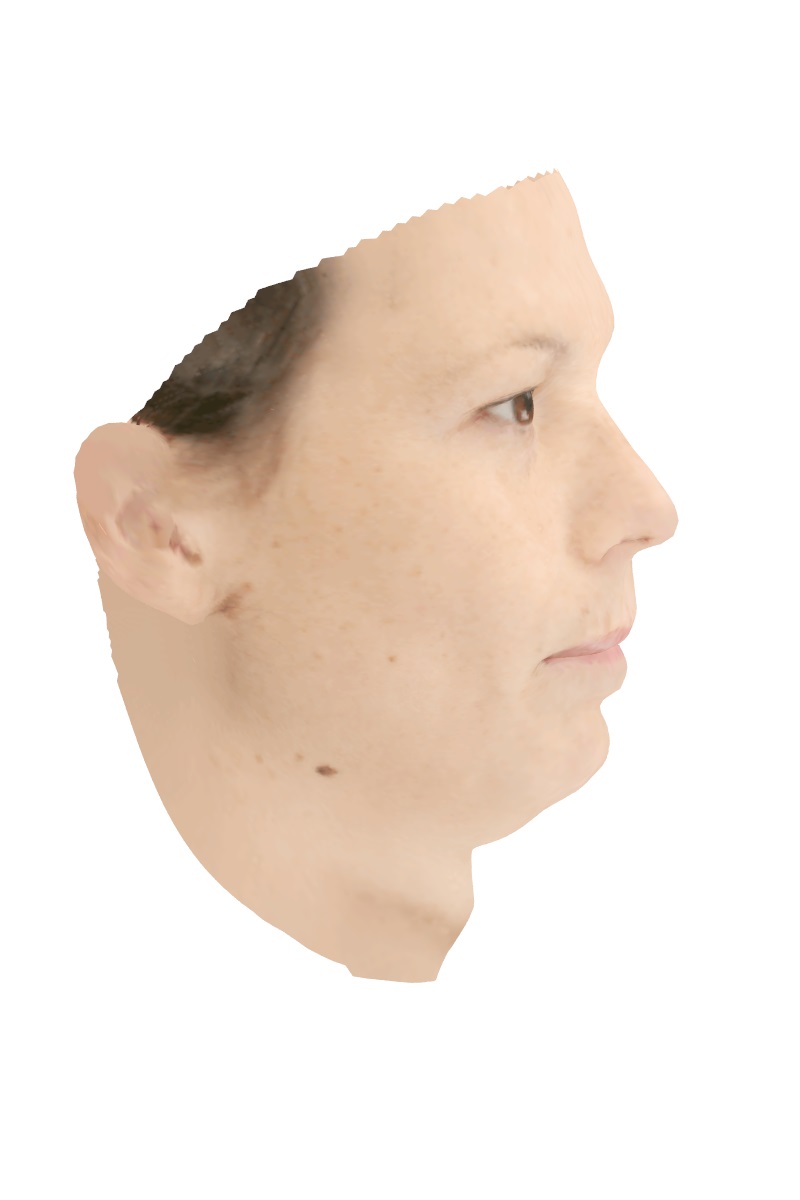}\end{overpic}
\hspace{-.25cm}
\begin{overpic}[width=0.15\columnwidth]{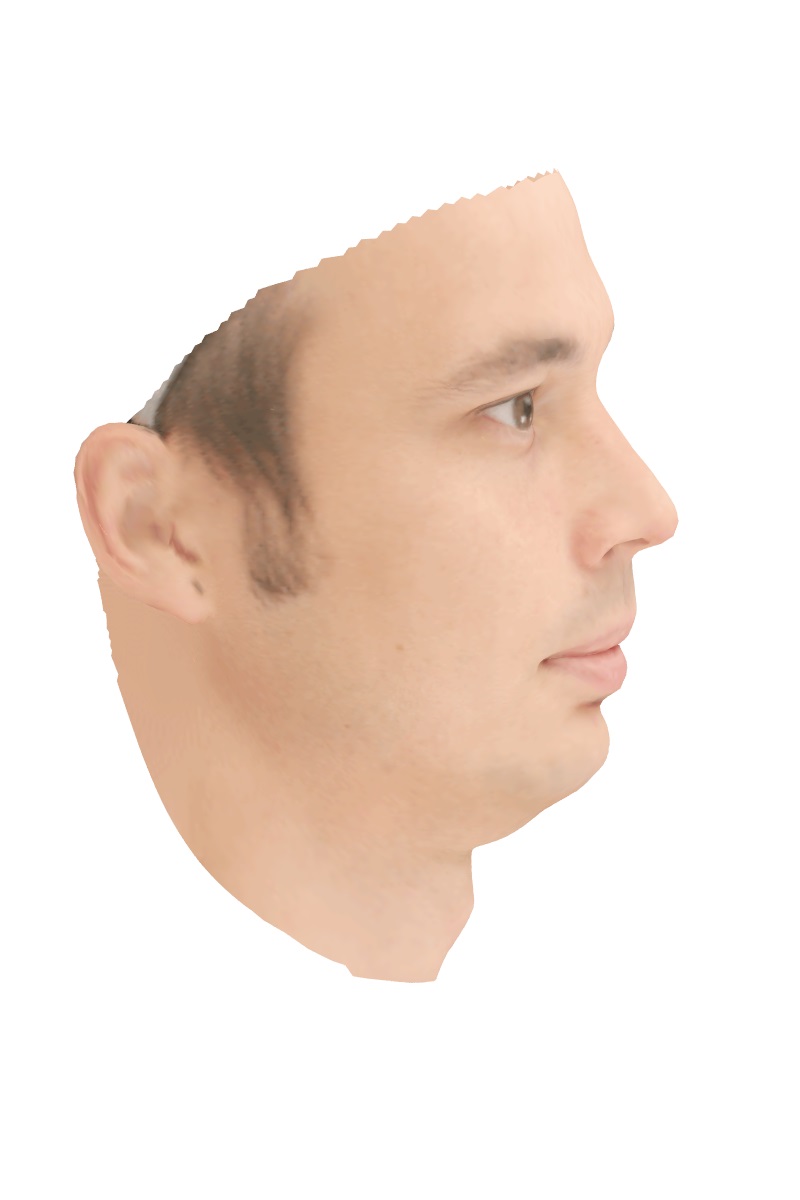}\end{overpic}
\hspace{-.25cm}
\begin{overpic}[width=0.15\columnwidth]{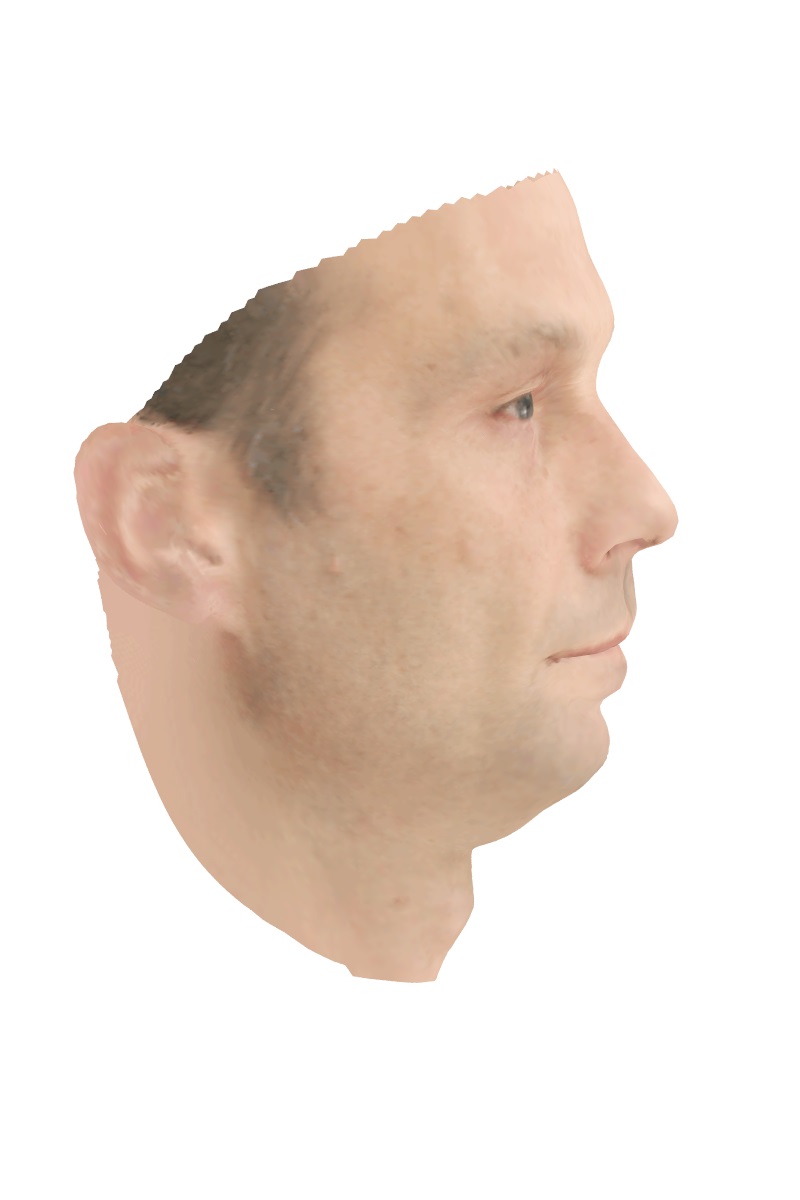}\end{overpic}
\end{center}
\vspace{-1.2cm}
\begin{center}
\begin{overpic}[width=0.15\columnwidth]{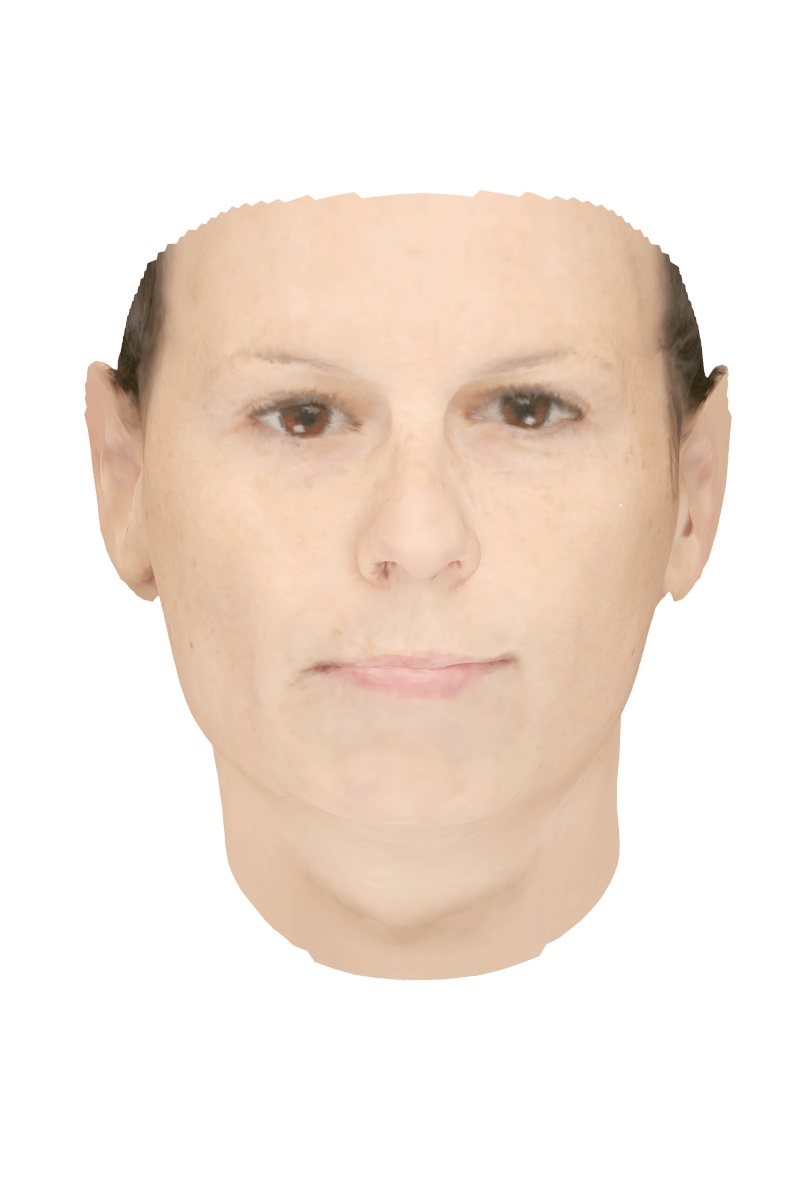}\end{overpic}
\hspace{-.25cm}
\begin{overpic}[width=0.15\columnwidth]{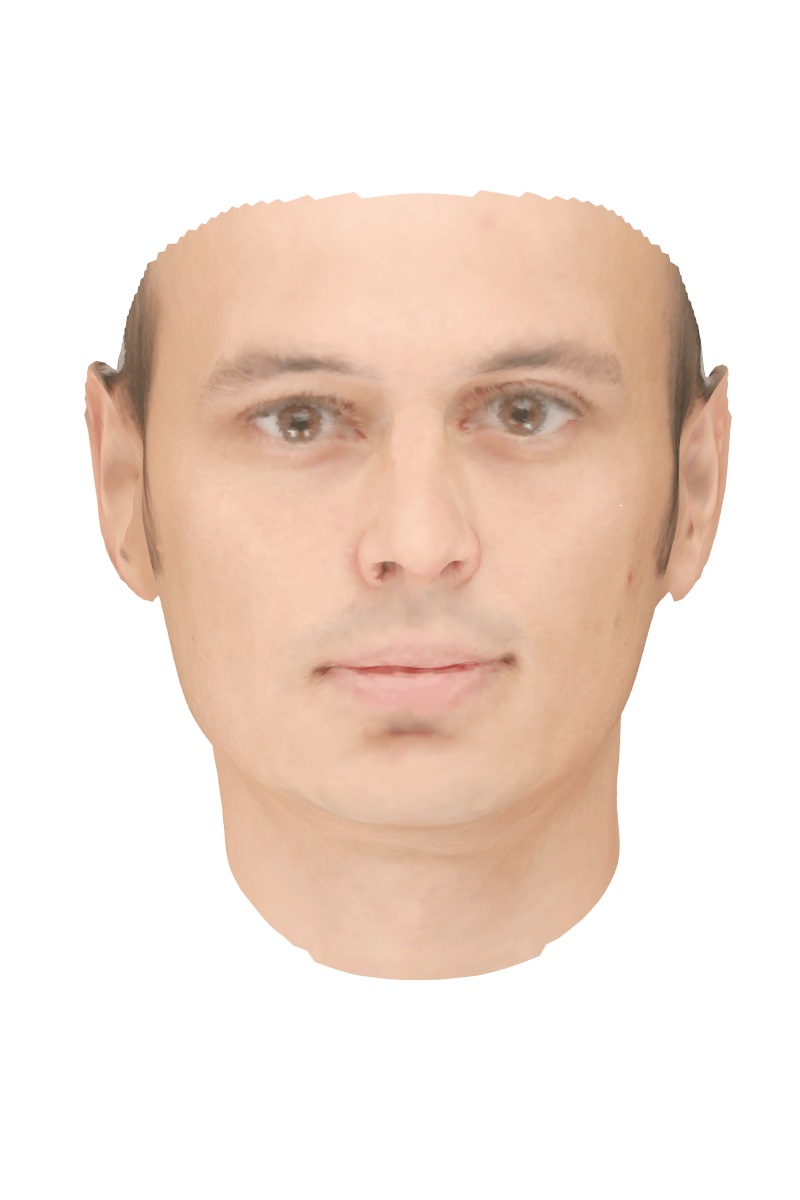}\end{overpic}
\hspace{-.25cm}
\begin{overpic}[width=0.15\columnwidth]{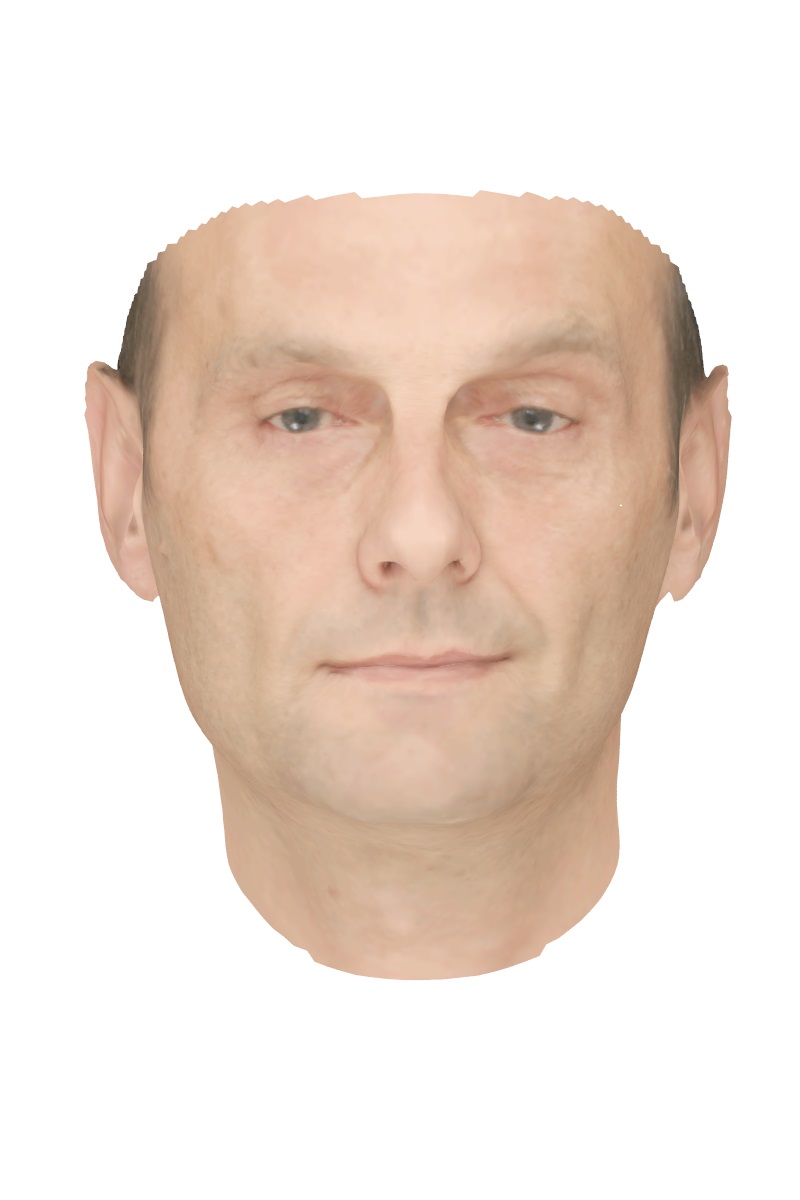}\end{overpic}
\hspace{0.5cm}
\begin{overpic}[width=0.15\columnwidth]{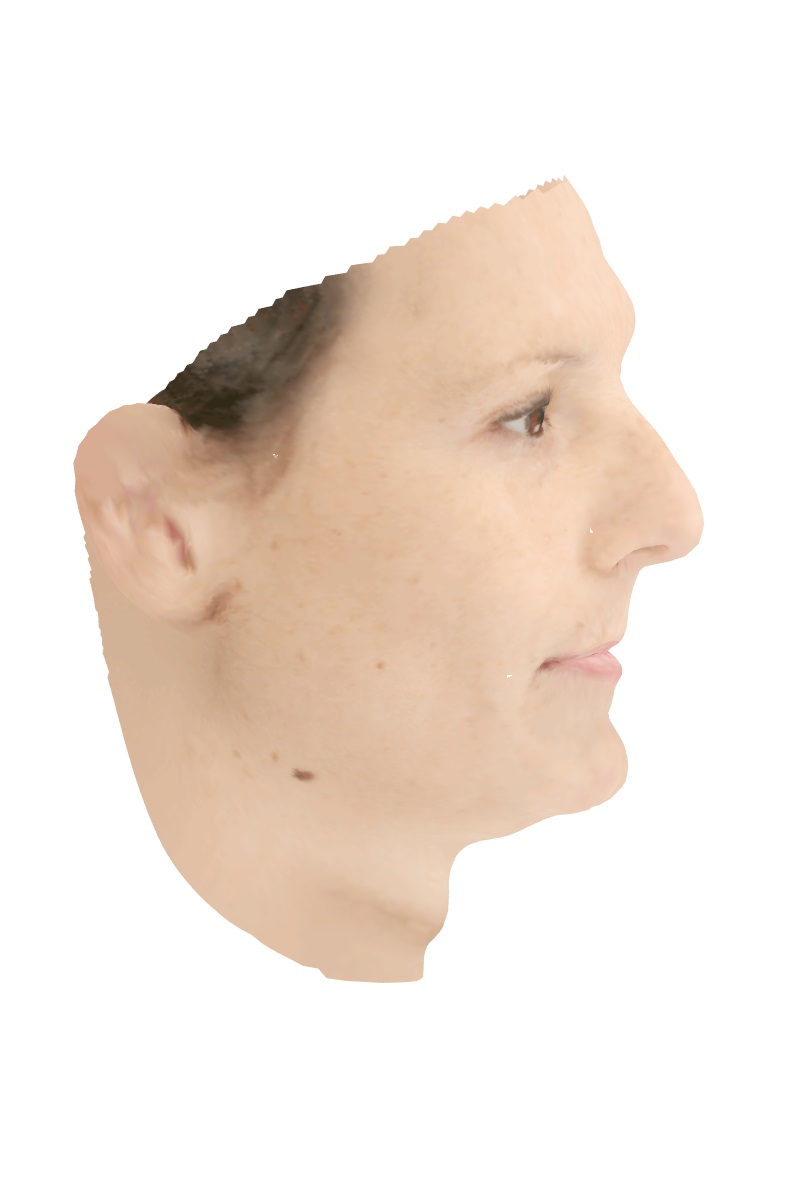}\end{overpic}
\hspace{-.25cm}
\begin{overpic}[width=0.15\columnwidth]{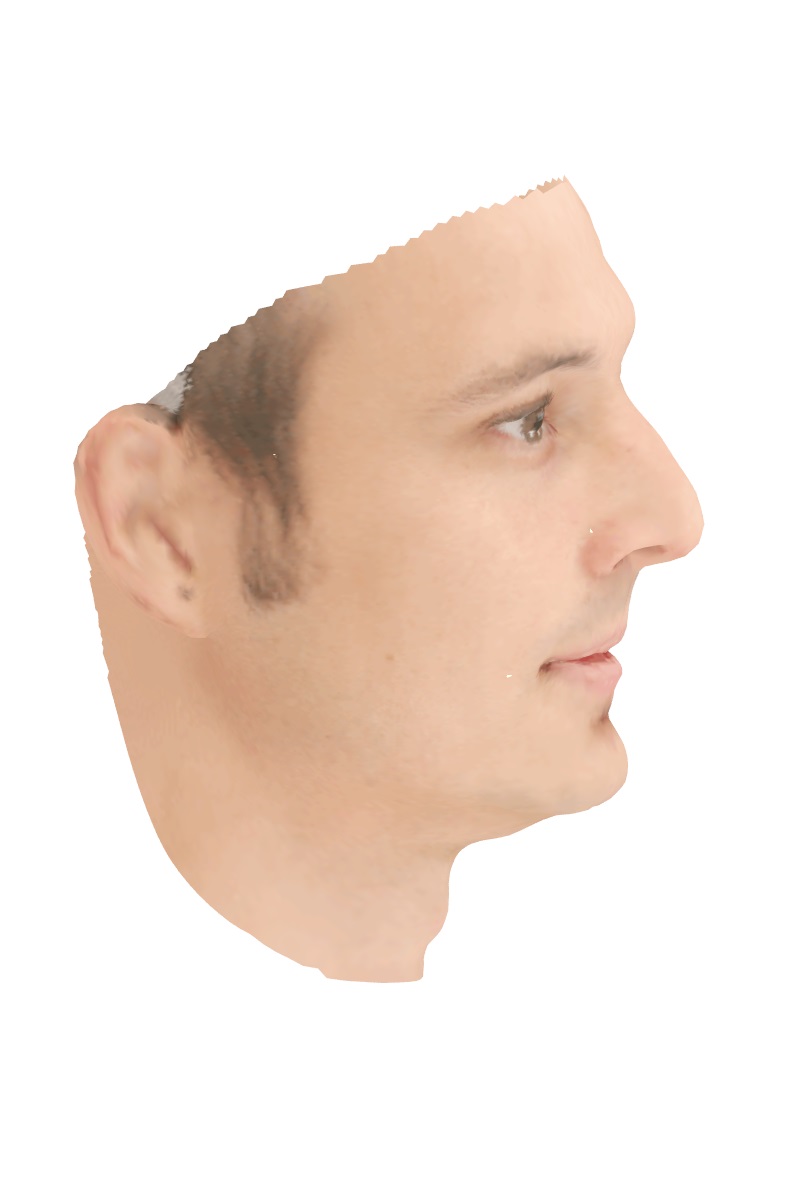}\end{overpic}
\hspace{-.25cm}
\begin{overpic}[width=0.15\columnwidth]{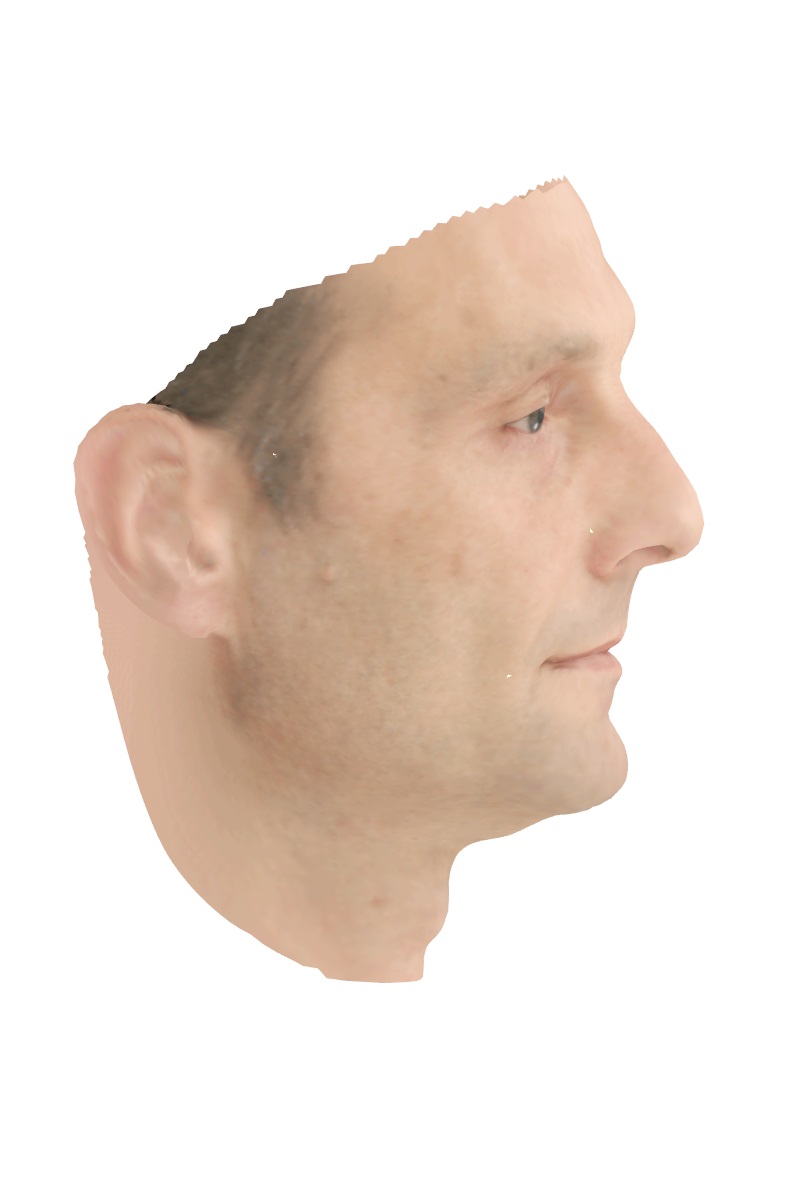}\end{overpic}
\hspace{-.25cm}
\end{center}
\vspace{-0.5cm}
	\caption{\small Texture transfer - front and side views. In each row, the same shape is shown with different textures transferred from reference faces (diagonal).}
	\label{fig:texture_transfer}
\end{figure*}

\begin{figure*}[tp]
\begin{center}
\begin{overpic}[width=0.15\columnwidth]{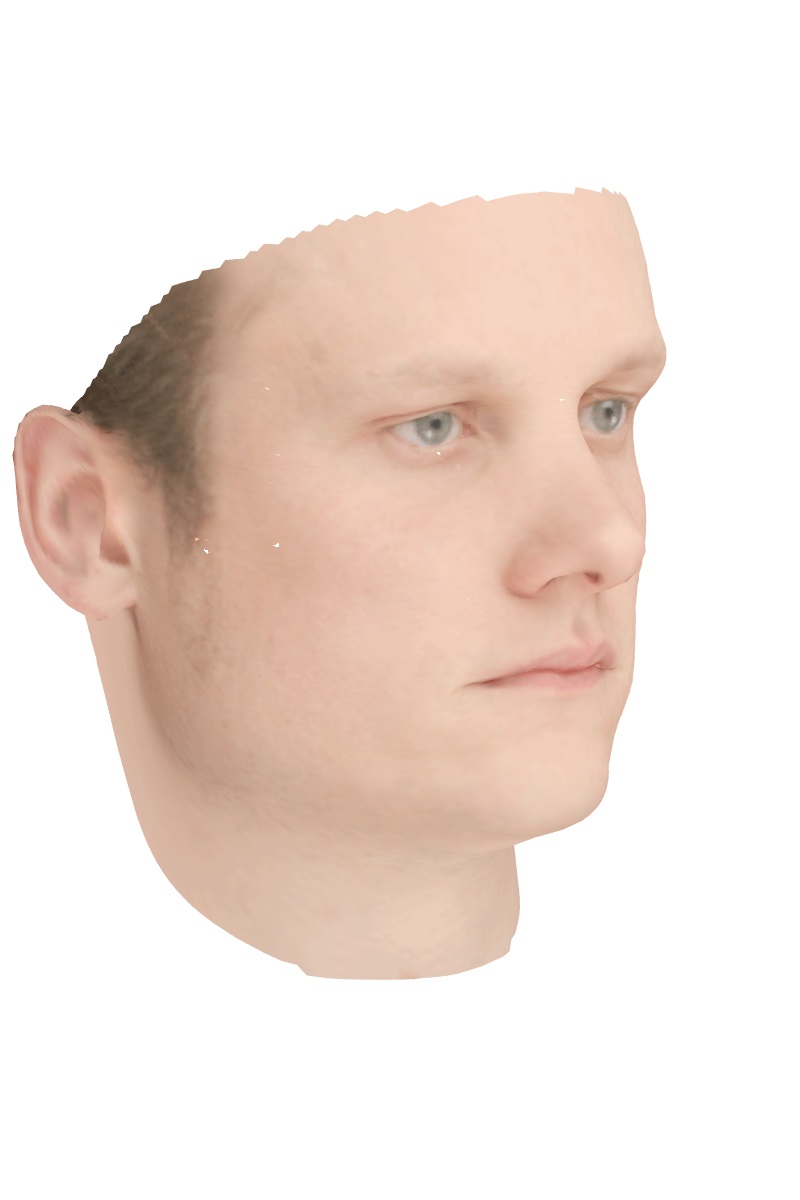}\put(22,7){$\gamma=0$}\end{overpic}
\hspace{-.6cm}
\begin{overpic}[width=0.15\columnwidth]{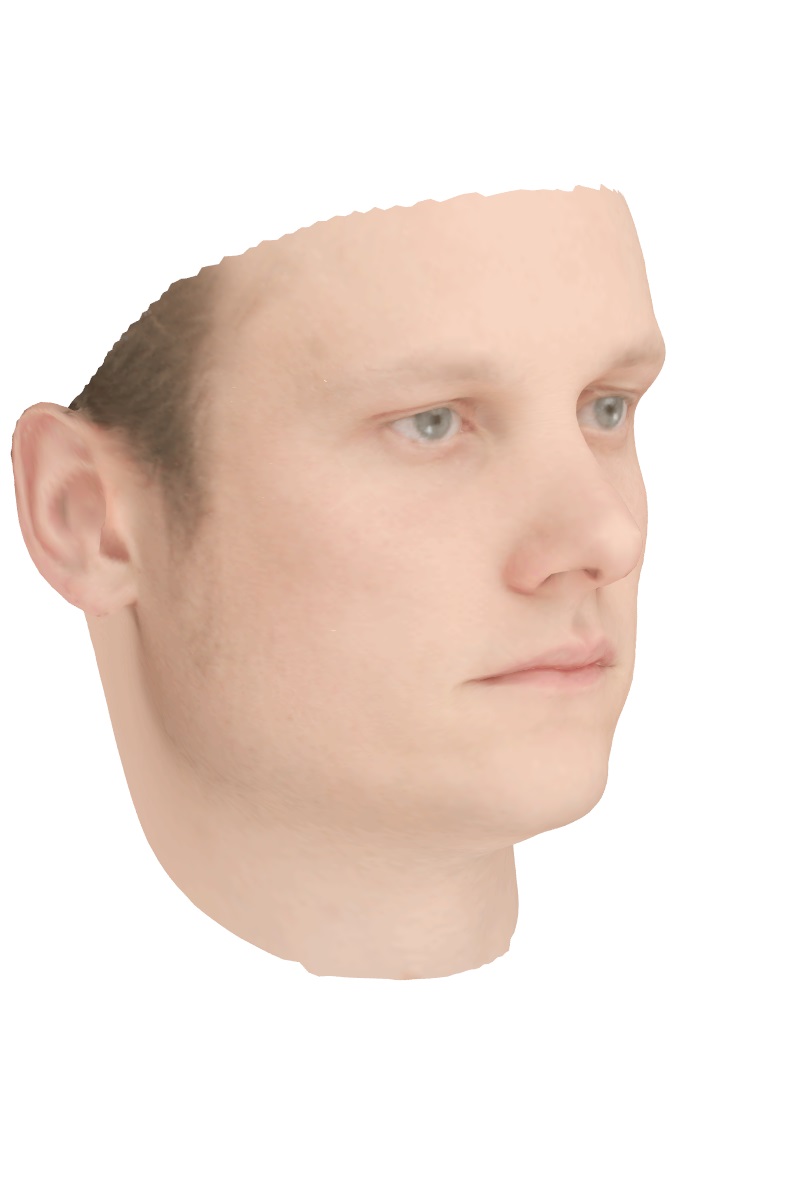}\put(19,7){$\gamma=0.1$}\end{overpic}
\hspace{-.6cm}
\begin{overpic}[width=0.15\columnwidth]{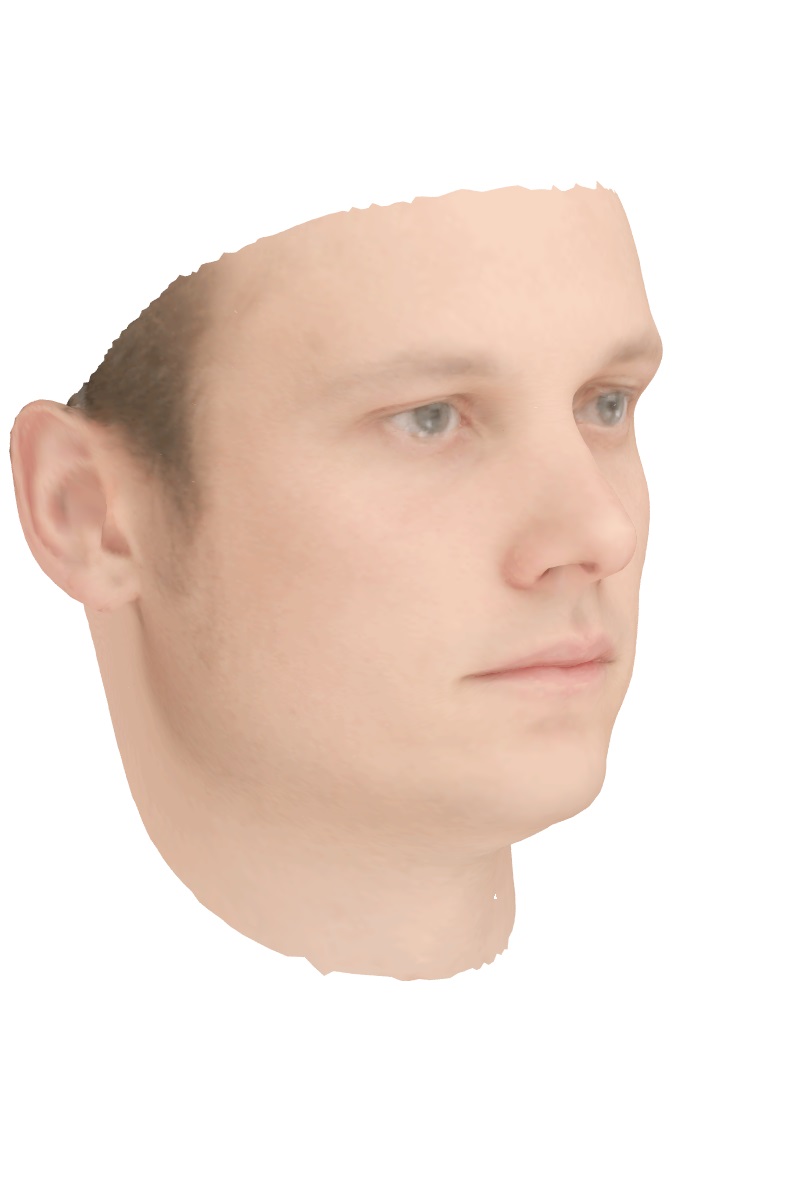}\put(19,7){$\gamma=0.3$}\end{overpic}
\hspace{-.6cm}
\begin{overpic}[width=0.15\columnwidth]{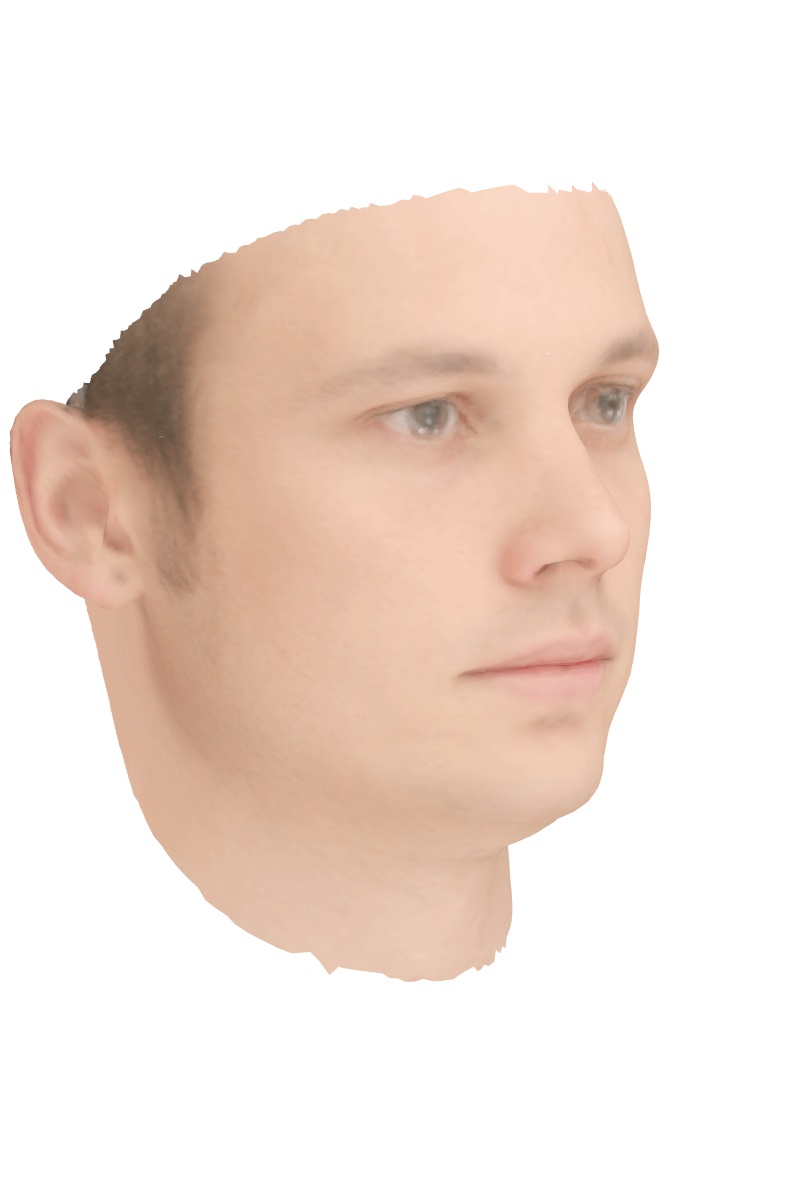}\put(19,7){$\gamma=0.5$}\end{overpic}
\hspace{-.6cm}
\begin{overpic}[width=0.15\columnwidth]{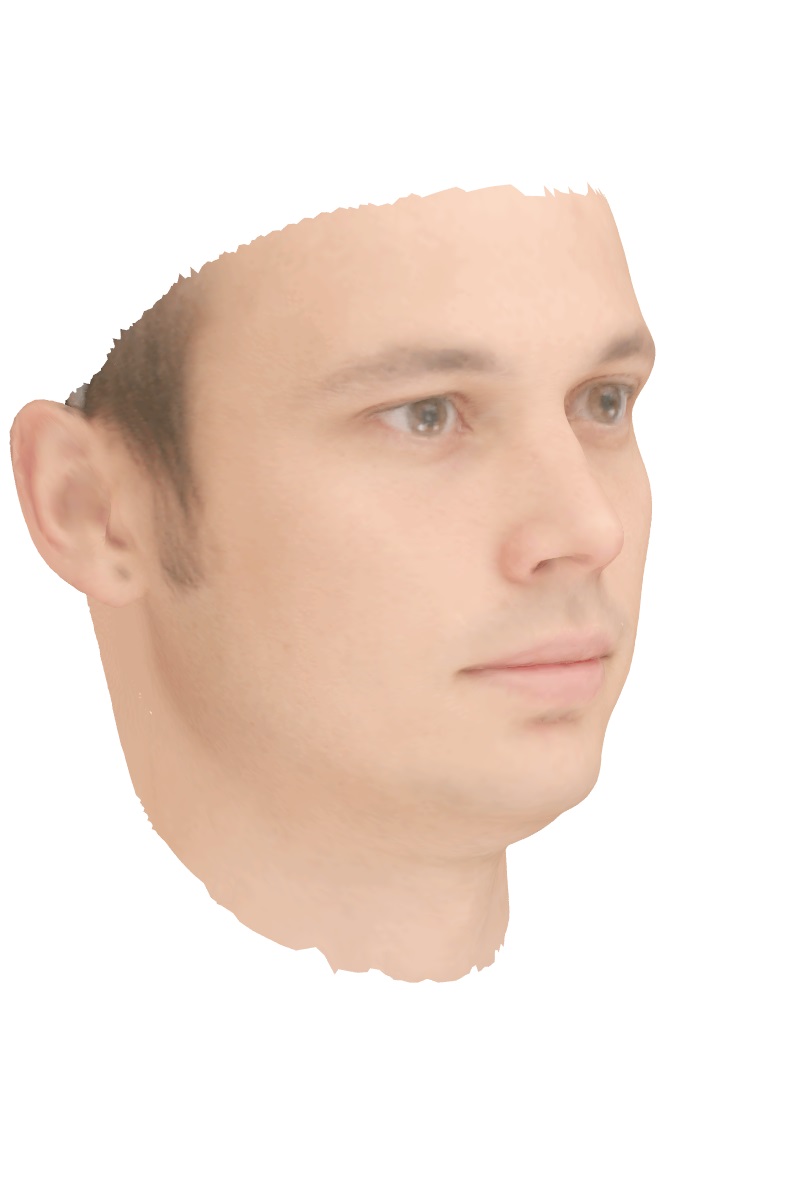}\put(19,7){$\gamma=0.7$}\end{overpic}
\hspace{-.6cm}
\begin{overpic}[width=0.15\columnwidth]{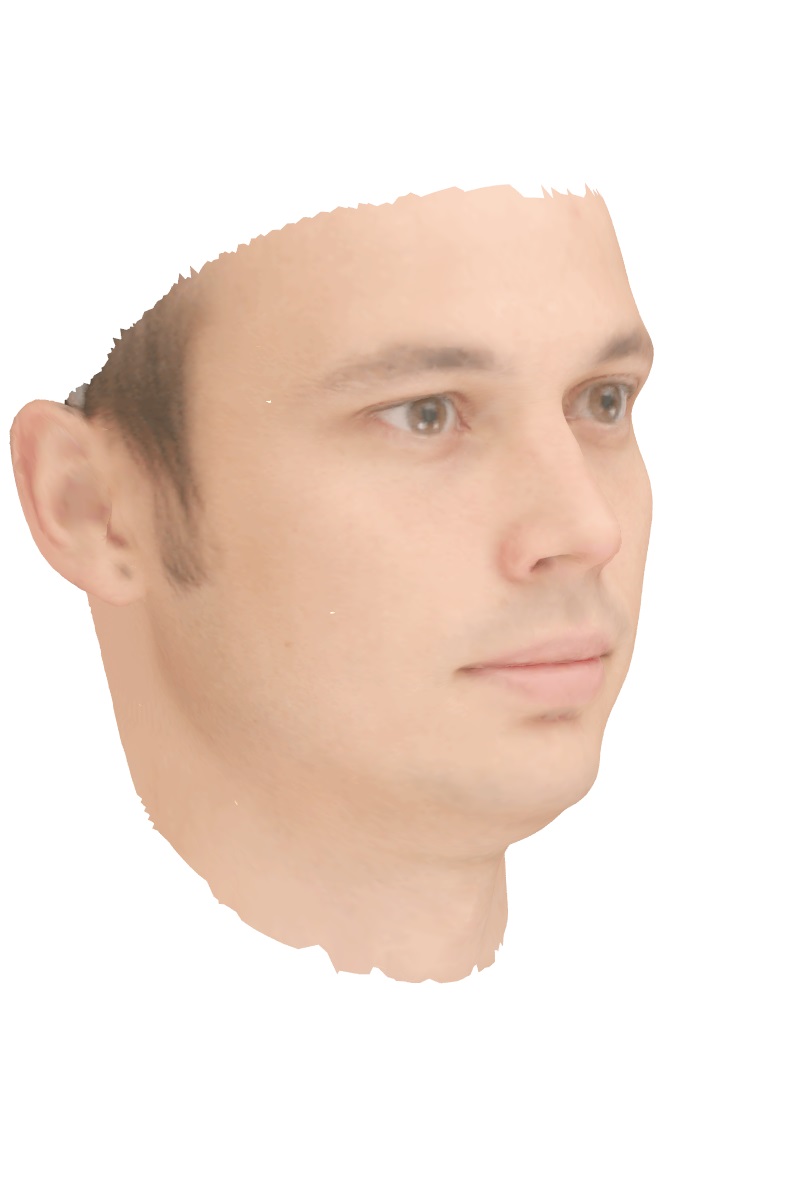}\put(19,7){$\gamma=0.8$}\end{overpic}
\hspace{-.6cm}
\begin{overpic}[width=0.15\columnwidth]{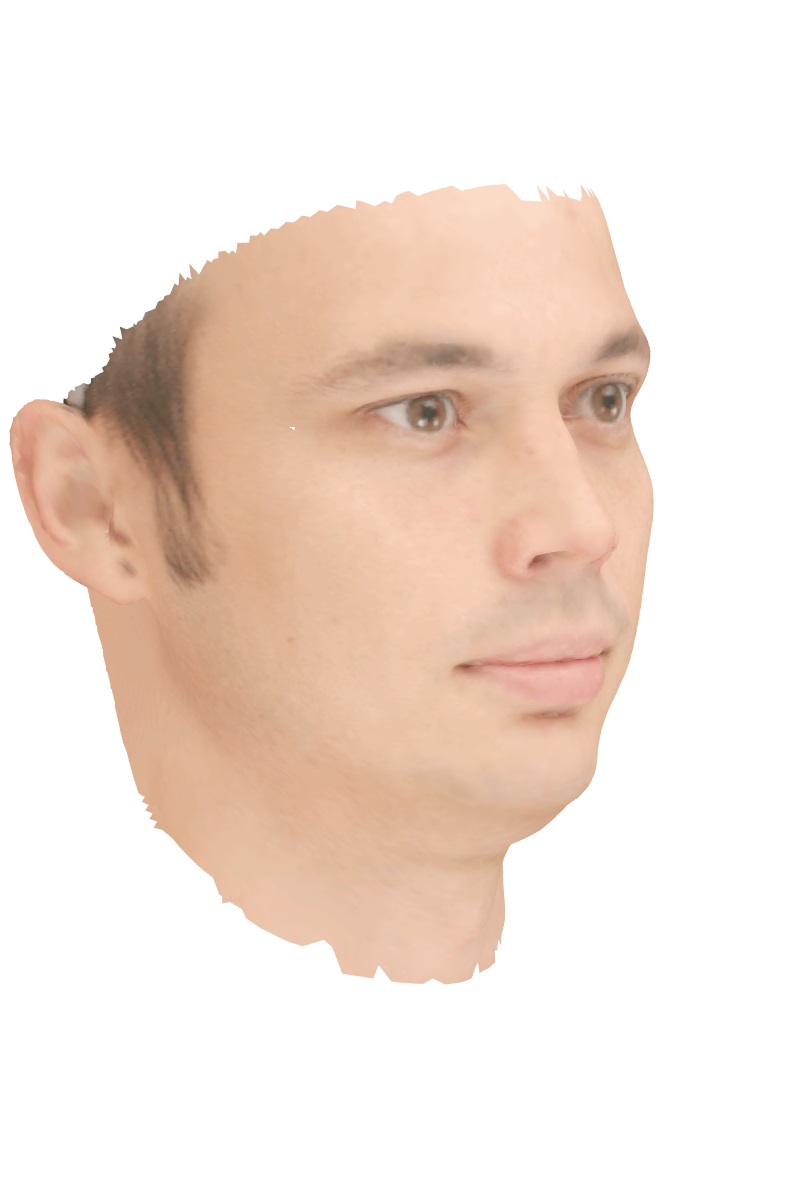}\put(22,7){$\gamma=1$}\end{overpic}
\end{center}
\vspace{-0.5cm}
	\caption{\small Shape morphing. The correspondence is used to transform the texture and the extrinsic geometry, creating a morphing effect. }
	\label{fig:texture_morph}
\end{figure*}

\section{Conclusions}
\label{sec:Conclusions}
A new method for comparing nonrigid shapes was introduced. The theoretical components are based on spectral distances between manifolds, that have been integrated into a holistic shape matching framework.
We have demonstrated the effectiveness of the proposed approach by achieving state-of-the-art
results on shape matching benchmarks.

In the future, we intend to study the characteristics of other types of distances between deformable shapes, and incorporate them in the proposed correspondence framework.

\ifdefined \TOG
\begin{acks}
\else
\subsection*{Acknowledgment}
\fi
This work has been supported by grant agreement no. 267414 of the European
Community's FP7-ERC program.
\ifdefined \TOG
\end{acks}
\fi

\appendix
\section{Proof of Theorem \ref{thm:pmet}}
\label{app:qc}
We would like to show that the quasi-conformal distance $d^{QC}_t:\mathcal{M} \times \mathcal{M} \mapsto \mathbb{R}_{\ge 0}$ satisfies the following properties.
\begin{enumerate}
	\item[P1:] Symmetry - $d^{QC}_t(X,Y) = d^{QC}_t(Y,X)$ for every $X$, $Y \in \mathcal{M}$.
	\item[P2:] Triangle-inequality - For every three Riemannian manifolds $X, Y, Z \in \mathcal{M}$,
	\begin{eqnarray*}
			d^{QC}_t(X,Z) &\le& d^{QC}_t(X,Y) + d^{QC}_t(Y,Z).
	\end{eqnarray*}
	\item[P3:] Identity - 
\begin{enumerate}
		\item If $X,Y \in \mathcal{M}$ are isometric Riemannian manifolds, then $d^{QC}_t(X,Y) = 0$. 
		\item If $d^{QC}_t(X,Y) = 0$ then $X$, $Y$ are isometric.
\end{enumerate}
\end{enumerate}
The proof of the theorem is based on the work of B{\'e}rard \etal \cite{berard1994embedding}, that showed that the spectral embedding distance is a metric.
\begin{lemma}
\label{lem:jac}
		Let $X,Y \in \mathcal{M}$, and let $\varphi : X \mapsto Y$ and $\psi : Y \mapsto X$ be smooth bijective mappings $y=\varphi(\psi(y))$.
		Let us denote by $\mathcal{J}_{\varphi}(x)$, $\mathcal{J}_{\psi}(y)$, the Jacobians of the maps $\varphi$, $\psi$, respectively. 
		Then, $d^{\text{QC}}(X, Y)=0$ implies that $\mathcal{J}_{\varphi}(x)$  (and equivalently $\mathcal{J}_{\psi}(y)$) is constant.
\end{lemma}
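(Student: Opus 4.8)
The plan is to extract from the hypothesis $d^{\text{QC}}(X,Y)=0$ two facts — that $X$ and $Y$ are isometric, hence isospectral and of equal total volume, and that under $\varphi$ the Laplace--Beltrami eigenfunctions of $Y$ pull back exactly onto those of $X$ — and then to finish with a one-line orthogonality argument showing that the Jacobian is identically $1$.

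First I would note that $J^{\Phi^X}_t$ contains $I^{\Phi^X}_t$ as a sub-block, so the vanishing of $d_{\mathcal H}^{\text{QC}}(J^{\Phi^X}_t,J^{\Phi^Y}_t)$ forces $d_{\mathcal H}^{\text{EMB}}(I^{\Phi^X}_t,I^{\Phi^Y}_t)=0$ for the same maps $\varphi,\psi$; passing to the $\sup$-$\inf$ over eigenbases yields $d^{\text{EMB}}_t(X,Y)=0$, so by B\'erard \etal \cite{berard1994embedding} the manifolds $X$ and $Y$ are isometric. In particular $\lambda^X_i=\lambda^Y_i=:\lambda_i$ for all $i$ and $\text{Vol}(X)=\text{Vol}(Y)=:V$. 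Hence, for the maps $\varphi$ and $\psi=\varphi^{-1}$ realizing the zero Hausdorff distance (for generic shapes the spectra are simple, so the compatible eigenbasis is unique up to signs, a point handled as in \cite{berard1994embedding}), the pointwise identity $J^{\Phi^X}_t(x)=J^{\Phi^Y}_t(\varphi(x))$ holds, and reading off its $I$-block and cancelling the common factor $\sqrt{V}\,e^{-\lambda_i t/2}$ gives
\begin{eqnarray*}
\phi^X_i(x) &=& \phi^Y_i(\varphi(x)) \qquad \text{for all } x\in X,\; i\ge 0.
\end{eqnarray*}
Smoothness of $\varphi$ and $\psi$ is inherited from the fact that the heat-kernel embedding is a smooth embedding for $t>0$.

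Next I would write $\varphi^{*}\,dV_Y=\mathcal{J}_\varphi\,dV_X$ with $\mathcal{J}_\varphi>0$ and apply the change-of-variables formula: for all $i,j\ge 0$,
\begin{eqnarray*}
\int_X \phi^X_i\,\phi^X_j\,\mathcal{J}_\varphi\,dV_X &=& \int_X (\phi^Y_i\circ\varphi)\,(\phi^Y_j\circ\varphi)\,\mathcal{J}_\varphi\,dV_X \\
&=& \int_Y \phi^Y_i\,\phi^Y_j\,dV_Y \;=\; \delta_{ij}.
\end{eqnarray*}
Since also $\int_X \phi^X_i\,\phi^X_j\,dV_X=\delta_{ij}$, the function $\mathcal{J}_\varphi-1$ is $L^2(X)$-orthogonal to every product $\phi^X_i\phi^X_j$; taking $j=0$, where $\phi^X_0\equiv V^{-1/2}$, shows it is orthogonal to every $\phi^X_i$, $i\ge 0$. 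Because $\{\phi^X_i\}_{i\ge0}$ is a complete orthonormal basis of $L^2(X)$, we get $\mathcal{J}_\varphi\equiv 1$, in particular constant; and then $\mathcal{J}_\psi=1/(\mathcal{J}_\varphi\circ\psi)\equiv 1$ is constant as well.

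The step I expect to be the main obstacle is making the previous paragraph's passage rigorous: turning the a priori merely infimal vanishing of the Hausdorff distance into an honest smooth bijection $\varphi$ with $J^{\Phi^X}_t=J^{\Phi^Y}_t\circ\varphi$, and arranging the $\sup$-$\inf$ over the two families of eigenbases so that the index-by-index matching of eigenfunctions is exactly the one induced by $\varphi$. This rests on compactness of $X$ and $Y$ (forcing $J^{\Phi^X}_t(X)=J^{\Phi^Y}_t(Y)$ as closed subsets of $\ell^2$), on injectivity of the embeddings (Theorem \ref{thm:sep} together with injectivity of the heat-kernel embedding), and on the handling of eigenvalue multiplicities already present in \cite{berard1994embedding}; the rest is the elementary computation above, and one notes that the $\omega_{i,j}$-block of $J$ is not needed for this particular lemma.
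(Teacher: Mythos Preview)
Your argument is correct, but it takes a different and somewhat roundabout route compared with the paper. The paper's first proof is a one--integration trick that never needs isospectrality or equal volumes: from $d^{\text{QC}}_t(X,Y)=0$ it reads off the $I$--block identity
\[
\sqrt{\text{Vol}(X)}\,e^{-\lambda^X_i t/2}\phi^X_i(x)=\sqrt{\text{Vol}(Y)}\,e^{-\lambda^Y_i t/2}\phi^Y_i(\varphi(x)),\qquad i>0,
\]
integrates both sides over $X$, uses $\int_X\phi^X_i\,dV_X=0$, and after the change of variables $x=\psi(y)$ obtains $\int_Y\phi^Y_i(y)\,\mathcal{J}_\psi(y)\,dV_Y=0$ for all $i>0$; completeness of $\{\phi^Y_i\}_{i\ge0}$ then forces $\mathcal{J}_\psi$ to be constant. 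No appeal to B\'erard's theorem, and no prior knowledge that $\lambda^X_i=\lambda^Y_i$ or $\text{Vol}(X)=\text{Vol}(Y)$, is required.

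You instead first invoke the full B\'erard result to obtain isometry (hence isospectrality and equal volumes), cancel the common scalar factors to get $\phi^X_i=\phi^Y_i\circ\varphi$, and then run an orthogonality argument on products $\phi^X_i\phi^X_j$. This is valid and even yields the sharper conclusion $\mathcal{J}_\varphi\equiv1$ directly, but it imports a heavy external theorem to prove what in the paper is a \emph{stepping stone} toward that very theorem: the paper's logical order is Lemma~\ref{lem:jac} $\Rightarrow$ isospectrality (Lemma~\ref{lem:isospec}) $\Rightarrow$ isometry (Proposition~\ref{prop:isom}), so your route, while not circular, runs against the intended flow. The paper also gives a second proof using only the $\tilde I$--block (the $\omega_{i,j}$ coordinates) under a mild spanning hypothesis, which you correctly note is not needed for your version. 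Your closing caveat about extracting an honest smooth bijection from the infimal vanishing is well taken; the paper treats this point at the same informal level.
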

\begin{proof}
		By the definition of the spectral quasi-conformal distance, if $d^{\text{QC}}(X, Y)=0$ and if $x = \psi(y)$ or $y = \varphi(x)$, then 
		\begin{eqnarray}
\label{eq:qc_constraint0}
			\sqrt{\text{Vol}(X)}e^{-\lambda^X_i t/2}\phi^X_i &=& \sqrt{\text{Vol}(Y)}e^{-\lambda^Y_i t/2}\phi^Y_i  \,\: \forall \, i>0.
		\end{eqnarray}
		Integrating Eq. \ref{eq:qc_constraint0}, we get
		\begin{eqnarray}
			0 &=& \sqrt{\text{Vol}(X)}e^{-\lambda^X_i t/2} \int_X \phi^X_i(x)  d\text{V}_X
			\cr &=& \sqrt{\text{Vol}(Y)}e^{-\lambda^Y_i t/2} \int_X \phi^Y_i(\varphi(x))  d\text{V}_X
			\cr &=& \sqrt{\text{Vol}(Y)}e^{-\lambda^Y_i t/2} \int_Y \phi^Y_i(y)  \mathcal{J}_{\psi}(y) d\text{V}_Y.
		\end{eqnarray}
		Hence, $\mathcal{J}_{\psi}(y)$ is orthogonal to $\phi^Y_i(y) \,\: \forall \, i>0$. 
		We conclude that $\mathcal{J}_{\psi}(y)$ (and equivalently $\mathcal{J}_{\varphi}(x)$)  must be a constant.
\end{proof}
\ifdefined \TOG
\begin{proof}[ 2]
\else
\begin{proof}[Proof 2]
\fi
		Under the condition that $\{\:\,{(\nabla^Y\phi^Y_i \cdot \nabla^Y\phi^Y_j)}_{\begin{smallmatrix}  i \ne j\end{smallmatrix}} \cup \text{ constant}\}$ spans the space of real scalar functions, we can prove the claim only by using the coordinates of $\tilde{I}^{\Phi}_t$.

		By the definition of the spectral quasi-conformal distance, if $d^{\text{QC}}(X, Y)=0$ and if $x = \psi(y)$ or $y = \varphi(x)$, then 
		\begin{eqnarray}
\label{eq:qc_constraint}
			z^X_{i,j}(\nabla^X\phi^X_i \cdot \nabla^X\phi^X_j )&=& z^Y_{i,j}(\nabla^Y\phi^Y_i \cdot \nabla^Y\phi^Y_j),
		\end{eqnarray}
where
		\begin{eqnarray*}
			z^X_{i,j} \: \equiv \: \text{Vol}(X)\cfrac{e^{-(\lambda^X_i+\lambda^X_j) t/2}}{\sqrt{\lambda^X_i\lambda^X_j}}, \quad
			 z^Y_{i,j} \: \equiv \: \text{Vol}(Y)\cfrac{e^{-(\lambda^Y_i+\lambda^Y_j) t/2}}{\sqrt{\lambda^Y_i\lambda^Y_j}}. \nonumber
		\end{eqnarray*}
		Integrating Eq. \ref{eq:qc_constraint}, we get
		\begin{eqnarray}
\label{eq:qc_int}
			\eqmark z^X_{i,j}\int_X \nabla^X\phi^X_i(x) \cdot \nabla^X\phi^X_j(x) d\text{V}_X \eqcr
			 &=& z^Y_{i,j}\int_X (\nabla^Y\phi^Y_i(\varphi(x)) \cdot \nabla^Y\phi^Y_j(\varphi(x))) d\text{V}_X
			\cr &=& z^Y_{i,j}\int_Y (\nabla^Y\phi^Y_i(y) \cdot \nabla^Y\phi^Y_j(y)) \mathcal{J}_{\psi}(y) d\text{V}_Y.
		\end{eqnarray}
		The gradients of the eigenfunctions are orthogonal
		\begin{eqnarray}
\label{eq:qc_green1}
		\int_X (\nabla^X\phi^X_i(x) \cdot \nabla^X\phi^X_j(x)) d\text{V}_X &=& 0 \,\: \forall \, i\ne j, \,\, 
		\cr \int_Y (\nabla^Y\phi^Y_i(y) \cdot \nabla^Y\phi^Y_j(y)) d\text{V}_Y &=& 0 \,\: \forall \, i\ne j.
		\end{eqnarray}
		and using Eq. \ref{eq:qc_int} and Eq.\ref{eq:qc_green1}, we have
		\begin{eqnarray*}
			\int_Y (\nabla^Y\phi^Y_i(y) \cdot \nabla^Y\phi^Y_j(y)) \mathcal{J}_{\psi}(y) d\text{V}_Y &=&  0.  \nonumber
		\end{eqnarray*}
		Hence, $\mathcal{J}_{\psi}(y)$ is orthogonal to $(\nabla^Y\phi^Y_i(y) \cdot \nabla^Y\phi^Y_j(y)) \,\: \forall \, i \ne j$.
		Again, we conclude that $\mathcal{J}_{\psi}(y)$ (and equivalently $\mathcal{J}_{\varphi}(x)$) is a constant.
\end{proof}
		Because the Jacobian  $\mathcal{J}_{\psi}(y)= \mathcal{J}_{\psi}$  is a constant, the following relation holds
		\begin{eqnarray*}
		 \text{Vol}(X) &=& \int_X d\text{V}_X \eqs = \eqs \int_Y \mathcal{J}_{\psi} d\text{V}_Y \eqs = \eqs \mathcal{J}_{\psi} \text{Vol}(Y). \nonumber
		\end{eqnarray*}
		Therefore, the Jacobian can be expressed as the ratio of the volumes of the Riemannian manifolds $\mathcal{J}_{\psi} = \text{Vol}(X)/ \text{Vol}(Y)$.

\begin{lemma}
\label{lem:isospec}
	For $X,Y \in \mathcal{M}$, $d^{\text{QC}}(X, Y)=0$ implies that $X$, $Y$ are isospectral, \ie  $\lambda^X_i = \lambda^Y_i \,\: \forall \, i$.
\end{lemma}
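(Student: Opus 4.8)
The plan is to reuse the spectral-embedding part of the construction. Since $J^{\Phi^X}_t = \tilde I^{\Phi^X}_t \cup I^{\Phi^X}_t$, the hypothesis $d^{\text{QC}}(X,Y)=0$ already supplies, via its $I$-coordinates, the B\'erard-type constraint of Eq.~\ref{eq:qc_constraint0}: for the bijection $\varphi$ (equivalently $\psi$) that realizes the vanishing distortion, $\sqrt{\text{Vol}(X)}\,e^{-\lambda^X_i t/2}\phi^X_i(x) = \sqrt{\text{Vol}(Y)}\,e^{-\lambda^Y_i t/2}\phi^Y_i(\varphi(x))$ for every $i>0$ and every $x\in X$, with $x=\psi(y)$. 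From Lemma~\ref{lem:jac} and the discussion following it, we may also use that $\mathcal{J}_{\psi}$ is constant and equal to $\text{Vol}(X)/\text{Vol}(Y)$. These two facts are all that is needed.

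First I would solve Eq.~\ref{eq:qc_constraint0} for $\phi^Y_i$, writing $\phi^Y_i(y) = \sqrt{\text{Vol}(X)/\text{Vol}(Y)}\;e^{(\lambda^Y_i-\lambda^X_i)t/2}\,\phi^X_i(\psi(y))$. Then I would square both sides and integrate over $Y$ with respect to $d\text{V}_Y$: the left side equals $1$ by the $L^2$-normalization $\int_Y(\phi^Y_i)^2\,d\text{V}_Y=1$, while on the right I would change variables through $x=\psi(y)$. Because $\mathcal{J}_{\psi}$ is the constant $\text{Vol}(X)/\text{Vol}(Y)$, the change-of-variables identity $\int_Y g(\psi(y))\,d\text{V}_Y = \int_X g(x)\,\mathcal{J}_{\psi}^{-1}\,d\text{V}_X$ gives $\int_Y \phi^X_i(\psi(y))^2\,d\text{V}_Y = (\text{Vol}(Y)/\text{Vol}(X))\int_X \phi^X_i(x)^2\,d\text{V}_X = \text{Vol}(Y)/\text{Vol}(X)$. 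Substituting, every volume factor cancels and what remains is $1 = e^{(\lambda^Y_i-\lambda^X_i)t}$. Since $t>0$ is fixed and positive, this forces $\lambda^X_i = \lambda^Y_i$ for all $i>0$; together with $\lambda^X_0 = \lambda^Y_0 = 0$ this is exactly isospectrality.

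I do not expect a genuine obstacle: the whole argument is bookkeeping of normalization constants together with the (constant) Jacobian. The only step deserving care is the change of variables — one must apply it with $\mathcal{J}_{\psi}$ in the form just established, so that the volume ratio comes out cleanly — and one should keep in mind that Eq.~\ref{eq:qc_constraint0} is valid on a compatible pair of eigenbases, which is the standing assumption throughout this appendix in view of the definition of $d^{\text{QC}}_t$ as a $\sup$ over $\{\Phi^X\}$ and $\inf$ over $\{\Phi^Y\}$. It is worth remarking that the per-eigenfunction normalization is precisely what makes a single fixed $t$ enough: merely integrating the diagonal kernel identity $\text{Vol}(X)K^X_t(x,x)=\text{Vol}(Y)K^Y_t(\varphi(x),\varphi(x))$ would only yield $\sum_i e^{-\lambda^X_i t}=\sum_i e^{-\lambda^Y_i t}$ for that one value of $t$, which does not by itself determine the spectrum.
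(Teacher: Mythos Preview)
Your proposal is correct and follows essentially the same route as the paper's first proof: square the $I$-coordinate identity of Eq.~\ref{eq:qc_constraint0}, integrate, and use the constant Jacobian $\mathcal{J}_{\psi}=\text{Vol}(X)/\text{Vol}(Y)$ from Lemma~\ref{lem:jac} together with the $L^2$-normalization of the eigenfunctions to cancel the volume factors and obtain $e^{-\lambda^X_i t}=e^{-\lambda^Y_i t}$. The only cosmetic difference is that the paper integrates over $X$ whereas you solve for $\phi^Y_i$ and integrate over $Y$; the computations are the same.
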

\begin{proof}
		Integrating the square of Eq. \ref{eq:qc_constraint0}, we get
		\begin{eqnarray*}
		\eqmark \int_X \big(\sqrt{\text{Vol}(X)}e^{-\lambda^X_i t/2}\phi^X_i(x)\big)^2 d\text{V}_X \eqcr
		&=& \int_Y \big(\sqrt{\text{Vol}(Y)}e^{-\lambda^Y_i t/2}\phi^Y_i(y)\big)^2 \mathcal{J}_{\psi}(y) d\text{V}_Y.
		\end{eqnarray*}
		Simplifying, using Lemma \ref{lem:jac}
		\begin{eqnarray*}
		\cr {\text{Vol}(X)}e^{-\lambda^X_i t} &=& {\text{Vol}(Y)}e^{-\lambda^Y_i t} \mathcal{J}_{\psi},
		\cr e^{-(\lambda^X_i t)} &=& e^{-(\lambda^Y_i t)},
		\cr \lambda^X_i &=&  \lambda^Y_i. \nonumber
		\end{eqnarray*}
\end{proof}
\ifdefined \TOG
\begin{proof}[ 2]
\else
\begin{proof}[Proof 2]
\fi
		For all eigenfunctions and the respective eigenvalues
		\begin{eqnarray}
\label{eq:qc_green2}
		\int_X \norm{\nabla^X\phi^X_i(x)}^2 d\text{V}_X &=& \lambda^X_i,
		\cr \int_Y \norm{\nabla^Y\phi^Y_i(y)}^2 d\text{V}_Y &=& \lambda^Y_i.
		\end{eqnarray}
		Plugging  Eq.\ref{eq:qc_green2} into Eq. \ref{eq:qc_int}, and using Lemma \ref{lem:jac}
		\begin{eqnarray*}
		\text{Vol}(X)\cfrac{e^{-(\lambda^X_i t)}}{\overset{}{\lambda^X_i}} \lambda^X_i &=& \text{Vol}(Y)\cfrac{e^{-(\lambda^Y_i t)}}{\overset{}{\lambda^Y_i}} \mathcal{J}_{\psi} {\lambda^Y_i},
		\cr e^{-(\lambda^X_i t)} &=& e^{-(\lambda^Y_i t)},
		\cr \lambda^X_i &=&  \lambda^Y_i. \nonumber
		\end{eqnarray*}
\end{proof}
This shows that $X$ and $Y$ are isospectral and $\text{Vol}(X) = \text{Vol}(Y)$.

\begin{proposition}
\label{prop:isom}
	For $X,Y \in \mathcal{M}$, if $d^{\text{QC}}(X, Y)=0$, then $X$, $Y$ are isometric.
\end{proposition}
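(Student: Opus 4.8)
The plan is to promote the two lemmas above into a pointwise identification of the Riemannian metrics of $X$ and $Y$ through the matching map $\varphi$. By Lemma \ref{lem:jac} the Jacobian $\mathcal{J}_{\psi}$ equals the constant $\text{Vol}(X)/\text{Vol}(Y)$, and by Lemma \ref{lem:isospec} the two manifolds are isospectral, $\lambda^X_i=\lambda^Y_i$ for all $i$; feeding isospectrality back into the relation obtained in the proof of Lemma \ref{lem:isospec} forces $\text{Vol}(X)=\text{Vol}(Y)$, hence $\mathcal{J}_{\psi}\equiv 1$. Substituting $\lambda^X_i=\lambda^Y_i$ and $\text{Vol}(X)=\text{Vol}(Y)$ into the vanishing-distance constraint coming from the $I$-coordinates of $J^{\Phi^X}_t$ and $J^{\Phi^Y}_t$ (Eq. \ref{eq:qc_constraint0}) yields
\[
\phi^X_i(x)=\phi^Y_i(\varphi(x)) \qquad \forall\, i\ge 0,\ x\in X,
\]
while the corresponding equality of the $\tilde{I}$-coordinates gives $(\nabla^X\phi^X_i\cdot\nabla^X\phi^X_j)(x)=(\nabla^Y\phi^Y_i\cdot\nabla^Y\phi^Y_j)(\varphi(x))$ for all $i,j$.

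Next I would show that $\varphi$ is a Riemannian isometry, reusing the computation from the proof of Theorem \ref{thm:sep}. For an arbitrary smooth $h:Y\mapsto\mathbb{R}$ expand $h=\sum_i b_i\phi^Y_i$; then $h\circ\varphi=\sum_i b_i\phi^X_i$ and
\begin{align*}
\norm{\nabla^X(h\circ\varphi)}^2(x) &= \sum_{i,j} b_i b_j\,(\nabla^X\phi^X_i\cdot\nabla^X\phi^X_j)(x) \\
&= \sum_{i,j} b_i b_j\,(\nabla^Y\phi^Y_i\cdot\nabla^Y\phi^Y_j)(\varphi(x)) = \norm{\nabla^Y h}^2(\varphi(x)).
\end{align*}
In other words, the codifferential $(d\varphi_x)^{*}:T^*_{\varphi(x)}Y\to T^*_xX$ preserves the cometric norm on every covector of the form $dh_{\varphi(x)}$. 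As $h$ ranges over all smooth functions these covectors exhaust $T^*_{\varphi(x)}Y$, and since $\dim X=\dim Y$, $(d\varphi_x)^{*}$ — hence $d\varphi_x$ — is a linear isometry at every $x$. Therefore $\varphi^{*}g_Y=g_X$, i.e., $\varphi$ is a Riemannian isometry, which is exactly the claim.

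The one delicate point, and the main obstacle, is the step that $\{dh_p\}_h$ spans $T^*_pY$ and, more precisely, that the gradient identity above survives the passage from finite truncations $\sum_{i\le N} b_i\phi^Y_i$ to the smooth limit $h$: $L^2$ convergence of the eigenfunction expansion is not enough, one needs $C^1$ convergence. This is a standard consequence of elliptic theory — for smooth $h$ the Laplace coefficients $b_i$ decay faster than any power of $\lambda_i$, so the truncations converge in every Sobolev space $H^s$ and hence, by Sobolev embedding, in $C^1$ — and it is precisely the kind of regularity argument used by B{\'e}rard \etal. A cleaner but essentially equivalent alternative bypasses this bookkeeping: the matched data give $K^X_t(x,x')=K^Y_t(\varphi(x),\varphi(x'))$ for every $t>0$, and the short-time heat-kernel asymptotics $K_t(x,x')=(4\pi t)^{-n/2}e^{-d(x,x')^2/(4t)}(1+O(t))$ then force $d_X(x,x')=d_Y(\varphi(x),\varphi(x'))$; thus $\varphi$ is a metric isometry, which by the Myers--Steenrod theorem is a smooth Riemannian isometry, completing the proof of Proposition \ref{prop:isom}.
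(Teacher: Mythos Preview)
Your argument is correct, but it follows a genuinely different route from the paper's. The paper's proof is very short: from Lemma~\ref{lem:isospec} and Eq.~\ref{eq:qc_constraint0} (together with $\text{Vol}(X)=\text{Vol}(Y)$) one gets $\phi^X_i\circ\psi=\phi^Y_i$, hence $\Delta^Y(\phi^X_i\circ\psi)=-\lambda^X_i(\phi^X_i\circ\psi)$; the paper then simply declares that the two manifolds ``share common eigenfunctions'' and concludes isometry, implicitly invoking the B{\'e}rard \etal result that the heat-kernel embedding determines the metric. In other words, the paper uses only the $I$-part of $J^{\Phi}_t$ and outsources the final step.

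Your main argument instead exploits the $\tilde{I}$-part --- the matched gradient inner products $\nabla\phi_i\cdot\nabla\phi_j$ --- to show directly, via the bilinear expansion from Theorem~\ref{thm:sep}, that $\varphi$ preserves $\norm{\nabla h}$ for every smooth $h$ and is therefore a Riemannian isometry. This is more self-contained and actually makes the quasi-conformal coordinates do real work in the identity axiom, which is arguably more satisfying than the paper's version. You correctly flag the one analytic subtlety (passing from finite eigenfunction sums to arbitrary smooth $h$ requires $C^1$ convergence, supplied by elliptic regularity). Your ``cleaner alternative'' via heat-kernel matching and Varadhan's short-time asymptotics is essentially a fleshed-out version of what the paper leaves implicit, so that branch aligns with the paper's approach; your gradient-norm argument is the genuinely new ingredient.
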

\begin{proof}
	 The eigenfunctions $\phi^Y_i$ satisfy
	\begin{eqnarray*}
		\Delta^Y \phi^Y_i(y) &=& -\lambda^Y_i \phi^Y_i(y).
	\end{eqnarray*}
	From Lemma \ref{lem:isospec} and Eq. \ref{eq:qc_constraint0}
	\begin{eqnarray*}
		\Delta^Y \phi^X_i(\psi(y)) &=& -\lambda^X_i \phi^X_i(\psi(y)). \nonumber
	\end{eqnarray*}
	Hence, the two Riemannian manifolds $X$,$Y$ share common eigenfunctions $\phi^X_i$, with respective eigenvalues $\lambda^X_i$.
\end{proof}

\begin{proposition}
\label{prop:tri}
	For every three Riemannian manifolds $X, Y, Z \in \mathcal{M}$,
	\begin{eqnarray*}
		d^{QC}_t(X,Z) &\le& d^{QC}_t(X,Y) + d^{QC}_t(Y,Z). \nonumber
	\end{eqnarray*}\end{proposition}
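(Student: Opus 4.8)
The plan is to follow B{\'e}rard \etal's argument for the spectral embedding distance, using the fact that $d_\mathcal{H}^{\text{QC}}$ is a genuine Hausdorff distance between the embedded images $J^{\Phi^X}_t(X)$ and $J^{\Phi^Y}_t(Y)$ inside one fixed Hilbert space --- the space of square-summable families indexed by the labels $i$ together with the pairs $(i,j)$ --- so that it inherits the triangle inequality of the Hausdorff metric there.

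First I would record two elementary facts. Symmetry: $d^{\text{QC}}_t(X,Z)=d^{\text{QC}}_t(Z,X)$ holds by construction, since the defining expression $\max(d^{\text{QC}}_{J_t}(X,Z),d^{\text{QC}}_{J_t}(Z,X))$ is itself symmetric; hence it suffices to bound each of $d^{\text{QC}}_{J_t}(X,Z)$ and $d^{\text{QC}}_{J_t}(Z,X)$ by $d^{\text{QC}}_t(X,Y)+d^{\text{QC}}_t(Y,Z)$. Triangle inequality at the level of fixed bases: for any eigenbases $\Phi^X,\Phi^Y,\Phi^Z$ one has $d_\mathcal{H}^{\text{QC}}(J^{\Phi^X}_t,J^{\Phi^Z}_t)\le d_\mathcal{H}^{\text{QC}}(J^{\Phi^X}_t,J^{\Phi^Y}_t)+d_\mathcal{H}^{\text{QC}}(J^{\Phi^Y}_t,J^{\Phi^Z}_t)$. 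Here one checks that the two independent infima over $\varphi$ and $\psi$ decouple, so that $d_\mathcal{H}^{\text{QC}}(J^{\Phi^X}_t,J^{\Phi^Y}_t)$ is exactly the maximum of the two directed Hausdorff distances between the embedded point sets; since those sets all lie in the same sequence space, the ordinary Hausdorff triangle inequality applies.

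Next I would carry out the sup--inf bookkeeping. Fix $\varepsilon>0$ and an arbitrary eigenbasis $\Phi^X$ of $X$. Since $d^{\text{QC}}_{J_t}(X,Y)=\sup_{\{\Phi^X\}}\inf_{\{\Phi^Y\}}d_\mathcal{H}^{\text{QC}}(J^{\Phi^X}_t,J^{\Phi^Y}_t)$, there is an eigenbasis $\Phi^Y$ of $Y$ with $d_\mathcal{H}^{\text{QC}}(J^{\Phi^X}_t,J^{\Phi^Y}_t)\le d^{\text{QC}}_{J_t}(X,Y)+\varepsilon\le d^{\text{QC}}_t(X,Y)+\varepsilon$. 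Holding this $\Phi^Y$ fixed, and using that $d^{\text{QC}}_{J_t}(Y,Z)=\sup_{\{\Phi^Y\}}\inf_{\{\Phi^Z\}}d_\mathcal{H}^{\text{QC}}(J^{\Phi^Y}_t,J^{\Phi^Z}_t)$ is a supremum over $\Phi^Y$, we get $\inf_{\{\Phi^Z\}}d_\mathcal{H}^{\text{QC}}(J^{\Phi^Y}_t,J^{\Phi^Z}_t)\le d^{\text{QC}}_{J_t}(Y,Z)\le d^{\text{QC}}_t(Y,Z)$, so there is $\Phi^Z$ with $d_\mathcal{H}^{\text{QC}}(J^{\Phi^Y}_t,J^{\Phi^Z}_t)\le d^{\text{QC}}_t(Y,Z)+\varepsilon$. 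The fixed-basis triangle inequality then gives $d_\mathcal{H}^{\text{QC}}(J^{\Phi^X}_t,J^{\Phi^Z}_t)\le d^{\text{QC}}_t(X,Y)+d^{\text{QC}}_t(Y,Z)+2\varepsilon$; taking the infimum over $\Phi^Z$, then the supremum over $\Phi^X$, and letting $\varepsilon\to0$ yields $d^{\text{QC}}_{J_t}(X,Z)\le d^{\text{QC}}_t(X,Y)+d^{\text{QC}}_t(Y,Z)$. Running the identical chain with $X$ and $Z$ interchanged --- first a near-optimal $\Phi^Y$ against a fixed $\Phi^Z$ from $d^{\text{QC}}_{J_t}(Z,Y)$, then a near-optimal $\Phi^X$ from $d^{\text{QC}}_{J_t}(Y,X)$ --- bounds $d^{\text{QC}}_{J_t}(Z,X)$ in the same way, and the symmetry remark closes the argument.

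The main obstacle I expect is getting the quantifier order exactly right: the chain works only because $\Phi^Y$ sits under an \emph{infimum} in $d^{\text{QC}}_{J_t}(X,Y)$ (so it may be chosen near-optimal for the $X$--$Y$ leg) and under a \emph{supremum} in $d^{\text{QC}}_{J_t}(Y,Z)$ (so that very choice still lies below $d^{\text{QC}}_{J_t}(Y,Z)$). I would also take care to justify the auxiliary claim that $d_\mathcal{H}^{\text{QC}}$ is a genuine Hausdorff distance between subsets of a common Hilbert space --- which is where one uses that $X,Y,Z$ embed into one and the same indexed sequence space and that the one-sided infima over $\varphi$ and $\psi$ separate --- but this is entirely parallel to B{\'e}rard \etal, and introduces no difficulty beyond the quasi-conformal coordinates already handled in Theorems \ref{thm:sep} and \ref{thm:si}.
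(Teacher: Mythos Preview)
Your proof is correct and follows essentially the same route as the paper: fix $\Phi^X$, choose a near-optimal $\Phi^Y$ from the $X$--$Y$ leg, then a near-optimal $\Phi^Z$ from the $Y$--$Z$ leg, and chain. The only presentational difference is that you invoke the Hausdorff triangle inequality for $d_\mathcal{H}^{\text{QC}}$ as a black box (after observing that the infima over $\varphi$ and $\psi$ decouple), whereas the paper unwraps that step by explicitly composing $\varphi=\varphi_2\circ\varphi_1$, $\psi=\psi_1\circ\psi_2$ and applying the Euclidean triangle inequality pointwise; the underlying argument is identical.
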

\begin{proof}
We denote $d_{X,Y}(x,y) \equiv d(J^{\Phi^X}_t(x),J^{\Phi^Y}_t(y))$.

By definition of $d^{\text{QC}}_t(X, Y)$, 
if $d^{QC}_t(X, Y) \le\epsilon$, then for any $\Phi^X$, one can find $\Phi^Y$, $\varphi$, $\psi$ such that $\text{dis}_t^{\text{QC}}(\varphi)\le\epsilon$, $\text{dis}_t^{\text{QC}}(\psi)\le\epsilon$, \ie
		\begin{eqnarray*}
		d_{X, Y}(x,\varphi(x)) &\le& \epsilon, \,\: \forall \, x,
		\cr d_{X, Y}(\psi(y), y) &\le& \epsilon, \,\: \forall \, y. \nonumber
	\end{eqnarray*}
Let  $d^{\text{QC}}_t(X, Y)=\epsilon_1$ and  $d^{\text{QC}}_t(Y, Z)=\epsilon_2$. Hence, for any $\Phi^{X}$,  there exist two eigenbases $(\Phi^{Y}$, $\Phi^{Z})$, and two pairs of corresponding mappings $(\varphi_1 : X \mapsto Y,\psi_1 : Y \mapsto X)$ and $(\varphi_ 2: Y \mapsto Z,\psi_ 2: Z \mapsto Y)$  satisfying $\text{dis}_t^{\text{QC}}(\varphi_1)\le\epsilon_1$, $\text{dis}_t^{\text{QC}}(\psi_1)\le\epsilon_1$, and $\text{dis}_t^{\text{QC}}(\varphi_2)\le\epsilon_2$, $\text{dis}_t^{\text{QC}}(\psi_2)\le\epsilon_2$.
	Denote by $\varphi = \varphi_2 \circ \varphi_1 : X \mapsto Z$, $\psi = \psi_1 \circ \psi_2 : Z \mapsto X$.
	Invoking the triangle inequality for Euclidean spaces, one has
\begin{eqnarray*}
	d_{X, Z}(x,\varphi(x))
	&\le& \eqs d_{X, Y}(x,\varphi_1(x)) + d_{Y, Z}(\varphi_1(x), \varphi(x)) 
	\cr &\le& \eqs \text{dis}_t^{\text{QC}}(\varphi_1) + \text{dis}_t^{\text{QC}}(\varphi_2) \cr &\le& \epsilon_1 + \epsilon_2, \,\: \forall \, x \in X ,
	\cr  d_{X, Z}(\psi(z),z)
	&\le& \eqs d_{X, Y}(\psi(z),\psi_1(z)) + d_{Y, Z}(\psi_2(z), z) 
	\cr &\le& \eqs \text{dis}_t^{\text{QC}}(\psi_1) + \text{dis}_t^{\text{QC}}(\psi_2)  \cr &\le & \epsilon_1 + \epsilon_2, \,\: \forall \, z \in Z. \nonumber
\end{eqnarray*}
	This means that for any $\Phi^{X}$, we can find $\Phi^{Z},\varphi,\psi$, such that $d_{X, Z}(x,\varphi(x))$ and $d_{X, Z}(\psi(z),z)$ are bounded by $\epsilon_1 +\epsilon_2$. Consequently,
	\begin{eqnarray*}
	\inf_{\{\Phi^{Z}\}} d_\mathcal{H}^{\text{EMB}}(I^{\Phi^{X}}_t,I^{\Phi^{Z}}_t) &\le& \epsilon_1+\epsilon_2, \,\: \forall \, \Phi^{X}.
	\end{eqnarray*}
Clearly,
	\begin{eqnarray*}
		d_{J_t}^{\text{QC}}(X, Z) &=& \sup_{\{\Phi^{X}\}}\inf_{\{\Phi^{Z}\}} d_\mathcal{H}^{\text{EMB}}(I^{\Phi^{X}}_t,I^{\Phi^{Z}}_t) \eqs \le \eqs \epsilon_1 + \epsilon_2. \nonumber
	\end{eqnarray*}
In the same way $d_{J_t}^{\text{QC}}(Z, X) \le \epsilon_1 + \epsilon_2$, implying $d_t^{\text{QC}}(X, Z) \le \epsilon_1 + \epsilon_2$. \nonumber
\end{proof}

We conclude that the distance $d_t^{\text{QC}}(X, Y):\mathcal{M} \times \mathcal{M} \mapsto \mathbb{R}_{\ge 0}$ satisfies the properties of Theorem \ref{thm:pmet}.
\begin{enumerate}
	\item[P1:] Symmetry - by definition of the spectral quasi-conformal distance.
	\item[P2:] Triangle-inequality - by Proposition \ref{prop:tri}.
	\item[P3:] Identity - 
\begin{enumerate}
		\item - by the intrinsic equivalence of isometric manifolds. 
		\item - by Proposition \ref{prop:isom}.
\end{enumerate}
\end{enumerate}
$\blacksquare$

\ifdefined \TOG
\bibliographystyle{acmtog}
\else
\bibliographystyle{alpha}
\fi
\bibliography{Shape-Similarity-Correspondence}

\ifdefined \TOG
\received{RRR}{AAA}
\fi

\end{document}